\setlist[itemize]{leftmargin=1.5pc}
\theoremstyle{definition}
\newtheorem{assumption}{Assumption}
\newtheorem{remark}{Remark}
\newtheorem{definition}{Definition}
\newtheorem{theorem}{Theorem}
\newtheorem{proposition}{Proposition}
\newtheorem{lemma}{Lemma}
\newtheorem{claim}{Claim}
\newtheorem{example}{Example}
\colorlet{sgreen}{black!65!green}
\renewcommand{\P}{\mathbb{P}}
\newcommand{\thisHypName}{}
\newtheorem*{genericHyp}{\thisHypName}
\newcommand{\KLDiv}[2]{\mathcal{D}_{\text{kl}} \!\left(#1 | #2 \right)}
\newcommand{\Risk}{R}
\newcommand{\dist}{\mu}
\newcommand{\X}{\mathcal{X}}
\newcommand{\Y}{\mathcal{Y}}
\newcommand{\Hyp}{\mathcal{H}}
\newcommand{\E}{\mathcal{E}}
\newcommand{\real}{{\rm I\!R}}
\newcommand{\exf}[2]{\mathbb{E}_{#1}\left[#2\right]}
\newcommand{\expec}{\mathbb{E} }
\newcommand{\expecf}[1]{\mathbb{E}_{#1} \,}
\newcommand{\abs}[1]{\left|#1\right|}
\newcommand{\paren}[1]{\left(#1\right)}
\newcommand{\braces}[1]{\left\{#1\right\}}
\newcommand{\brackets}[1]{\left [ #1 \right ]}
\newcommand{\prob}{\mathbb{P}}
\newcommand{\Q}{Q}
\newcommand{\np}{{n_P}}
\newcommand{\nq}{{n_Q}}
\DeclareSymbolFont{bbold}{U}{bbold}{m}{n}
\DeclareSymbolFontAlphabet{\mathbbold}{bbold}
\DeclareMathOperator*{\argmax}{argmax}
\DeclareMathOperator*{\argmin}{argmin}
\DeclareMathOperator*{\Prob}{\prob}
\newcommand{\hstar}{h^{\!*}}
\def\defeq{\doteq}
\def\emRisk{\hat{\Risk}}
\newcommand{\indic}[1]{\mathbbm{1}\paren{#1}}
\newcommand{\classComp}[1]{\mathcal{C}(#1)}
\newcommand{\genGap}[3]{A(#1, #2, #3)}
\newcommand{\sgn}[1]{\mbox{sign}\left(#1\right )}
\newcommand{\size}[1]{\lvert #1 \rvert}
\def\RLHyp{{\Tilde{\Hyp}}}
\def\den{\mathrm{f}}
\def\functionClass{\mathcal{F}}
\def\event{E}
\newcommand{\blue}[1]{\textcolor{blue}{#1}}
\def\hhat{\hat{h}}
\def\hHyp{\hat{\Hyp}}
\def\Li{L_{in}}
\def\Lo{L_{out}}
\def\Ri{R_{in}}
\def\Ro{R_{out}}
\def\lv{\frac{1}{256} \cdot \paren{\frac{1}{\np}}^{1/\rho_a}}
\def\hdist{\delta}
\def\el{\epsilon_L}
\def\elin{\epsilon_{\Li}}
\def\er{\epsilon_R}
\def\erin{\epsilon_{\Ri}}
\newenvironment{proofof}[1]{\noindent\textbf{Proof of {#1}}
 \hspace*{1em} }{$\blacksquare$ \bigskip }
\newsavebox{\savepar}
\title{Limits of Model Selection under Transfer Learning}
\author{
 Steve Hanneke \\
  Purdue University\\
  \texttt{steve.hanneke@gmail.com} \\
   \And
 Samory Kpotufe \\
  Columbia University \\
  \texttt{samory@columbia.edu} \\
  \And 
  Yasaman Mahdaviyeh\\
  Columbia University\\
  \texttt{yasamanmdv@cs.columbia.edu}\\
}
\begin{document}

\maketitle

\begin{abstract}%
Theoretical studies on \emph{transfer learning} (or \emph{domain adaptation}) have so far focused on situations with a known hypothesis class or \emph{model}; however in practice, some amount of model selection is usually involved, often appearing under the umbrella term of \emph{hyperparameter-tuning}: for example, one may think of the problem of \emph{tuning} for the right neural network architecture towards a target task, while leveraging data from a related \emph{source} task.  

In addition to the usual tradeoffs on approximation vs.\ estimation errors involved in model selection, this problem brings in a new complexity term, namely, the \emph{transfer distance} between source and target distributions, which is known to vary with the choice of hypothesis class. 

We present a first study of this problem, focused on classification. Remarkably, the analysis reveals that \emph{adaptive rates}, i.e., those achievable with no distributional information, can be arbitrarily slower than \emph{oracle rates}, i.e., when given knowledge on \emph{distances}.

\end{abstract}

\begin{keywords}{Transfer Learning, Domain adaptation, Model Selection, Lepski's Method.}
\end{keywords}

\section{Introduction}
Domain adaptation or Transfer learning concern settings where data from a \emph{source} distribution $P$ is to be leveraged to improve learning on a target distribution $Q$ where perhaps less data is available. While this problem has received much renewed attention of late, theoretical studies have focused on settings where a suitable hypothesis (or model) class $\Hyp$ is already known. However, this is rarely the case in practice where some amount of model selection is required, as often referred to as \emph{hyperparameter tuning}: one wishes, e.g., to tune for the right architecture with neural networks, a suitable polynomial degree in regression, or an appropriate kernel for kernel machines, all while leveraging both source and target data. {Importantly, as target data is often limited in these settings, it ideally should not be used alone to drive model selection, even though it is a priori unclear how to leverage the source data.} 

We present a first study of this problem, in the context of classification, under a simple formalism where we assume a hierarchy of models $\braces{\Hyp_i}, \Hyp_i \subset \Hyp_{i+1}$, each with known complexity $d_i$ (here VC-dimension); the problem is then to try and understand the achievable target $Q$-risk in modern transfer settings with access to both source and target data, as opposed to just target data. { We note however that our analysis allows for no target data, as in fact we have no restriction on data sizes from either source nor target.} 

To establish a baseline performance, assume the hierarchy $\braces{\Hyp_i}$ admits a global $Q$-risk minimizer $\hstar_{Q}$ from an unknown model $\Hyp_{i^*_Q} \in \braces{\Hyp_i}$. Then it is known that, using $n_Q$ data from $Q$, an excess risk $\E_Q(\hat h) \doteq \E_Q(\hat h; \hstar_Q) \lesssim \sqrt{d_{i^*_Q}/n_Q}$ is achievable without prior knowledge of $\Hyp_{i^*_Q}$, e.g., via structural risk minimization (SRM), a.k.a., complexity regularization, which essentially tradeoff \emph{estimation error} $\sqrt{d_{i}/n_Q}$, and \emph{approximation error} $\min_{h\in \Hyp_i} \E(h, h^*_Q)$ over models $\braces{\Hyp_i}$. 

Now, model selection in a transfer scenario, i.e., given related source data from $P$, involves an additional tradeoff parameter: the \emph{distance} or information that $P$ yields on $Q$, which is now well understood to be tied to the choice of hypothesis class $\Hyp_i$. Early notions of distance $P\to \Q$, e.g., from seminal works of \cite{mansour2009domain, ben2010theory} already formalize the idea that the differences between $P$ and $Q$ are only relevant in regions of space in line with $\Hyp$, e.g., disagreement regions between given hypotheses in $\Hyp$. In other words, while a model choice $\Hyp$ out of the hierarchy $\braces{\Hyp_i}$ may balance estimation and approximation errors, it may fail to maximally leverage the data from $P$ if it induces a large \emph{distance} $P\to Q$. 

As the distances $P\to Q$ induced over models in $\braces{\Hyp_i}$ are a priori unknown (however formalized), our analysis especially distinguishes betwen \emph{adaptive} model selection rates---i.e., rates achievable from $P$ and $Q$ samples alone without distributional information---and usual minimax \emph{oracle} rates. Remarkably, unlike in usual model selection, these can be significantly different.

\paragraph{Main Results.} For a fixed class $\Hyp$, we adopt a recent notion of \emph{distance} $P\to Q$ from \citep{hanneke2019value} comprised of two components: (1) the excess risk $\E_Q(\hstar_P)$ of a risk minimizer $\hstar_P$ under $P$, and (2) a \emph{transfer-exponent} $\rho$ which essentially measures the effective sample size contributed by $P$ to the target problem $Q$. Thus suppose access to $n_P$ samples from $P$ and $n_Q$ samples from $Q$, the following upper-bound was shown to be achievable adaptively: 

\begin{align} \label{eq:transfer}
\E_Q(\hat h) \lesssim \min \braces{\paren{{d}/{n_P}}^{1/2\rho} + \E_Q(\hstar_P)\ ; \ \paren{{d}/{n_Q}}^{1/2}}, \text{ where } d \text{ is the VC dimension of } \Hyp. 
\end{align} 
For sanity check, note that \eqref{eq:transfer} is of order $(d/(n_P+n_Q))^{1/2}$ when $P=Q$, i.e., $\rho =1$, $\E_Q(\hstar_P)=0$. Also notice that the rate is faster with smaller $\rho$ and $\E_Q(\hstar_P)$. 

Now, if we knew the above rate to be tight in general, we then get a first sense of the best rates we might expect for any fixed model choice $\Hyp_i$ out of the hierarchy.

\noindent $\bullet$ \emph{Tightness of \eqref{eq:transfer}}. As a first basic result, we show that the above adaptive rate on a fixed choice $\Hyp$, admits matching lower-bounds over any parameter value (Theorem \ref{thm: basic lower bound}). This complements a lower-bound of \citep{hanneke2019value} which only holds for $\E_Q(h^*_P) = 0$. This is especially important in our setting in order to cover a rich variety of situations.  

\noindent $\bullet$ \emph{Adaptive Upper-Bounds and Speedups.} Now suppose that each $\Hyp_i$ in the hierarchy admits \emph{transfer distance} $(\rho_i, \E_Q(\hstar_{P, i}))$, a priori unknown. Together with known class complexity $d_i$, and sample sizes $n_p$, $n_Q$, these distances induce subtle tradeoffs on model choices $\Hyp_i$ for the $Q$ task. 

--- First, we verify through some technical examples, namely basic neural-networks, that indeed some rich set of tradeoffs are captured through the above parametrization. That is, rich combinations of $(\rho_i, \E_Q(\hstar_{P, i}))$ emerge from the interaction between $(P, Q)$ and nested network architectures. 

--- Having established the tightness of the above equation \ref{eq:transfer}, and given the baseline of model selection under target, we can show (see Lemma \ref{lem: single level bounds.}) that selecting any fixed $\Hyp_i$ would yield an adaptive upper-bound of 
\begin{align} 
\E_Q(\hat h_i) \lesssim \phi(i), \text{ where } 
\phi(i)\approx \min \braces{\paren{{d_i}/{n_P}}^{1/2{\rho_i}} + \E_Q(\hstar_{P, i})\ ; \ \paren{{d_{i^*_Q}}/{n_Q}}^{1/2}}.
\end{align}

Unfortunately, as we discussed in the next bullet point, no algorithm exist that can minimize $\phi(i)$ in general and achieve optimal tradeoff on \emph{distance}. Instead, we establish the following adaptive guarantee (see Theorem \ref{thm: upper bound}). Suppose that $\braces{\Hyp_i}$ admits a global $P$-risk minimizer $\hstar_P$ at unknown level $i^*_P$; then there exists a procedure $\hat h$ achieving  
\begin{align} 
\E_Q(\hat h) \lesssim \phi(i^*_P) \text{ from samples alone}. 
\end{align}
In other words, the procedure automatically favors model selection under source $P$---at least commensurate with the unknown model $\Hyp_{i^*_P}$---if $P$ is thus informative on $Q$, and falls back on leveraging target data otherwise, all without prior knowledge of distributional parameters. 


\emph{We emphasize that in contrast, popular \emph{SRM} approaches yield no clear such guarantee:} suppose $n_Q =0$, SRM can only guarantee low $P$-risk, but no specific choice of model class.  

\noindent $\bullet$ \emph{Oracle Rates are Unachievable.} With knowledge of distance parameters $\braces{(\rho_i, \E_Q(\hstar_{P, i}))}$ (or at least of the ranking they induce on $\braces{\phi(i)}$), an \emph{oracle} procedure can achieve the rate $\min_i \phi(i)$, which can be arbitrarily faster than $\phi(i^*_P)$. 

Interestingly, as we show in Theorem \ref{thm: simple rho lower bound }, no \emph{adaptive} procedure, i.e., without such domain knowledge, can achieve a bound better than $\phi(i^*_P)$, without further structural conditions on the hierarchy $\braces{\Hyp_i}$, even in situations where $\min_i \phi(i)\ll \phi(i^*_P)$. This result holds even when the learner $\hat h$ is \emph{improper}, i.e., when $\hat h$ is allowed to return a hypothesis outside of $\cup_i \Hyp_i$. 

\paragraph{Related Work.}
Transfer Learning has received much attention over the years, with studies, both in the context of classification and regression, considering various notions of relations between $P$ and $Q$. Early works include \citep{ben2007analysis, crammer2008learning, cortes2008sample, ben2010theory, gretton2009covariate, mansour2009multiple} which already recognize the importance of the choice of hypothesis class in quantifying the information the source $P$ has on the target $Q$. These ealrier works have been refined over time, e.g., considering multiple source distributions rather than just one \citep{maurer2013sparse,pentina2014pac,yang2013theory, maurer2016benefit}. 

More recently, \emph{assymetric} notions of discrepancy have been proposed, noting that $P$ may have information on $Q$ but not the other way around \citep{kpotufe2018marginal, hanneke2019value, achille2019information, mousavi2020minimax}. We adopt such a notion in this work. 

Despite much of the attention on this problem, a single hypothesis class $\Hyp$ has been commonly assumed. However, a separate line of work on \emph{meta-learning} can be seen as somewhat related, as they often assume relationship between optimal predictors, often in the form of a shared \emph{low-dimensional substructure}; these settings may be recast as learning a target hypothesis class of lower complexity \citep{ando2005framework, muandet2013domain, mcnamara2017risk, arora2019theoretical, jalali2010dirty,lounici2011oracle, negahban5773043, du2020few,tripuraneni2020theory}. This however does not embody the full richness of model selection. 

\paragraph{Paper Organization.} We start in Section \ref{sec:prelim} with basic definitions and setup. This is followed by an overview of results in Section \ref{sec:overview}. Much of the proofs are discussed in Section \ref{sec:analysis} with more technical results relegated to the Appendix.

\section{Preliminaries}\label{sec:prelim}
\subsection{Setup}
\paragraph{Basic Definitions.} 
Let $X, Y$ be jointly distributed according to some measure $\mu$ (later $P$ or $Q$), 
where $X$ is in some domain $\X$ and $Y\in \Y \doteq \braces{\pm 1}$. A \emph{hypothesis class} or \emph{model} is a set $\Hyp$ of functions $\X \mapsto \Y$. All these objects are assumed to be measurable, so that we may consider classification risks of the form 
$R(h) \doteq \expec [h(X) \neq Y]$, as measured under $\mu$.

\begin{definition} The {\bf excess risk} of a classifier, w.r.t. $\Hyp$, is defined as $\E(h) = R(h) - \inf_{h'\in \Hyp} R(h')$. 

Furthermore we use the notation $\E(h, h')\doteq R(h) - R(h') = \E(h) - \E(h')$. 
\end{definition} 

We adopt the following classical noise conditions (see e.g. \citep{MN:06,koltchinskii:06,bartlett:06b}). 

\begin{definition} \label{assmp: Bernstein noise condition}
Assume $R(h)$ is minimized at $\hstar\in \Hyp$. We say that $\Hyp$ satisfies a {\bf Bernstein Class Condition} (BCC), as measured under $\mu$, with parameters $(C_\beta, \beta$),$C_\beta > 0$ and $\beta \in [0, 1]$, if $\forall h \in \Hyp$
\begin{equation}
    \Prob (h \neq \hstar) \le C_\beta\cdot \E (h)^{\beta}. 
\end{equation}
\end{definition}
Note that the condition trivially holds for $\beta = 0$, $C_\beta = 1$.
The condition captures the hardness of the learning problem: when $\beta =1$, which formalizes \emph{low noise} regimes, we expect fast rates of the form $n^{-1}$, in terms of sample size $n$, while for $\beta = 0$, rates are of the more common form $n^{-1/2}$. 

{ When $\hstar$ is not unique, BCC remains well defined (i.e., the definition is invariant to the choice of $\hstar$), as it  imposes (when $\beta > 0$) that all $\hstar$'s differ on a set of measure $0$ under the data distribution.} 

\paragraph*{Transfer Setting.} We consider a \emph{source} and \emph{target} distributions $P$ and $Q$ on $(X, Y)$, where we let $\E_P, \E_Q$ denote excess-risks under $P$ and $Q$. We are interested in excess risk $\E_Q(\hat h)$ of classifiers trained jointly on $n_P$ i.i.d samples from $P$, and $n_Q$ i.i.d. samples from $Q$. Achievable such excess risks necessarily depend on the \emph{distance} $P\to Q$ appropriately formalized.  

We adopt some recent notion of \emph{distance} from \citep{hanneke2019value}; for ease of exposition, we make the following simplifying assumptions. 

\begin{assumption} \label{assump:hstarinclass}
We assume for any $\Hyp$ considered henceforth that $\E_P$ and $\E_Q$ are minimized in $\Hyp$. We let $\hstar_P$, $\hstar_Q$ denote any such respective risk minimizers. 
Furthermore, if multiple minimizers $\braces{\hstar_P}$ exist under $P$, we assume that one of them achieves $\sup_{\braces{\hstar_P}} \E_Q(\hstar_P)$, and denote it $\hstar_P$. 
\end{assumption}



The distance $P\to Q$ is then given by $\E_Q(\hstar_P)$, and the following quantity $\rho$: 

\begin{definition}\label{def: transfer exponent model class} We call 
  $0<\rho\leq \infty$ {\bf transfer exponent} from $P$ to $Q$ with respect to $\Hyp$ if there exists $C_{\rho} > 0$ such that for all $h \in \Hyp$, 
\begin{equation}
    C_{\rho} \cdot \E_P(h, \hstar_{P})^{1/\rho} \ge  \E_Q(h, \hstar_{P}).
\end{equation}

We say that $\rho$ is  {\bf minimal } when no $ 0 < \rho' < \rho$ is a transfer  exponent from $P$ to $Q$ w.r.t. $\Hyp$.
\end{definition}

Notice that the above parametrization holds trivially for $\rho = \infty$, $C_\rho = 1$. Larger values of the pair $(\rho, \E_Q(\hstar_P))$ denote higher discrepancy $P\to Q$. For intuition on $\rho$, consider the case $\hstar_P = \hstar_Q=\hstar$; then $\rho$ simply describes how well $P$ reveals the \emph{decision boundary} defined by $\hstar$, i.e., whether hypotheses $h$ with small $P$-excess risk also have small $Q$-excess risk. Various examples of the continuum $\rho \to \infty$ are given in \citep{hanneke2019value, hanneke2022no}. We build on the intuition therein to derive Examples \ref{ex:NNthresholds} and \ref{ex:NN-Relu} of Section \ref{sec:examples} below for our specific setting with a hierarchy of hypothesis classes. 

\paragraph{Model Selection Setting.} We consider a situation where the learner has access to a hierarchy $\braces{\Hyp_i}, \Hyp_i \subset \Hyp_{i+1}$ of hypothesis classes, where each $\Hyp_i$ has VC dimension $d_i$, $d_i \leq d_{i+1}$. We let $\hstar_{P,i}$, $\hstar_{Q, i}$ denote the $P$ and $Q$ risk minimizers over model $\Hyp_i$ (according to Assumption \ref{assump:hstarinclass}). 

\begin{assumption} 
We assume $\braces{\Hyp_i}$ admits global risk minimizers $\hstar_P$ and $\hstar_Q$ w.r.t $P$ and $Q$; let $i^*_P$, $i^*_Q$, unknown to the learner, denote the indices of the smallest classes containing an $\hstar_P$, resp. $\hstar_Q$. 
\end{assumption}

\begin{definition}[Noise and Transfer Parameters]\label{def:noiseandtransfer}
We let $(C_{\beta_{P, i}}, \beta_{P, i})$ and $(C_{\beta_{Q, i}}, \beta_{Q, i})$ denote BCC parameters for $\Hyp_i$ w.r.t. $P$ and $Q$. For simplicity, we let $ \beta_P \defeq \beta_{P, i^*_P}$ and $\beta_Q \defeq { \inf_{i \geq i^*_Q}\beta_{Q, i}}$. 

Finally, we let $(C_{\rho_i}, \rho_i)$ denote transfer-exponents from $P$ to $Q$ under class $\Hyp_i$. 
\end{definition}

\begin{remark}
It remains unclear from our analysis whether and when we could achieve an adaptive model-transfer rate in terms of $\beta_{Q, i^*_Q}$ rather than $\beta_Q$ as defined above. The reason for our definition becomes apparent in the proof of \Cref{lem: single level bounds.} when considering whether to bias towards $P$ or $Q$ in model selection while $i^*_Q$ is unknown. 

The above definition of $\beta_Q$ however remains general as it admits the most common noise conditions in the literature, e.g., $\beta_{Q, i} =0$ (leading to usual $\sqrt{n}$ convergence results), and Tsybakov and Massart's noise conditions whenever the Bayes classifier is in the class. Namely (Tsybakov's noise condition) suppose there exists $0 < \alpha \leq \infty$ and $C_\alpha > 0$ such that for all $ t> 0$
\begin{align}
    Q_X \{x: \abs{\expec_Q \brackets{Y \mid X=x}} \le t \}\le C_\alpha t^\alpha.
\end{align}

If $\hstar_Q$ is the Bayes classifier, then BCC holds under $Q$ for all levels $i \ge i^*_Q$ with $ \beta_{Q, i} = \frac{\alpha}{1 + \alpha}$ and  $C_{\beta_{Q, i}}$ being some function of $\alpha$ and $C_\alpha$. See Proposition 1 of \citet{tsybakov:04}.

\end{remark}

\begin{assumption}
We assume for simplicity that all $C_{\beta_{P, i}}, C_{\beta_{Q, i}}$ are upper-bounded by some $C_\beta$. 
\end{assumption}


\subsection{Examples and Intuition on Tradeoffs.} \label{sec:examples}
We start with the following remark. 

\begin{remark}[Implicit Structure on $\braces{(\rho_i, \E_Q(\hstar_{P, i}))}$] 
To get some intuition, let's consider a simpler situation where $\hstar_{P, i}$ is unique for each class $\Hyp_i$.  
It then follows by definition, and the fact that the classes are nested, that for $i> i^*_P$, we have that $\rho_i$ is also a transfer-exponent for $i^*_P$. 
Also, by Assumption \ref{assump:hstarinclass}, for $i > i^*_P$, $\hstar_{P, i} = \hstar_P$ so we have $\E_Q(\hstar_{P, i}) = \E_Q(\hstar_{P, i^*_P})$. 

In other words, \emph{model selection would not favor $\Hyp_i$ over $\Hyp_{i^*_P}$} if $i>i^*_P$. However, for $i<i^*_P$, the distance parameters $(\rho_i, \E_Q(\hstar_{P, i}))$ are unrestricted---i.e., either term may increase or decrease as $i$ increases to $i^*_P$---if we impose no further condition on the hierarchy $\braces{\Hyp_i}$, thus inducing subtle tradeoffs. Such unrestricted increase or dicrease in distance below $i^*_P$ is illustrated by the examples below and further by the lower-bound construction for Theorem \ref{thm: lower bound for adaptivity to erisk across levels}. 
\end{remark}

Note that, similarly, for $\mu$ denoting either $P$ or $Q$, the BCC parameters $\beta_{\mu, i}$'s are nondecreasing for $i \geq i^*$. Thus, following from the remark, suppose for instance that the \emph{distances} $(\rho_i, \E_Q(\hstar_{P, i}))$ were decreasing with $i = 1, 2, \ldots i^*_P$, either in the first or second terms. Then, while usual model selection (as in a non-transfer setting) would favor the smallest class with small error, now it could be that a larger class transfers better. On the flip side, we could have situations were all $\rho_i$'s increase, while $\E_Q(\hstar_{P, i})$'s decrease, leading to similarly complicated tradeoffs.  

 The examples below illustrate such richness of situations in the case of simple two-layer neural networks for $X\in \real$, where the nested classes $\Hyp_i\subset \Hyp_{i+1}$ correspond to increasing width. We emphasize that the main point of these examples is to illustrate the basic thesis that \emph{distance} between source $P$ and target $Q$ may change with given classes in the hierarchy, in particular for model classes that speak to contemporary interest. We will revisit some such examples in Section \ref{sec:mainupperbounds} when discussing achievable bounds.

\begin{example}[Two Layer Neural Nets with Threshold Activation]\label{ex:NNthresholds}
Define $\Hyp_i = \braces{h_\theta: \real \mapsto \pm 1}$, indexed over $\theta \defeq (i, a,r, w, b)$, for $a, w, b \in \real^i$ and $r \in \real$, and where $h_\theta$ is of the form 
\begin{equation} \label{eq: def two layer threshold}
     h_\theta (x) \doteq \sgn{\sum_{j=1}^{i} a_j \; \sgn{w_j x - b_j} + r}.
\end{equation}

\begin{proposition} \label{prop: threshold neural net}
  For every finite sequence $1 \le \rho_1 \le \dots \le \rho_L$, there exists source and target distributions $P$, $Q$ over $[0, 1] \times \{-1, +1\}$ 
such that $\forall 1 \le i \le L $, $\rho_i$ is the minimal transfer exponent from $P$ to $Q$ w.r.t. $\Hyp_i$, $i^*_P = i^*_Q = L$ and $\beta_{P, i} = \beta_{Q, i} = 1$. Furthermore, the sequence of values $\E_Q(\hstar_{P, i})$, $i = 1, 2, \ldots, L$, is strictly decreasing,  depends only on $L$, but not on $\braces{\rho_i}$; finally we have that $C_{\rho_i}$s are upper and lower bounded by functions that depend on $i$ and $L$ only, but not on the choice of $\braces{\rho_i}$. 
\end{proposition}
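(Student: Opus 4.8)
The plan is to realize the whole sequence with $L$ essentially independent ``gadgets,'' one per level $k=1,\dots,L$, living in disjoint subintervals $I_1,\dots,I_L$ of $[0,1]$, and to keep the \emph{masses} (which will govern $i^*_P,i^*_Q,\beta$ and $\E_Q(\hstar_{P,i})$) decoupled from the \emph{shapes} near the decision points (which will carry the prescribed $\rho_k$). Labels will be noise-free, so the Bayes classifier $f$ has $P$- and $Q$-risk $0$; we make $f$ a ``staircase'' that flips sign exactly once inside each $I_k$, at a point $t_k$, so $f\in\Hyp_L\setminus\Hyp_{L-1}$ and $i^*_P=i^*_Q=L$. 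Inside $I_k$ we put a coarse part: two atoms $a_k<t_k<b_k$ of mass $q_k/2$ each under \emph{both} $P$ and $Q$, on opposite sides of the flip (so $f(a_k)\ne f(b_k)$ and fitting both forces a sign change in $(a_k,b_k)$), with $q_k=c_L 2^{-k}$; and a tiny fine part on a window $[t_k-\eta_k,t_k+\eta_k]$ of total mass $\epsilon_k=c'_L 2^{-k}$ under both $P,Q$, but with $Q$-density constant and $P$-density $\propto|x-t_k|^{\rho_k-1}$ on the window (choose $\eta_k=\eta_k(\rho_k)$ so the window mass is exactly $\epsilon_k$). The gaps between the $I_k$ carry no mass, and $c_L,c'_L$ only normalize the total mass to $1$; note the masses $q_k,\epsilon_k$ depend on $L$ alone.

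Recall (Example~\ref{ex:NNthresholds}) that $\Hyp_i$ is exactly the set of $\{\pm1\}$-valued functions on $\real$ with at most $i$ sign changes, and every breakpoint/sign pattern is realizable in \eqref{eq: def two layer threshold}. Hence $h\in\Hyp_i$ can place a correctly oriented flip inside $(a_k,b_k)$ for at most $i$ gadgets (``handle'' them at cost $0$ on those atoms); any gadget it does not handle costs exactly $q_k/2+\epsilon_k/2$ under both $P$ and $Q$ (one atom plus one half-window). Since $q_k$ is strictly decreasing and the $\epsilon_k$ perturb the ordering negligibly, the $P$- and $Q$-optimal classifiers in $\Hyp_i$ coincide: $\hstar_{P,i}=\hstar_{Q,i}$ handles exactly gadgets $1,\dots,i$, with its flips pinned to $t_1,\dots,t_i$ (the symmetric window densities make any other position strictly worse), and is flat on $I_{i+1}\cup\cdots\cup I_L$. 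Therefore $\E_Q(\hstar_{P,i})=R_Q(\hstar_{P,i})=\sum_{k>i}(q_k+\epsilon_k)/2$, which is strictly decreasing in $i$ and a function of $L$ only, since the $\rho_k$ entered only through the window shapes and widths, not the masses.

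To pin down the minimal transfer exponent of $\Hyp_i$ (anchor $\hstar_{P,i}$), fix $h\in\Hyp_i$ and consider two cases. If $h$ differs from $\hstar_{P,i}$ only by displacing the $i$ flips within gadgets it already handles, then on gadget $k\le i$ a displacement into the window contributes $(\epsilon_k/2)v_k$ to $\E_Q(h,\hstar_{P,i})$ and $(\epsilon_k/2)v_k^{\rho_k}$ to $\E_P(h,\hstar_{P,i})$, with $v_k\in[0,1]$ (displacement over window half-width, capped at $1$). Because $\rho_k\le\rho_i$ and $v_k\le1$ we get $\sum_k(\epsilon_k/2)v_k^{\rho_i}\le\E_P(h,\hstar_{P,i})$, and Hölder's inequality gives $\E_Q(h,\hstar_{P,i})\le\big(\sum_{k\le i}\epsilon_k/2\big)^{1-1/\rho_i}\E_P(h,\hstar_{P,i})^{1/\rho_i}$. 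Otherwise $h$ disagrees with $\hstar_{P,i}$ on some atom; a short case analysis using the global alternation of functions in $\Hyp_i$ and the \emph{dyadic} masses $q_k$ shows $\E_P(h,\hstar_{P,i})$ is then at least a constant $\gamma_L>0$ depending only on $L$ (distinct sub-sums of the $q_k/2$ differ by at least $c_L 2^{-L-1}$), so the transfer inequality holds with a constant depending only on $i,L$ since $\E_Q\le1$. Conversely, taking $h$ equal to $\hstar_{P,i}$ but with the single flip in $I_i$ displaced by $v_i$ gives $\E_Q(h,\hstar_{P,i})/\E_P(h,\hstar_{P,i})^{1/\rho_i}=(\epsilon_i/2)^{1-1/\rho_i}$ for every $v_i$; this shows the best constant for $\Hyp_i$ lies between $\epsilon_i/2$ and a quantity depending only on $i,L$, and that no $\rho'<\rho_i$ works, since the same family gives $\E_Q/\E_P^{1/\rho'}=(\epsilon_i/2)^{1-1/\rho'}v_i^{\,1-\rho_i/\rho'}\to\infty$ as $v_i\to0$. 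Thus $\rho_i$ is the minimal transfer exponent w.r.t.\ $\Hyp_i$.

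Finally, BCC with $\beta_{P,i}=\beta_{Q,i}=1$ holds: where $\hstar_{\mu,i}$ matches $f$ (all handled gadgets, both windows, gaps) any disagreement with $\hstar_{\mu,i}$ is a disagreement with $f$, hence charged in full to the excess risk; on an unhandled gadget $\hstar_{\mu,i}$ is the strictly best constant and, because the $q_k$ decay geometrically, any coarse deviation raises the risk by a fixed fraction of its probability mass; together this yields $\Prob_\mu(h\ne\hstar_{\mu,i})\le C_\beta\,\E_\mu(h)$ with $C_\beta=C_\beta(L)$. The delicate part of the whole construction — and the reason for the dyadic $q_k$, the separation of ``mass'' (fixed in $L$) from ``shape'' (carrying $\rho_k$) in the windows, and the empty gaps — is exactly this simultaneous control: making $\hstar_{P,i}=\hstar_{Q,i}$, keeping $\E_Q(\hstar_{P,i})$ and the bounds on $C_{\rho_i}$ functions of $i,L$ alone, and still verifying $\beta=1$; the transfer-exponent identity itself is then a routine single-gadget computation.
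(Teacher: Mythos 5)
Your construction is correct and takes a genuinely different route from the paper's. The paper builds a single continuous source density on all of $[0,1]$, with $\den_P(x)\propto \rho_m 2^{-2m\rho_m}|x-v_m|^{\rho_m-1}$ near each breakpoint $v_m$ and $Q$ uniform; the labels are shared, so $\hstar_{P,i}=\hstar_{Q,i}$ for free, but the region masses under $P$ are then entangled with the $\rho_m$'s, and the argument that $C_{\rho_i}$ is bounded independently of $\{\rho_j\}$ requires a fairly delicate bookkeeping of the $2^{-2m\rho_m}$ factors. Your ``gadget'' construction instead decouples \emph{masses} (dyadic $q_k,\epsilon_k$ carried by atoms and window totals, identical under $P$ and $Q$) from \emph{shapes} (the $|x-t_k|^{\rho_k-1}$ vs.\ uniform window densities, which alone carry $\rho_k$). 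This decoupling buys two things: (a) the identity $\E_Q(\hstar_{P,i})=\tfrac12\sum_{k>i}(q_k+\epsilon_k)$ is immediate and, with dyadic masses, \emph{strictly} decreasing in $i$ — note that with $Q$ uniform and alternating signs the paper's quantity is $\tfrac{1}{L+1}\lceil(L-i)/2\rceil$, which is not strictly decreasing for $L\ge 3$, so your version actually repairs that clause; and (b) the transfer-exponent bound reduces to a per-gadget displacement computation plus one H\"older step, and the $\rho$-free bounds on $C_{\rho_i}$ (between $\epsilon_i/2$ and $\gamma_L^{-1}$, say) fall out directly since $(\cdot)^{1-1/\rho_i}$ is monotone in $\rho_i$ and sandwiched by the $\rho$-free endpoints.

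Two small points you gloss over that a reader would want pinned down. First, the ``short case analysis'' for $h$ that mislabels an atom: the lower bound $\E_P(h,\hstar_{P,i})\ge\gamma_L$ requires the window masses $\epsilon_k$ to be small enough that they cannot offset the atom gap, e.g.\ $\sum_k\epsilon_k<\tfrac12\min_{j<L}(q_j-q_{j+1})=c_L 2^{-L-1}$ suffices; you should fix $c'_L$ accordingly (your ``tiny'' is the right instinct but should be made quantitative, since both $\E_Q(\hstar_{P,i})$ being $\rho$-free and the BCC constant hinge on it). Second, ``any unhandled gadget costs exactly $q_k/2+\epsilon_k/2$'' is only exact for a zero-flip (constant) $h$ on that gadget; a wrong-oriented or even-flip $h$ can cost up to $q_k+\epsilon_k$. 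This does not hurt you — it only makes $\E_P$ larger in the atom-disagreement case, and $\hstar_{P,i}$ is indeed constant there — but the phrasing should be ``at least'' for general $h$ and ``exactly'' only for $\hstar_{P,i}$. With these cleanups your argument is complete and, I would say, cleaner than the paper's.
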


\emph{In particular, as we may have $\rho_i$'s increasing while $\E(\hstar_{P, i}$ decrease, we see that nontrivial tradeoffs may indeed occur in practice.} 
\end{example}

\begin{example}[Two Layer (Residual) Neural Net with Relu Activation]\label{ex:NN-Relu}
Let $\RLHyp_i \doteq \braces{h_\theta: \real \to \pm 1}$ indexed over $\theta \defeq (i, a, r,   w, b,  \alpha)$, for $a, w, b \in \real^i$ and $r, \alpha \in \real$, and where $h_\theta$ is of the form 
\begin{equation} \label{eq: def two layer relu}
    h_\theta(x) = \sgn{\left(\sum_{j=1}^i
        a_j [w_jx +b_j]_+ 
    \right)  + \alpha x + r}, \text{ using the notation } [\cdot]_+ \defeq \max ({0, \cdot}).
\end{equation}

 \end{example} 
 
Next proposition uses results of \cite{Aliprantis2006} to connect ReLu residual neural nets to threshold neural nets in one dimension.

\begin{proposition} \label{prop: relu neural net}
    Let $\RLHyp_i$ be the class of Relu neural nets of Example \ref{ex:NN-Relu}, and let $\Hyp_i$ be the class of neural nets from Example \ref{ex:NNthresholds}.  We have $\RLHyp_i = \Hyp_{i+ 1}$, and consequently, \Cref{prop: threshold neural net} still holds. 
\end{proposition}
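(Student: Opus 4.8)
The plan is to prove the set identity $\RLHyp_i=\Hyp_{i+1}$ by peeling off the outer $\sgn{\cdot}$ and comparing the two families of real-valued functions it is applied to; the concluding clause is then just bookkeeping. Indeed $\RLHyp_i=\Hyp_{i+1}$ says that $\{\RLHyp_i\}_{i\ge1}$ is the hierarchy $\{\Hyp_{j}\}_{j\ge2}$ after the index shift $j=i+1$, so the $P,Q$ produced by \Cref{prop: threshold neural net} for a length-$(L{+}1)$ exponent sequence (with the first entry chosen freely, $\le\rho_1$) witnesses, for $\{\RLHyp_i\}_{1\le i\le L}$, all the asserted properties at once: the minimal transfer exponents $\rho_i$ w.r.t.\ $\RLHyp_i$, the bounds on $C_{\rho_i}$, the strictly decreasing values $\E_Q(\hstar_{P,i})$ depending only on $L$, $\beta_{P,i}=\beta_{Q,i}=1$, $i^*_P=i^*_Q=L$, and the VC dimensions. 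One caveat is used throughout: I compare classifiers as elements of $\{\pm1\}^{\real}$ only up to their values on a finite set of points, which is harmless because the distributions built in \Cref{prop: threshold neural net} are atomless.

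First I would record two representation facts. (i) By the description of continuous piecewise-linear (CPWL) functions through rectified units (\cite{Aliprantis2006}), the map $\theta\mapsto g_\theta$, $g_\theta(x)=\sum_{j=1}^i a_j[w_jx+b_j]_+ +\alpha x+r$, has image exactly the CPWL functions $\real\to\real$ with at most $i$ kinks: each summand contributes at most one kink (and none when $w_j=0$), while conversely a CPWL function with kinks $t_1<\dots<t_k$, $k\le i$, and successive slopes $s_0,\dots,s_k$ equals its leftmost affine piece plus $\sum_{m=1}^k(s_m-s_{m-1})[x-t_m]_+$, the affine piece being absorbed into $\alpha x+r$ and the unused units set to $a_j=0$. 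Hence $\RLHyp_i=\{\,\sgn{g}\;:\;g\ \text{CPWL with}\ \le i\ \text{kinks}\,\}$. (ii) Dropping $w_j=0$ terms into $r$ and rescaling each $w_j$ to $\pm1$, a telescoping identity shows that $\sum_{j=1}^{i+1}a_j\sgn{w_jx-b_j}+r$ ranges over all step (piecewise-constant) functions with at most $i+1$ jump points and arbitrary real values on the pieces; hence $\Hyp_{i+1}=\{\,\sgn{f}\;:\;f\ \text{step with}\ \le i{+}1\ \text{jumps}\,\}$.

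Next I would show both classes equal $\mathcal{C}_{i+1}$, the set of classifiers $\real\to\{\pm1\}$ that change value at most $i+1$ times. The inclusions into $\mathcal{C}_{i+1}$ are immediate: a step function with $\le i+1$ jumps has $\le i+1$ sign changes, and a CPWL $g$ with $\le i$ kinks has $\le i+1$ maximal linear pieces, so---since $\sgn{g}$ can change only inside a piece on which $g$ vanishes---$\sgn{g}$ has $\le i+1$ sign changes (pieces on which $g\equiv0$, and the value of $\sgn{\cdot}$ at zeros of $g$, affect only finitely many points). For the reverse inclusions, fix $c\in\mathcal{C}_{i+1}$ with sign-change points $t_1<\dots<t_k$, $k\le i+1$: then $c\in\Hyp_{i+1}$ on taking $f=c$, a step function with $k$ jumps; and $c\in\RLHyp_i$ on taking for $g$ a ``zigzag'' CPWL function that crosses $0$ transversally at each $t_m$ with the orientation prescribed by $c$ and whose turning points lie strictly between consecutive $t_m$'s---this costs exactly $k-1\le i$ kinks, and its two outermost rays automatically carry the correct sign, so no further kink is spent. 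Combining gives $\RLHyp_i=\mathcal{C}_{i+1}=\Hyp_{i+1}$, and the concluding clause follows as above.

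I expect the main obstacle to be the boundary bookkeeping that makes the kink/jump counts tight and pins down the identification: the degenerate cases $w_j=0$, coincident breakpoints $t_m=t_{m'}$, and linear pieces of $g$ vanishing identically, together with the value of $\sgn{\cdot}$ at the finitely many breakpoints---continuity forces a $\RLHyp_i$-classifier to take the tie-break value there, which is exactly why the identity is an equality of classifiers off a finite set rather than literally. The zigzag construction must also be arranged so as never to waste a kink, and this is precisely where the skip connection $\alpha x$ earns its keep: it supplies the single extra affine degree of freedom that turns ``$i$ ReLU kinks'' into ``$i+1$ threshold jumps'', i.e.\ that yields $\Hyp_{i+1}$ rather than $\Hyp_i$. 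Beyond these points the argument is entirely carried by the two representation facts.
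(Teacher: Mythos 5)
Your proof is correct and takes essentially the same route as the paper: both factor through the class of classifiers on $\real$ with at most $i+1$ decision boundaries (your $\mathcal{C}_{i+1}$, the paper's $\Hyp^B_{i+1}$), identify $\Hyp_{i+1}$ with that class, and identify $\RLHyp_i$ with signs of CPWL functions having $\le i+1$ linear pieces via the Aliprantis decomposition. The only cosmetic differences are that you argue the $\Hyp_{i+1}=\mathcal{C}_{i+1}$ step directly through step functions rather than citing the paper's Claim on realizing alternating sign patterns, and you spell out the index shift behind the ``consequently'' clause, which the paper leaves implicit.
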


The proofs of the propositions above are given in \Cref{sec: main example proofs}. In particular, the proof of Proposition \ref{prop: threshold neural net} illustrates how the behavior of $P$ and $Q$ around decision boundaries (defined by optimal classifiers at each level $\Hyp_i$) affects  model-transferability; as such, even though for simplicity we focus on $X\in \real$ for these examples, the same insights extend to $\real^d$.

\section{Overview of results}\label{sec:overview}

For intuition behind the analysis, we start with trying to understand \emph{adaptive} transfer rates at a single level $\Hyp_i$ of the hierarchy. A result of \citep{hanneke2019value} (see Proposition 2 therein) offers a first glimpse. It states roughly that, for a fixed class $\Hyp$, there exists an adaptive $\hat h$ with access to $n_P$ samples from $P$ and $n_Q$ samples from $Q$, such that, w.h.p.
\begin{align} 
\E_Q(\hat h) \lesssim \min \left \{\left(\frac{d}{n_P} \right)^{\frac{1}{(2- \beta_P)\rho}} + \E_Q(\hstar_P), \left( \frac{d}{n_Q} \right)^{\frac{1}{2 - \beta_Q}}\right \}, \label{eq:transfer2}
\end{align} 
where $\beta_P, \beta_Q$ denote BCC parameters for $P$ and $Q$. While they show that this is tight (for all $\rho, \beta_P, \beta_Q$), their construction assumes $\E_Q(\hstar_P)=0$, which is too restrictive in our setting. 

We start our analysis by first showing that \eqref{eq:transfer2} is indeed tight in all parameters.  

\subsection{Lower Bound for a Fixed $\Hyp$}
We consider the following class of pairs of distributions $P, Q$ w.r.t. a fixed $\Hyp$. 

\begin{definition}[$\Xi$ class] Let $\Hyp$ denote a hypothesis class, and let $\beta_P, \beta_Q \in [0, 1), \rho>0, \alpha < 1$. 
We then define $\Xi = \Xi(\Hyp, \beta_P, \beta_Q, \rho, \alpha)$ as the set of pairs of distributions $(P, Q)$ satisfying the following conditions. (i) Assumption \ref{assump:hstarinclass} holds, (ii) both $P, Q$ satisfy a BCC with respective parameters $(1, \beta_P)$, $(1, \beta_Q)$ (iii) $\rho$ is a transfer exponent $P$ to $Q$ w.r.t. $\Hyp$, with $C_\rho \le  1$, and (iv) $\E_Q(\hstar_P) \le \alpha$. 
    
\end{definition}
\begin{theorem}\label{thm: basic lower bound}
    Fix some hypothesis class $\Hyp$ with VC dimension $d \ge 9$. Pick any $\rho \ge 1$, and $ \beta_P, \beta_Q \in [0,1)$ and let $\Xi$ denote the corresponding class. For every $n_P, n_Q$ where $\max  \{ n_P, n_Q \} > d$, let $\hat{h}$ be any classifier that has access to $n_P$ and $n_Q$ source and target samples. Then, there exists a universal constant $ c>0$ s.t. 
\begin{equation}
    \sup_{(P, Q) \in \Xi} \ 
        \Prob_{P^{n_P} \times Q^{n_Q}} 
            \left[ \E_Q(\hat{h}) \ge c\cdot
        \min \left \{\left(\frac{d}{n_P} \right)^{\frac{1}{(2- \beta_P)\rho}} + \alpha, \left( \frac{d}{n_Q} \right)^{\frac{1}{2 - \beta_Q}}\right \}\right]
        \ge \frac{3 - 2 \sqrt{2}}{8}.
\end{equation}
\end{theorem}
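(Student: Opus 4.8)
The plan is to prove the lower bound by a reduction to binary hypothesis testing, constructing for each target regime a small family of distribution pairs $(P,Q)$ that are statistically indistinguishable from $n_P$ source and $n_Q$ target samples, yet whose optimal classifiers are far apart in $Q$-excess risk. First I would reduce to the case $d=9$ (equivalently, work with a fixed set of $9$ points shattered by $\Hyp$ — or more precisely use $d-1$ ``free'' points plus one anchor — and place all mass there, so that the construction only uses that VC dimension is at least $9$; the generic-$d$ bound then follows by the usual ``$\lfloor d/\text{const}\rfloor$ parallel copies'' argument, or simply by noting the bound is monotone and the construction can be embedded). The statement pits two terms against each other via a $\min$, so the argument naturally splits: if $(d/n_Q)^{1/(2-\beta_Q)}$ is the smaller term, I want a pure target lower bound (Assouad/Le Cam over $Q$ alone, with $P$ carrying no extra information, e.g. $P=Q$ or $P$ degenerate), recovering the classical VC classification lower bound with Bernstein/Tsybakov noise exponent $\beta_Q$; if $(d/n_P)^{1/((2-\beta_P)\rho)}+\alpha$ is the smaller term, I want a source-driven lower bound where $P$ is informative only up to the transfer-exponent bottleneck.

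The heart of the construction is the second regime. I would set up a pair of distributions on a small discrete domain: an anchor region pinning down $\hstar_P$ on most of the space, plus a ``hard'' region of diameter governed by a parameter $\tau$ where $P$ puts only a tiny amount of mass (so that with $n_P$ samples one cannot resolve the sign of $\hstar$ there with constant probability — this forces $\P_P(h\neq\hstar_P)\asymp\E_P(h,\hstar_P)^{\beta_P}$ to be tuned so that $\E_P$-excess risk of order $(d/n_P)^{1/(2-\beta_P)}$ is unresolvable), while $Q$ puts enough mass there that a wrong sign costs $Q$-excess risk of order $(d/n_P)^{1/((2-\beta_P)\rho)}$. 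The transfer-exponent relation $C_\rho\E_P(h,\hstar_P)^{1/\rho}\ge\E_Q(h,\hstar_P)$ dictates exactly how the $Q$-mass in the hard region must scale relative to the $P$-mass (roughly, $Q$-mass $\asymp$ ($P$-mass)$^{1/\rho}$ up to the noise reweighting), and one checks $C_\rho\le 1$, the two BCC conditions with constant $1$, and Assumption~\ref{assump:hstarinclass} hold by direct computation. The additive $\alpha$ is incorporated by further reserving a separate region, disjoint from the $P/Q$-informative regions and invisible to both sample sets in the relevant sense, on which $\hstar_P$ is forced to be wrong under $Q$ by exactly $\alpha$ — i.e. $\hstar_P$ and $\hstar_Q$ genuinely differ there, and no amount of $P$ data helps, so $\E_Q(\hstar_P)=\alpha$ is unavoidable and adds to the bound; this is precisely the extension beyond the $\E_Q(\hstar_P)=0$ construction of \citep{hanneke2019value}. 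Then I would invoke a standard two-point (Le Cam) or $2^k$-point (Assouad) bound: the KL divergence between the two joint sample distributions $P^{n_P}\times Q^{n_Q}$ over the competing hypotheses is $O(1)$ by the choice of masses, so no estimator distinguishes them with probability exceeding a constant, giving the stated failure probability $\tfrac{3-2\sqrt2}{8}$ (this particular constant is what a careful two-hypothesis Le Cam computation with the $1/8$ and the $\tfrac{1}{2}(1-\sqrt{1-e^{-\mathrm{KL}}})$-type bound yields).

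I expect the main obstacle to be the simultaneous balancing act: the single family $(P,Q)$ must at once (a) satisfy BCC under $P$ with exponent $\beta_P$ and constant $1$, (b) satisfy BCC under $Q$ with exponent $\beta_Q$ and constant $1$, (c) realize $\rho$ as a valid transfer exponent with $C_\rho\le 1$ while the informative hard region is calibrated so the $P$-sample genuinely cannot resolve it, and (d) have $\E_Q(\hstar_P)$ exactly $\alpha$ — all with the mass parameters tuned so that both the $P$-indistinguishability KL bound and the $Q$-indistinguishability KL bound are $O(1)$ at the claimed rates, and so that the two ``hard regions'' and the ``$\alpha$ region'' do not interfere (e.g. the sign choices must be independent, and choosing the worse of the two constructions realizes the $\min$). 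Handling $\beta_P,\beta_Q$ strictly less than $1$ means the noise level in each region is bounded away from $0$, which interacts delicately with the mass calibration; the bookkeeping to verify all four conditions hold with the sharp constants is the tedious-but-essential core. Once the construction is in place, the information-theoretic conclusion and the passage from $d=9$ to general $d\ge 9$ are routine.
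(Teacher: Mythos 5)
Your general strategy---reduce to a shattered point set, build hard families satisfying the $\Xi$ membership conditions, and invoke packing/Fano-type information lower bounds---is in the right spirit. But there is a genuine conceptual gap in how you propose to ``add $\alpha$ to the bound.'' You say you would reserve a separate region, supported only under $Q$, where $\hstar_P$ is wrong by exactly $\alpha$, so that ``$\E_Q(\hstar_P)=\alpha$ is unavoidable and adds to the bound.'' This conflates the excess $Q$-risk of the fixed hypothesis $\hstar_P$ with the excess risk that an arbitrary learner must incur. The learner is not required to output $\hstar_P$: it can try to learn $\hstar_Q$'s behavior on that region from the $n_Q$ target samples, and if the region's $Q$-labels were deterministic and known (i.e.\ fixed across the hard family), the learner could simply hard-code them and pay nothing. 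To make the $\alpha$-budget cost anything, $\hstar_Q$'s behavior there must be unknown and unresolvable from $n_Q$ samples---and then the \emph{unavoidable} cost is capped by $\varepsilon_Q \doteq (d/n_Q)^{1/(2-\beta_Q)}$. So the correct contribution is $\min\{\alpha,\varepsilon_Q\}$, not $\alpha$; your claim of ``exactly $\alpha$'' would make the lower bound false whenever $\alpha \gg \varepsilon_Q$.

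The paper handles this cleanly and in a way that also avoids re-deriving the first term. It uses the identity
\[
\min\{\varepsilon_P+\alpha,\;\varepsilon_Q\}
\;\ge\;
\max\bigl\{\min\{\varepsilon_P,\varepsilon_Q\},\;\min\{\alpha,\varepsilon_Q\}\bigr\},
\]
invokes Theorem~1 of \citep{hanneke2019value} verbatim for the first branch (so no new construction is needed there), and builds one new family only for $\epsilon_2 = c_2\min\{\alpha,\varepsilon_Q\}$. In that new family the source distribution $P$ is \emph{identical} across all $\sigma$ in the packing, making $n_P$ source samples literally useless; the target marginal and (noisy, $\sigma$-indexed) conditional distributions are calibrated so that $\E_{Q_{\sigma'}}(\hstar_P)\le\epsilon_2/2\le\alpha$ holds automatically and the pairwise KL over $n_Q$ samples is $O(d)$. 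The $\le\alpha$ budget constraint thus emerges from the same scale $\epsilon_2$ that controls $Q$-indistinguishability, rather than being imposed as a separate deterministic region. If you want to salvage your design, that is the move you need: fix $\hstar_P$, randomize the target so that $\hstar_Q$ varies at scale $\epsilon_2$, and verify $\E_Q(\hstar_P)\le\alpha$ and the transfer-exponent condition from the same mass and noise calibration. A secondary nitpick: the constant $(3-2\sqrt2)/8$ in the paper comes from the multi-hypothesis Fano-type bound (Theorem~2.5 of \citep{tsybakov2009introduction}) applied to a $(d-1)/8$-separated packing of size $2^{(d-1)/8}$, not from a two-point Le Cam argument.
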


The result extends a lower-bound construction of \citep{hanneke2019value} by \emph{randomizing} the relation between a fixed $\hstar_P$ and candidates $\hstar_Q$'s. The proof is given in \Cref{app sec: lower bound proofs}.

\subsection{Upper Bound}\label{sec:mainupperbounds}

Having established the tightness of \eqref{eq:transfer2} over the range of parameters (except for $0< \rho < 1$), we now have a sense of the rates achievable if we fixed a level $\Hyp_i$. However, as we already know that, ignoring samples from source $P$, a baseline rate of $(d_{i^*_Q}/n_Q)^{({1}/{2-\beta_Q})}$ is attainable (up to log factors) by standard model selection techniques \citep[][Theorem 7]{koltchinskii:06}, 
we will aim for a transfer rate $\phi^\sharp(i)$, defined below, that incorporates this term at level $\Hyp_i$. 

We fix some $ \delta > 0$, and sequence of $\delta_i >0$ satisfying $\sum_i \delta_i \leq \delta$.
For instance, 
$\delta_i = \frac{1}{i(i+1)} \delta$. 




\begin{definition} Define the following quantity, for some $C_0$ independent of all model parameters:
    \begin{equation}
    \phi^\sharp (i) \defeq
    \min \left\{ 
        \E_Q(\hstar_{P, i}) +   C_0 \cdot C_{\rho_i} \left(\frac{d_i \; \log(n_P/\delta_i) }{n_P}\right)^{\frac{1}{(2 - \beta_{P, i}) \rho_{i}}}, 
        C_0 \left(\frac{d_{i^*_Q} \; \log(n_Q/\delta_{i^*_Q})}{n_Q}\right)^{\frac{1}{(2 - \beta_Q)}}
    \right\}. 
\end{equation}
\end{definition}

Since $C_{\rho_i}, \rho_i, \beta_{P, i}$ are not uniquely defined, without loss of generality we may take them to be the valid values which minimize $\phi^{\sharp}(i)$.
We have the following adaptive upper-bound.  

\begin{theorem}[Adaptive Upper-bound]\label{thm: upper bound}
There exists a proper learner $\hat{h}$, with no prior distributional knowledge beyond $\braces{d_i}$, which, with probability at least $1 - 3 \delta $, for a suitable value of $C_0$ achieves: 
\begin{equation}
     \E_Q(\hat{h}) \le \phi^\sharp (i^*_P).
\end{equation}
\end{theorem}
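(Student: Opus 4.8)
The plan is to build $\hat h$ as a variant of Lepski's method / a validation-based aggregation run over the hierarchy, with the key twist that the ordering used for the Lepski test is induced by the \emph{source} problem, not the target. First I would, for each level $i$, produce two candidate hypotheses: $\hat h_i^P$, an ERM (or SRM-type estimator) over $\Hyp_i$ trained on the $n_P$ source samples, and $\hat h_i^Q$, the standard model-selection estimator over $\braces{\Hyp_j}$ trained on the $n_Q$ target samples (which already achieves $(d_{i^*_Q}/n_Q)^{1/(2-\beta_Q)}$ up to logs, by \citep{koltchinskii:06}). The single-level guarantee of \Cref{lem: single level bounds.} tells us that for the ``right'' comparison $\hat h_i$ built from these two, $\E_Q(\hat h_i)\lesssim \phi^\sharp(i)$ holds with probability $1-\delta_i$; the issue is that this bound is in terms of distributional parameters the learner cannot see, so we cannot directly pick $\arg\min_i \phi^\sharp(i)$.

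The core step is therefore a Lepski-type selection rule. Since $\braces{\Hyp_i}$ are nested and (by Assumption on global $P$-minimizers) $\hstar_{P,i}=\hstar_P$ for all $i\ge i^*_P$, the \emph{source} excess risks $\E_P(\hstar_{P,i})$ are nonincreasing in $i$ and vanish for $i\ge i^*_P$; this gives a monotone structure on which to run a Lepski test \emph{using only source data}. Concretely, I would estimate, for each pair $i<j$, whether $\hat h_i^P$ and $\hat h_j^P$ are statistically distinguishable on the source sample at the confidence radius $r_P(j)\approx C_{\rho}\text{-free part of }(d_j\log(n_P/\delta_j)/n_P)^{1/((2-\beta_{P,j})\rho_j)}$-type quantity — or, to avoid needing $\rho_j$, a radius calibrated directly in $P$-excess-risk units, $(d_j\log(n_P/\delta_j)/n_P)^{1/(2-\beta_{P,j})}$, then pushed through the transfer exponent only in the analysis. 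Let $\hat i$ be the smallest level passing all such tests against larger levels; output $\hat h=\hat h_{\hat i}^P$, unless the target-only estimator is provably better, in which case fall back to it (the $\min$ in $\phi^\sharp$). The standard Lepski argument shows: (a) on a good event, $i^*_P$ passes all tests, so $\hat i\le i^*_P$; and (b) any accepted $\hat i$ has $Q$-excess risk controlled by the confidence radius at the next level up that it was tested against, which at level $i^*_P$ collapses to exactly the estimation term in $\phi^\sharp(i^*_P)$ — this is where we invoke the transfer exponent $\rho_{i^*_P}$ together with the BCC for $P$ to convert the $P$-side deviation bound into the $(d_{i^*_P}\log(n_P/\delta_{i^*_P})/n_P)^{1/((2-\beta_{P,i^*_P})\rho_{i^*_P})}+\E_Q(\hstar_P)$ bound. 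The union bound over the $\delta_i$'s, with $\sum_i\delta_i\le\delta$, plus the two auxiliary $\delta$-events (one for the target estimator, one for the uniform deviation bounds on the source side), yields the claimed $1-3\delta$.

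I expect the main obstacle to be item (b): showing that the Lepski test, which is necessarily calibrated in \emph{source} terms (we have no target information), nonetheless produces a hypothesis with the right \emph{target} guarantee — in particular that stopping at $\hat i\le i^*_P$ does not cost us, since for $i\ge i^*_P$ nothing is lost (same minimizer, and $\rho_{i^*_P}$ remains a valid transfer exponent for all larger levels by the nestedness remark), while for $\hat i<i^*_P$ the acceptance of $i^*_P$ forces $\hat h_{\hat i}^P$ to be within the confidence radius of $\hat h_{i^*_P}^P$ in $P$-excess risk, hence — through $\rho_{\hat i}$ or, more carefully, through a telescoping over intermediate levels — within the corresponding radius in $Q$-excess risk. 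Handling the $\E_Q(\hstar_{P,i})$ bias terms across levels cleanly (they need not be monotone below $i^*_P$) is the delicate accounting; the safe route is to note that the Lepski guarantee always compares against level $i^*_P$, where the bias term is exactly $\E_Q(\hstar_P)$, so the final bound is $\phi^\sharp(i^*_P)$ regardless of the erratic behavior of $\phi^\sharp$ at lower levels. A secondary technical point is calibrating the test radii so that the BCC parameters $\beta_{P,i}$ (for fast rates) enter correctly without the learner knowing them — the usual fix is to use empirical/local-Rademacher-type radii or the data-dependent deviation bounds from \citep{koltchinskii:06}, which are automatically adaptive to $\beta_{P,i}$.
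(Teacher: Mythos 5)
Your proposal is essentially the paper's argument: the paper also runs a Lepski-style source-side selection (via intersections of empirical confidence sets $\hHyp^P_j$, which is equivalent to your pairwise test, yielding $\hat i_P \le i^*_P$), then uses the key observation that any hypothesis accepted by this rule lies in $\hHyp^P_{i^*_P}$, so its source excess risk at level $i^*_P$ is controlled (the paper's \Cref{claim: approximately ERM risk for one level}) and can be transferred via $\rho_{i^*_P}$, and finally compares against a target-only estimator on a held-out sample (\Cref{alg: single level phi}), which is exactly your fall-back step. You also correctly pinpoint the ``safe route'' of always comparing against level $i^*_P$ rather than telescoping through intermediate levels, which is precisely how the paper sidesteps the non-monotonicity of $\E_Q(\hstar_{P,i})$ below $i^*_P$; the data-dependent radii you invoke (empirical disagreement probabilities) are what the paper uses to avoid knowing $\beta_{P,i}$.
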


For sanity check, notice that if $P$ were equal to $Q$, then $\rho_{i^*_P} = 1$ is admissible and we recover the usual model selection bound in terms of $\max\braces{n_P, n_Q} \propto (n_P + n_Q)$. The bound is never worse than model selection under $Q$ alone, and can improve significantly for $P$'s \emph{close} to $Q$, i.e., with small $\rho_{i^*_P}, \E_Q(\hstar_P)$. 

As stated in the introduction, while SRM, a.k.a. \emph{complexity regularization} approaches are prevalent in the literature and in practice, it is unclear whether such approaches can adaptively achieve the above rate of $\phi^\sharp(i^*_P)$. Instead we employ an approach, similar to so-called \emph{Lepski's method}, based on intersections of empirical confidence balls (see Algorithm \ref{alg: upper bound}).  

We now turn to whether the rate $\phi^\sharp(i^*_P)$ is the best achievable. First, recalling the simple neural-nets Examples \ref{ex:NNthresholds} and \ref{ex:NN-Relu}, we remark that \emph{there exists situations, i.e., pairs of distributions $(P, Q)$ for which $i^\sharp \doteq \arg\min_i \phi^\sharp(i)$ is smaller than $i^\star_P$}. The simplest way to see this is to notice in these examples that we may have all $\E_Q(\hstar_{P, i})$ equal (or nearly equal) across levels, while at the same time $\rho_i$'s are non-decreasing in these examples, forcing a choice of $i^\sharp$ anywhere below $i^*_P$. This is illustrated in \Cref{fig:flatNNtransfer}, and formalized in \Cref{prop: i sharp } of Appendix \ref{app: i sharp example proof}.

The next result, relying on a second Algorithm \ref{alg: single level phi}, states that the better rate $\min_i \phi^\sharp(i)$ is indeed achievable \emph{given some distributional knowledge}.

\begin{theorem}[Oracle Upper-bound] \label{thm: oracle upper bound.}
There exists a proper learner $\hhat$ which, given knowledge of $\arg\min_i \phi^\sharp(i)$, guarantees with probability of at least $1 - 3\delta$,

\begin{equation}
    \E_Q(\hhat) \le \min_i \phi^\sharp(i).
\end{equation} 
\end{theorem}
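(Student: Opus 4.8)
\emph{Proof sketch.} The oracle is handed exactly one number, the index $i^\sharp \doteq \argmin_i \phi^\sharp(i)$; the transfer exponents $\rho_i$, the quantities $\E_Q(\hstar_{P,i})$, and the noise exponents all remain unknown. The plan is to reduce the claim to \Cref{lem: single level bounds.}, which supplies, for \emph{any} fixed target level $i$ given as input, a proper learner (\Cref{alg: single level phi}) using the $n_P$ source and $n_Q$ target samples whose output $\hat{h}_i$ obeys $\E_Q(\hat{h}_i) \le \phi^\sharp(i)$ with probability at least $1 - 3\delta_i$, with no knowledge of the transfer or noise parameters defining $\phi^\sharp(i)$. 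Running that procedure with $i = i^\sharp$ therefore produces a classifier $\hat{h}$ with $\E_Q(\hat{h}) \le \phi^\sharp(i^\sharp) = \min_i \phi^\sharp(i)$, and with probability at least $1 - 3\delta_{i^\sharp} \ge 1 - 3\delta$ since $\delta_{i^\sharp} \le \sum_i \delta_i \le \delta$. So all the work is in matching $\phi^\sharp$ at a \emph{named} level while adapting to the parameters defining it; the theorem is what this buys once the argmin is known.

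For completeness I would recall why \Cref{alg: single level phi}, on input $i$, achieves $\phi^\sharp(i)$. It keeps two candidates. The first, $\hat{h}^P_i$, is the output of the single-class adaptive transfer subroutine of \citet{hanneke2019value} (their Proposition~2) run at class $\Hyp_i$: one forms the empirical confidence ball $\B_P$ around the empirical $P$-risk minimizer in $\Hyp_i$ and the ball $\B_Q$ around the empirical $Q$-risk minimizer in $\Hyp_i$, and returns a suitable element of their data-dependent intersection; for the admissible parameter values minimizing it, its $Q$-excess risk is at most the first branch of $\phi^\sharp(i)$, namely $\E_Q(\hstar_{P,i}) + C_0\, C_{\rho_i}\big(d_i \log(n_P/\delta_i)/n_P\big)^{1/((2 - \beta_{P,i})\rho_i)}$. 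The second, $\hat{h}^Q$, is the output of ordinary penalized model selection / Lepski's method over $\braces{\Hyp_j}_j$ on the $n_Q$ target samples alone \citep[][Theorem~7]{koltchinskii:06}, which gives $\E_Q(\hat{h}^Q) \le C_0\big(d_{i^*_Q}\log(n_Q/\delta_{i^*_Q})/n_Q\big)^{1/(2 - \beta_Q)}$ --- here the union bound over levels, with budget $\delta_j$ at level $j$, costs only $\log(1/\delta_{i^*_Q})$ at the active level $i^*_Q$, and the exponent uses $\beta_Q = \inf_{i \ge i^*_Q}\beta_{Q,i}$ because the selected level may exceed $i^*_Q$. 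A final comparison step picks between $\hat{h}^P_i$ and $\hat{h}^Q$ through a target-sample confidence interval on the gap $\E_Q(\hat{h}^P_i, \hat{h}^Q)$, so that the returned hypothesis has $Q$-excess risk within a constant factor of the minimum of the two displayed bounds; absorbing that factor into $C_0$ yields $\E_Q \le \phi^\sharp(i)$.

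The crux --- inherited from \Cref{lem: single level bounds.} rather than new here --- is this last selection, done from target data only: one must estimate $\E_Q(\hat{h}^P_i, \hat{h}^Q)$ to accuracy no worse than the \emph{smaller} of the two rates, which needs a Bernstein-type deviation bound under $Q$ with exponent $\beta_Q$, and extra care when the better candidate's excess risk is itself of order $\E_Q(\hstar_{P,i})$ (the source is uninformative and one must reliably default to $\hat{h}^Q$). Two bookkeeping points round this out: (i) the transfer subroutine only certifies a data-dependent bound, so one verifies it is dominated by the analytic quantity $\phi^\sharp(i)$ at the (unknown) optimal admissible $(C_{\rho_i},\rho_i,\beta_{P,i})$ --- exactly the content of \Cref{lem: single level bounds.}; and (ii) the $3\delta$ failure budget is split across the two subroutines and the comparison using the $\delta_i$'s and $\sum_i \delta_i \le \delta$, while properness is immediate since both candidates lie in $\cup_j \Hyp_j$ and the output is one of them.
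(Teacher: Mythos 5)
Your proposal is correct and takes essentially the same route as the paper: given $i^\sharp$, run \Cref{alg: single level phi} at level $i^\sharp$ (the paper hands it $\hHyp_{i^\sharp}^P$) and invoke \Cref{lem: single level bounds.}. The paper's proof is exactly your first paragraph; the rest of what you write is a recap of the lemma's proof rather than new work. Two small inaccuracies worth flagging, neither fatal: (i) \Cref{lem: single level bounds.} already gives the $1-3\delta$ guarantee directly (the $\delta_i$'s are spent internally on the union bound over levels), so the intermediate claim that it yields $1-3\delta_i$ at the chosen level is not quite what the lemma states, though the conclusion is unaffected; (ii) the source-side candidate $\hat h_{P,i}$ in \Cref{alg: single level phi} is simply any element of the supplied subset $\tilde{\Hyp}^P \subseteq \hHyp_i^P$ --- there is no intersection of $P$- and $Q$-balls within $\Hyp_i$ to form it. The interplay with the $Q$ side happens entirely through the held-out-sample comparison against $\hat h_Q$, not through a joint confidence-region intersection as in the single-class procedure you recalled.
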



Unfortunately, as we discuss in the next section, this oracle bound is not achievable adaptively. 

\begin{figure}\label{fig:flatNNtransfer}
\centering 
\includegraphics[width=8cm]{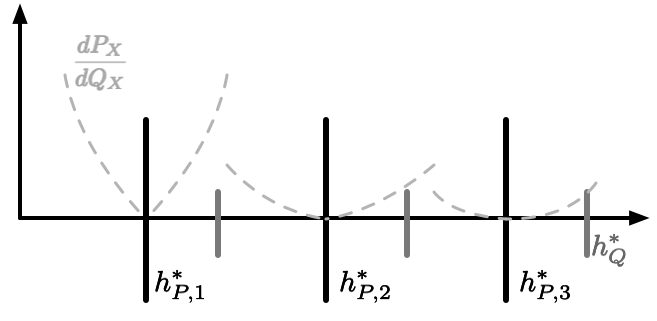}
\caption{\footnotesize A simple example, following up on NN Examples \ref{ex:NNthresholds} and \ref{ex:NN-Relu}, where $i^\sharp \doteq \arg\min_i \phi^\sharp(i)$ is different from $i^*_P$. Here, decision boundaries under $P$ are depicted in black, whereby each $\hstar_{P, i}$, $i = 1, 2, 3$, corresponds to the $i$ boundaries on the left of it, including those of $\hstar_{P, i-1}$ (level $\Hyp_i$ allows up to $i$ boundaries). Now decision boundaries under $Q$ (as depicted in gray) are shifted to the right of boundaries under $P$: as a consequence all $\hstar_{P_i}$'s have similar excess $Q$-error $\E_Q$, so that $i^\sharp$ is determined by $\rho_i$'s. Now for this hierarchy, $\rho_i$ may decrease (better transferability) for smaller levels $i$ simply by virtue of $P$ assigning more mass to corresponding decision boundaries as $i$ decreases, as suggested by the density $d{P_X}/dQ_X$ which is depicted in dashed lines.}
\end{figure}

\subsection{Adaptivity Gap}
The following quantity $\phi_\flat(i)$ is of similar order as $\phi^\sharp(i)$ up to log terms, provided $\log(1/\delta_i) \propto d_i$.  

\begin{definition} Define the following quantity: 
\begin{equation} 
    \phi_\flat(i) = \min \left\{  
    C_{\rho_i} \left(\frac{d_i}{n_P}\right)^{1/(2 - \beta_{P, i})\rho_i} + \E_Q(\hstar_{P, i}), \left(\frac{d_{i^*_Q}}{n_Q} \right)^{1/ (2 - \beta_Q)} \right\}. 
\end{equation}
\end{definition}

Our aim is to not only establish the un-achievability of the above oracle rate $\min_i \phi(i)$ by adaptive procedures, but also to try and pinpoint the sources of such hardness, i.e., decouple the effect of $\rho_i$'s and $E_Q(\hstar_{P, i})$'s. 
To this end, since these terms only pertain to transfer from $P$, we need only consider situations where the terms in $\phi(i)$ involving $i$, achieves the $\min$ in the definition of $\phi(i)$. 

Our first result below holds every parameter other than $\rho_i$'s fixed, and show that even then $\min_i \phi(i)$ cannot be achieved adaptively. In particular the construction sets $\E_Q(\hstar_{P, i}) = 0$ for all $i$ in the hierarchy, but confuses the learner by randomizing which level below $i^*_P$ admits larger $\rho_i$'s. Proofs of the next theorem is in \Cref{sec: lower bound proofs}.

{For simplicity, the construction in the next theorem sets all $\beta$'s to 1}. 
We give a similar result to the next theorem for a richer model class in \Cref{sec: adaptivity lower bounds for a larger model class}.

\begin{remark}[Irrelevance of Noise Regimes] 
    We remark that in other transfer learning settings with a single hypothesis class, including multi-task and multi-source learning, \emph{regimes of noise} as captured by $\beta$'s affect the extent to which adaptivity is possible; for instance in the low-noise regime $\beta_P = \beta_Q =1$ together with $\E_Q(h^*_P)= 0$ even na\"{i}ve pooling (where $P$ and $Q$ samples are combined into one) is adaptive, but non-adaptive for $\beta_P < 1$ \citep{hanneke2019value, hanneke2022no}. In the present setting of model selection however, the regime of noise in itself plays no role.
\end{remark}

\begin{theorem}[Oracle Rate is Not Achievable]\label{thm: simple rho lower bound }
There exists a hierarchy $\Hyp_1 \subset \Hyp_2 $, with $d_1, d_2 = 1$ satisfying the following. Pick any 
$\rho_a > \rho_b \geq 1$, and any $n_P$ and $n_Q$, where   $ \left( \frac{1}{32 n_P}\right)^{1/\rho_a} \le \frac{1}{32 n_Q}$. Then there is a family of distributions 
    $\{ \paren{P_\sigma, Q_\sigma}\}$ , indexed by some $\sigma$, such that the following hold. 
\begin{enumerate}[leftmargin=11pt]
        \item[(i)] $\forall \sigma $, transfer exponents from $P_\sigma$ to $Q_\sigma$
        are the set $\{\rho_1,\rho_2\} = \{\rho_a,\rho_b\}$ , with $C_{\rho_1}=C_{\rho_2} = 1$.

         \item[(ii)] $\forall \sigma$, we have 
        $\min_{i} \phi_\flat(i) = \left( \frac{1}{n_P}\right)^{1/\rho_b}$, strictly less than $\max_{i} \phi_\flat(i) = \left( \frac{1}{n_P}\right)^{1/\rho_a}$.
    
\end{enumerate} 
    \begin{align}
        \text{We have that, } \forall \hat h, \quad \sup_\sigma \Prob_{P_\sigma^\np \times Q_\sigma^\nq} \left[  \E_{Q_\sigma} (\hat{h}) \ge \frac{1}{256}\cdot \max_{i} \phi_\flat(i)\right] \ge 1/8 .
    \end{align}

\end{theorem}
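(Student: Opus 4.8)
The plan is to build a family $\{(P_\sigma, Q_\sigma)\}$ on a hierarchy $\Hyp_1 \subset \Hyp_2$ of thresholds (so $d_1 = d_2 = 1$) in which both levels \emph{individually} carry the same information from $P$ and $Q$, so that no single-level test can tell them apart, yet the true minimal transfer exponents at the two levels form the pair $\{\rho_a, \rho_b\}$ in an order determined by $\sigma$. Concretely I would take $\sigma \in \{0,1\}$ and arrange that $\hstar_{P,1} = \hstar_{P,2} = \hstar_P$ is a fixed classifier common to both levels, with $\E_Q(\hstar_{P,i}) = 0$ for all $i$ and all $\sigma$ (this forces the first term of $\phi_\flat(i)$ to reduce to the pure $(d_i/n_P)^{1/(2-\beta_{P,i})\rho_i}$ term, and with all $\beta$'s set to $1$, to $(1/n_P)^{1/\rho_i}$). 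The class $\Hyp_2$ contains an extra threshold/classifier that lets $Q_\sigma$ "see" whether the learner has correctly identified which of the two nested levels transfers well; the marginal $P_X$ is chosen so that the excess-$P$-risk--to--excess-$Q$-risk relationship near the relevant decision boundary scales with exponent $\rho_a$ on one level and $\rho_b$ on the other, with the assignment flipped between $\sigma = 0$ and $\sigma = 1$. I'd check items (i) and (ii) by direct computation: (i) amounts to verifying the defining inequality $C_\rho \E_P(h,\hstar_P)^{1/\rho} \ge \E_Q(h,\hstar_P)$ holds with $C_\rho = 1$ at the claimed exponents and is violated for any smaller exponent (minimality), which follows from the explicit polynomial mass assignments; (ii) is then immediate since $\phi_\flat(1)$ and $\phi_\flat(2)$ equal $(1/n_P)^{1/\rho_a}$ and $(1/n_P)^{1/\rho_b}$ in some order, and the hypothesis $(1/(32 n_P))^{1/\rho_a} \le 1/(32 n_Q)$ guarantees the $n_Q$-term in $\phi_\flat$ never wins.

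The core of the argument is the lower bound, and here I would use a standard two-point (Le Cam / Assouad-style) reduction. The key observation to engineer is that $P_\sigma$ is \emph{the same distribution} for $\sigma = 0$ and $\sigma = 1$ (or at least that $P_0^{\np}$ and $P_1^{\np}$ are statistically indistinguishable, e.g.\ identical) — so the source sample carries \emph{zero} information about $\sigma$ — while $Q_\sigma$ differs in $\sigma$ only on an event of probability so small (roughly $(1/n_P)^{1/\rho_a}$, hence $\lesssim 1/n_Q$ by the standing assumption) that with constant probability the $n_Q$-sample from $Q_\sigma$ also fails to reveal $\sigma$. On that joint "uninformative" event, $\hat h$ must commit to a classifier, and whatever it commits to is $\ge \tfrac{1}{256}\max_i \phi_\flat(i) = \tfrac{1}{256}(1/n_P)^{1/\rho_a}$ in excess $Q_\sigma$-risk for at least one of the two values of $\sigma$ (because achieving the faster rate $(1/n_P)^{1/\rho_b}$ at a given level requires behaving as though that level is the "good" one, and the two values of $\sigma$ demand opposite behavior). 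Averaging over $\sigma$ and using that the bad event has probability bounded below by a constant gives the $1/8$ bound; I would pick the precise mass $1/(32 n_P)$ etc.\ so that a clean constant like $1/8$ drops out of the TV/Le Cam computation.

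In more detail, the steps in order: (1) define the hierarchy and the two decision boundaries explicitly, with a parameter $\Delta \asymp (1/n_P)^{1/\rho_a}$ controlling the $Q$-region where $\sigma$ matters; (2) define $P_X$ as a fixed mixture of a point mass / interval giving the $\rho_a$-scaling on one sub-interval and $\rho_b$-scaling on another, and $P_{Y|X}$ deterministic (since $\beta_P = 1$) so $\E_P(h,\hstar_P)$ is just $P_X$-mass of disagreement; (3) set $Q_X = P_X$ off the critical region and put the only $\sigma$-dependence in a small-mass region, with $Q_{Y|X}$ deterministic; (4) verify (i) and (ii) by the polynomial computations above; (5) run Le Cam: bound $\|P_0^{\np}Q_0^{\nq} - P_1^{\np}Q_1^{\nq}\|_{TV}$ — which reduces to $\|Q_0^{\nq} - Q_1^{\nq}\|_{TV}$ since the $P$-parts agree — via the mass of the critical region times $n_Q$, which is $\le n_Q \cdot c/n_Q = c$ small by the standing hypothesis; (6) conclude that any $\hat h$ has, averaged over $\sigma$, probability $\ge 1/8$ of excess risk $\ge \tfrac1{256}\max_i\phi_\flat(i)$, hence the $\sup_\sigma$ is at least that.

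The main obstacle I anticipate is step (2)--(4): simultaneously making $P_\sigma$ (nearly) $\sigma$-independent, keeping $\E_Q(\hstar_{P,i}) = 0$ at \emph{both} levels, realizing the \emph{minimal} transfer exponents as exactly $\{\rho_a,\rho_b\}$ with $C_\rho = 1$ at both levels, and keeping the $\sigma$-dependent region of $Q$ of mass $O(1/n_Q)$ — these constraints pull against each other, and getting a construction where the nested structure $\Hyp_1 \subset \Hyp_2$ genuinely forces the exponent assignment (rather than it being an artifact one could detect) is the delicate part. This is presumably exactly where the construction mirrors, but must depart from, the single-class lower bound of \citet{hanneke2019value}: the nesting and the "which level is good" randomization are the new ingredients. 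A secondary technical point is bookkeeping the constants ($1/32$, $1/256$, $1/8$) so the final probability is a clean $1/8$; I would leave slack in the construction ($\Delta$ a small constant multiple of $(1/n_P)^{1/\rho_a}$) to absorb these.
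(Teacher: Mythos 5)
Your high-level plan --- a two-level threshold hierarchy, a $\sigma$-indexed family that flips the assignment of $\{\rho_a,\rho_b\}$ to $\{\rho_1,\rho_2\}$, source and target samples engineered to be uninformative about $\sigma$ on a constant-probability event, then a Le Cam/typical-event reduction --- is the right shape and matches the paper's strategy. But the specific structural choice you commit to, namely $\hstar_{P,1} = \hstar_{P,2} = \hstar_P$, is a dead end. Under nesting, if the two levels share a $P$-risk minimizer, then the transfer-exponent inequality $C_\rho\, \E_P(h,\hstar_P)^{1/\rho} \ge \E_Q(h,\hstar_P)$ for all $h\in\Hyp_2$ is strictly stronger than the same over $h\in\Hyp_1 \subset \Hyp_2$, so any $\rho$ valid at level $2$ is automatically valid at level $1$, and hence $\rho_1^{\min} \le \rho_2^{\min}$ for \emph{every} distribution pair. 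The assignment can therefore never flip with $\sigma$: an adaptive learner could always safely commit to level $1$ and be guaranteed the smaller exponent $\rho_b$. This removes precisely the ambiguity the lower bound must exploit. You correctly flag this tension as ``the delicate part,'' but your choice of shared minimizer makes it unresolvable rather than merely delicate.

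The paper's construction escapes this by making the $P$-risk minimizers at the two levels \emph{different hypotheses}: at level $1$ the minimizer is a one-sided threshold, at level $2$ it is a one-sided interval (a genuinely new hypothesis in $\Hyp_2\setminus\Hyp_1$). Since the transfer-exponent condition is centered at $\hstar_{P,i}$, and these centers differ, the conditions at the two levels decouple and the exponent ordering can be flipped by $\sigma$. The paper also uses a four-point family $\sigma \in \{\pm 1\}^2$ rather than a two-point one: $\sigma_2$ controls which side-interval carries the heavier $Q$-mass (so which level has the worse exponent), while $\sigma_1$ controls the labels on those intervals (so that committing to the wrong hypothesis actually costs $\asymp (1/n_P)^{1/\rho_a}$ of excess $Q$-risk). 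Both bits are needed, in particular so that the argument traps \emph{improper} learners (which can hedge between hypotheses); with only a single bit as in your sketch, an improper learner could output a mixture/average that is never badly wrong against either alternative. Your step (5) Le Cam computation and step (6) conclusion are fine once a construction with the correct structure is in place; the gap is entirely in steps (2)--(4), and it is exactly where you predicted trouble.
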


The construction fixes $i^*_P = 2$, and randomizes which of $\rho_1\neq \rho_2$ takes the largest value in $\{\rho_a, \rho_b\}$. We note that our adaptive upper-bound $\max_\sigma \phi^\sharp(i^*_P)$ matches the lower bound $\max_i \phi_\flat (i)$ up to $\log$ terms. Also notice that, as $\rho_a, \rho_b$ are arbitrary, the lower bound can be  arbitrarily worse than the Oracle upper-bound, i.e., we can construct any gap in $[0, 1]$. 

The next class of distributions instead fixes $\rho_i$'s and allows $\E_Q(\hstar_{P, i})$ to vary. It builds on a similar intuition as for the proof of Theorem \ref{thm: basic lower bound}, and is included for completeness. 

\begin{theorem}\label{thm: lower bound for adaptivity to erisk across levels}
    Let $\Hyp_1 \subset \Hyp_2$ be a model class hierarchy such that there exists a set of two points that $\Hyp_2$ shatters but $\Hyp_1$ does not, and  assume that $\Hyp_1$ is non empty.
    Then for any $ 1\geq \alpha \ge 0$, $n_P$, and $n_Q$ such that $\frac{1}{ 2 n_Q} \ge \alpha $, there exists a class of distribution $\braces{(P_\sigma, Q_\sigma)}$ parameterized by $\sigma \in \{1, 2\}$, with $\beta_{P, \sigma} = \beta_{Q, \sigma} = 1$, where, for every $\sigma$, $ \alpha = \max \braces{ \E_{Q_\sigma}(\hstar_{P, 1}), \E_{Q_\sigma}(\hstar_{P, 2})}$, satisfying the following. For any classifier $\hat{h}$ that has access to $\np$ source and $\nq$ target samples,
    \begin{align}
        \sup_{\sigma \in \braces{1, 2}} \Prob_{P_\sigma^{\np} \times Q_\sigma^{\nq}} \brackets{\E_{Q_\sigma}(\hat{h}) \ge \alpha } \ge \frac{1}{4}.
    \end{align}
\end{theorem}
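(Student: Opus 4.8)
The plan is a two-point (Le Cam) lower bound in which the \emph{source} law is held fixed across the two alternatives $\sigma\in\{1,2\}$, so that source samples---of which there may be arbitrarily many---carry no information about $\sigma$, while the two target laws $Q_1,Q_2$ are made statistically indistinguishable from $n_Q$ samples yet force mutually exclusive near-optimal predictors.

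\textbf{Step 1 (combinatorics of the hierarchy).} Fix a two-point set $\{x_1,x_2\}$ shattered by $\Hyp_2$ but not $\Hyp_1$, and pick any $h_1\in\Hyp_1$ (nonempty by hypothesis) with sign pattern $\pi^1\doteq(h_1(x_1),h_1(x_2))$. The four patterns on $\{x_1,x_2\}$ form a $4$-cycle under Hamming distance, and the set $S_1$ of patterns realized by $\Hyp_1$ on $\{x_1,x_2\}$ is a nonempty proper subset, so it has a boundary edge: some pattern $\pi\notin S_1$ differs from a realized pattern in exactly one coordinate. Renaming $x_1\leftrightarrow x_2$ if needed, we may take that realized pattern to be $\pi^1$ and assume $\pi=(\pi^1_1,-\pi^1_2)$; since $\Hyp_2$ shatters $\{x_1,x_2\}$ it contains some $h_2$ realizing $\pi$. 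The only genuine difficulty lies here: since $\Hyp_1$ is otherwise arbitrary one cannot presume which patterns it realizes, and the boundary-edge observation is exactly what makes the construction uniform over all admissible hierarchies.

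\textbf{Step 2 (the distributions).} Take $P,Q_1,Q_2$ all supported on $\{x_1,x_2\}$ with deterministic labels. Set $P_X(x_1)=1-\gamma$, $P_X(x_2)=\gamma$ for a fixed $\gamma\in(0,1/2)$, and label $(x_1,x_2)$ by $\pi$. Then the $P$-Bayes rule realizes $\pi$, so $\hstar_P=\hstar_{P,2}$ realizes $\pi$ (with $R_P=0$ and $i^*_P=2$, since $\pi\notin S_1$); moreover, because $\pi\notin S_1$ and $\gamma<1-\gamma$, the unique $P$-risk-minimizing pattern inside $S_1$ is $\pi^1$, so $\hstar_{P,1}$ realizes $\pi^1$ on $\{x_1,x_2\}$---uniqueness \emph{on the support of the data} rendering the adversarial tie-break of \Cref{assump:hstarinclass} harmless. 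For the targets let $Q_{1,X}=Q_{2,X}$ put mass $1-\alpha$ on $x_1$ and $\alpha$ on $x_2$, and let $Q_1$ label $(x_1,x_2)$ by $\pi$ while $Q_2$ labels them by $\pi^1$. Then $\hstar_{Q_1}$ realizes $\pi$ and $\hstar_{Q_2}$ realizes $\pi^1$ (achievable in $\Hyp_1$ via $h_1$), giving $\E_{Q_1}(\hstar_{P,2})=0$, $\E_{Q_1}(\hstar_{P,1})=Q_{1,X}(x_2)=\alpha$, and symmetrically $\E_{Q_2}(\hstar_{P,1})=0$, $\E_{Q_2}(\hstar_{P,2})=\alpha$; hence $\max\{\E_{Q_\sigma}(\hstar_{P,1}),\E_{Q_\sigma}(\hstar_{P,2})\}=\alpha$ for each $\sigma$, as required. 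All labels being deterministic, the relevant BCC exponents equal $1$ under $P$ and under each $Q_\sigma$.

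\textbf{Step 3 (indistinguishability and Le Cam).} Since $Q_1$ and $Q_2$ agree off the atom $x_2$, where they carry opposite deterministic labels with mass $\alpha$, $\|Q_1-Q_2\|_{\mathrm{TV}}=\alpha$; by subadditivity of total variation over product measures and equality of the source factors, $\|P^{n_P}\!\times Q_1^{n_Q}-P^{n_P}\!\times Q_2^{n_Q}\|_{\mathrm{TV}}=\|Q_1^{n_Q}-Q_2^{n_Q}\|_{\mathrm{TV}}\le n_Q\,\alpha\le\tfrac12$, using the hypothesis $\tfrac{1}{2n_Q}\ge\alpha$ (the case $n_Q=0$ being immediate). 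On the other hand, because $\alpha\le\tfrac12$ the only value of $\E_{Q_\sigma}(\hat h)$ strictly below $\alpha$ is $0$, so $\E_{Q_1}(\hat h)<\alpha$ forces $\hat h(x_2)=-\pi^1_2$ whereas $\E_{Q_2}(\hat h)<\alpha$ forces $\hat h(x_2)=\pi^1_2$; consequently the events $\{\E_{Q_1}(\hat h)\ge\alpha\}$ and $\{\E_{Q_2}(\hat h)\ge\alpha\}$ together exhaust all outcomes of any (even improper, even randomized) $\hat h$. The standard two-point inequality then yields $\Prob_{P^{n_P}\times Q_1^{n_Q}}[\E_{Q_1}(\hat h)\ge\alpha]+\Prob_{P^{n_P}\times Q_2^{n_Q}}[\E_{Q_2}(\hat h)\ge\alpha]\ge 1-\tfrac12=\tfrac12$, so the supremum over $\sigma$ is at least $\tfrac14$. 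What remains is routine bookkeeping: the excess-risk identities, uniqueness of $\hstar_{P,1}$ on the support, the $\beta=1$ claim, and the elementary total-variation and two-point estimates.
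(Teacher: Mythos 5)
Your proof is correct and arrives at the same $1/4$ constant via the same two-point construction that the paper uses, but with two refinements and one presentational variant worth noting. First, your boundary-edge observation on the $4$-cycle of patterns cleanly justifies the existence of $\pi^1 \in S_1$ and $\pi \notin S_1$ at Hamming distance one, which is needed to guarantee $\hstar_{P,2}$ is genuinely outside the reach of $\Hyp_1$ \emph{as a pattern} (not merely as a function); the paper's proof asserts the existence of suitable $h_1, h_2$ disagreeing on exactly one point without spelling out this combinatorial step. Second, your choice of an unbalanced source marginal $\gamma < 1/2$ forces $\pi^1$ to be the \emph{unique} $P$-risk-minimizing pattern in $S_1$, so the adversarial tie-break in Assumption~\ref{assump:hstarinclass} is moot and the BCC holds at level $1$ with a finite constant; the paper uses $P_X(x_0) = P_X(x_1) = 1/2$, which leaves the tie-break live when $\Hyp_1$ realizes a second pattern adjacent to $\pi$ — your version sidesteps that. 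Third, you run the indistinguishability via a total-variation/Le Cam two-point bound, whereas the paper conditions on the event that every target sample lands in the high-mass atom and then randomizes $\sigma$ on that event; these are interchangeable and give identical constants. One small wrinkle in the write-up: you initially fix $\pi^1$ as the pattern of an arbitrary $h_1 \in \Hyp_1$, but the boundary edge may terminate at a different realized pattern, so $h_1$ should be (re-)chosen to realize the boundary pattern rather than obtained by swapping $x_1 \leftrightarrow x_2$; also, the line invoking $\alpha \le 1/2$ to conclude the exhaustion of outcomes is a sufficient condition tied to $n_Q \ge 1$, and you should make explicit (as you in fact verify implicitly) that the union $\{\E_{Q_1}(\hat h) \ge \alpha\} \cup \{\E_{Q_2}(\hat h) \ge \alpha\}$ exhausts all four possible sign patterns of $\hat h$ for \emph{every} $\alpha \in [0,1]$, which handles the $n_Q = 0$ case uniformly. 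These are minor, and the argument is sound.
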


The proof of this theorem is given in \Cref{app sec: lower bound proofs}.

\section{Analysis}\label{sec:analysis}

\subsection{Proofs for Upper-bounds}

\begin{definition}[Empirical Minimal Sets] \label{def : empirical level set}
Let $\genGap{n_\dist}{\delta}{\classComp{\Hyp_i}} \defeq \frac{d_i \log (n_\dist/ d_i ) + \log 1/\delta}{n_\dist}$. 
Given $n_\dist$ samples from distribution $\dist $, define the empirical minimal set for hypothesis class $\Hyp_i$ to be
\begin{equation}
    \hHyp_i^\dist  \defeq \left \{
        h \in \Hyp_i \mid 
        \Hat{\Risk}_\dist(h) - \Hat{\Risk}_\dist (\hhat_{\dist, i}) 
            \le 
        C \left(\P_{n_\dist}[h \neq \hhat_{\dist, i}] \cdot  \genGap{n_\dist}{\delta_i}{\classComp{\Hyp_i}}\right)^{1/2} + c \genGap{n_\dist}{\delta_i}{\classComp{\Hyp_i}}
    \right \},
\end{equation}
where $\hhat_{\dist, i}$ denotes an ERM over $\Hyp_i$ computed using samples from distribution $\dist$.
\end{definition}

We assume that in addition to the source and target training sets, we are also given a hold-out target sample set $S'_Q$ of size $n_Q$. Let $\emRisk'_Q(\cdot)$ denote the empirical risk and $\P'_\nq$ denote the empirical distribution on these held out samples.

The main algorithm is presented next, and relies on \Cref{alg: single level phi}. 

\begin{footnotesize}
\begin{algorithm}[H]
    \caption{Adaptive Trade-off}
    \label{alg: upper bound}
    \begin{algorithmic}
        \State \textbf{Input:} $S_P, S_Q$,  $S'_Q$
        \State Compute
        $
            \hat{i}_P = \min i \; \; \;
            \mbox{s.t.}\;  \bigcap_{j\ge i}^\infty \hHyp_j^P \neq \emptyset.
        $
        \State Compute $ \tilde{\Hyp}^P = \bigcap_{j\ge \hat{i}_P}^\infty \hHyp_j^P$.
        \State Return output of \Cref{alg: single level phi} with $S_P, S_Q, S'_Q$ and the set $\tilde{\Hyp}^P$
    \end{algorithmic}
\end{algorithm}

\begin{algorithm}[H]
    \caption{ Tradeoff on $Q$, at level $\Hyp_i$}
    \label{alg: single level phi}
    \begin{algorithmic}
        \State \textbf{Require:} Any subset $\tilde{\Hyp}^P \subseteq \hHyp_i^P$
        \State \textbf{Input:} $ S_Q, S'_Q, \tilde{\Hyp}^P $
        \State  Compute
        $ 
            \hat{i}_Q = \min i \;\;\;
            \mbox{s.t.}\;  \bigcap_{j \ge  i}^\infty \hHyp_j^Q \neq \emptyset.
        $
        \State Pick  $
             \hat{h}_Q  \in   \bigcap_{j\ge \hat{i}_Q}^\infty \hHyp_j^Q
        $
        and pick $\hat{h}_{P,i} \in \tilde{\Hyp}^P$.

        \State\textbf{If}  $\emRisk_Q'(\hhat_{P, i}) - \emRisk_Q'(\hat{h}_Q) \le  \paren{\P'_{n_Q}[\hhat_{P, i} \neq \hat{h}_Q] \cdot \genGap{n_Q}{\delta}{1}}^{1/2} + c \genGap{n_Q}{\delta}{1}$:
        \State \textbf{then}
        $\hat{h}_i \leftarrow \hat{h}_{P, i}$
         \; \textbf{else:} {$\;\hat{h}_i \leftarrow \hat{h}_Q$}
    \end{algorithmic}
\end{algorithm}
\end{footnotesize}

The next lemma gives guarantees for \Cref{alg: single level phi}, and is essential to our main upper-bounds.

\begin{lemma}\label{lem: single level bounds.}
    Let $\hhat_i$ be the output of \Cref{alg: single level phi}. With probability of at least $1 - 3 \delta$ over the samples $S_Q, S'_Q$ and $S_P$ (which is used to construct $\hHyp^P_i$) 
    \begin{equation}
        \E_Q(\hat{h}) \le \phi^\sharp(i).
    \end{equation}
    
\end{lemma}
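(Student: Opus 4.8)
}
The plan is to condition on a single ``good event'' of probability at least $1 - 3\delta$ on which several uniform-deviation bounds hold simultaneously, and then argue by cases according to which branch of the \textbf{If} statement in \Cref{alg: single level phi} is taken. The good event should bundle: (a) the standard relative-deviation (Bernstein/Vapnik) bounds for the class $\Hyp_i$ under $P$, guaranteeing that $\hstar_{P,i}$ lies in the empirical minimal set $\hHyp^P_i$, and that every $h \in \hHyp^P_i$ has $\E_P(h,\hstar_{P,i})$ controlled by $\genGap{n_P}{\delta_i}{\classCompi{\Hyp_i}}$ --- which, via the transfer exponent (Definition \ref{def: transfer exponent model class}) and the BCC under $P$ (to convert the $\P_{n_P}$-disagreement into $\E_P$), yields $\E_Q(h,\hstar_{P,i}) \lesssim C_{\rho_i}\big(d_i\log(n_P/\delta_i)/n_P\big)^{1/((2-\beta_{P,i})\rho_i)}$ for all $h \in \hHyp^P_i$; (b) the analogous bounds under $Q$ for the whole hierarchy, so that $\hat h_Q$ (coming from Algorithm \ref{alg: upper bound}'s intersection construction applied to $Q$, i.e.\ Lepski-type aggregation) satisfies the standard adaptive model-selection guarantee $\E_Q(\hat h_Q) \lesssim \big(d_{i^*_Q}\log(n_Q/\delta_{i^*_Q})/n_Q\big)^{1/(2-\beta_Q)}$ --- here is exactly where the definition $\beta_Q = \inf_{i \ge i^*_Q}\beta_{Q,i}$ is needed, since $\hat i_Q$ may land at any level $\ge i^*_Q$; and (c) the held-out relative-deviation bound on $S'_Q$ with the single-function complexity $\genGap{n_Q}{\delta}{1}$, used to compare the two fixed candidates $\hat h_{P,i}$ and $\hat h_Q$.

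Given that event, I would argue as follows. Write $r_P(i)$ and $r_Q$ for the two terms inside the $\min$ defining $\phi^\sharp(i)$ (the ``transfer-from-$P$'' term $\E_Q(\hstar_{P,i}) + C_0 C_{\rho_i}(\cdots)^{1/((2-\beta_{P,i})\rho_i)}$ and the ``$Q$-baseline'' term $C_0(\cdots)^{1/(2-\beta_Q)}$). From (a), plus $\E_Q(\hstar_{P,i}, \hstar_Q) = \E_Q(\hstar_{P,i}) \le r_P(i)$ by Assumption \ref{assump:hstarinclass}, any $\hat h_{P,i} \in \tilde\Hyp^P \subseteq \hHyp^P_i$ has $\E_Q(\hat h_{P,i}) \lesssim r_P(i)$. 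From (b), $\E_Q(\hat h_Q) \lesssim r_Q$. Now in the \textbf{else} branch ($\hat h_i = \hat h_Q$) the conclusion is immediate from (b). In the \textbf{then} branch ($\hat h_i = \hat h_{P,i}$), the test passed, so by the held-out bound (c) and the triangle-type inequality relating $\emRisk'_Q$-gaps to true-risk gaps, $\E_Q(\hat h_{P,i}, \hat h_Q) \lesssim \big(\P'_{n_Q}[\hat h_{P,i}\neq\hat h_Q]\cdot\genGap{n_Q}{\delta}{1}\big)^{1/2} + \genGap{n_Q}{\delta}{1}$; bounding $\P'_{n_Q}[\hat h_{P,i}\neq\hat h_Q]$ (again via (c), against the true disagreement probability, and then against $\lesssim 1$ or, with more care, against the BCC bound) gives $\E_Q(\hat h_{P,i}) \le \E_Q(\hat h_Q) + \E_Q(\hat h_{P,i},\hat h_Q) \lesssim r_Q$. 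Combining the two branches, $\E_Q(\hat h_i) \lesssim \min\{r_P(i), r_Q\} = \phi^\sharp(i)$, and a suitable choice of the absolute constant $C_0$ absorbs all the $\lesssim$ constants.

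The main obstacle I anticipate is the \textbf{then}-branch analysis making sure the comparison on the held-out set $S'_Q$ does not cost more than $r_Q$: one has to show that $\hat h_{P,i}$ passing the test against $\hat h_Q$ forces $\E_Q(\hat h_{P,i})$ to be at most (a constant times) $r_Q$ \emph{even though $\hat h_Q$ itself is only known to be near-optimal at level $i^*_Q$, not exactly optimal}, and this is precisely why $S'_Q$ must be an independent hold-out (so $\hat h_{P,i}$ and $\hat h_Q$ are fixed relative to it and the single-function bound $\genGap{n_Q}{\delta}{1}$ applies rather than a VC-uniform bound). A secondary subtlety is the bookkeeping of failure probabilities across the countably many levels: one allocates $\delta_i$ to level-$i$ events with $\sum_i \delta_i \le \delta$, and separately $\delta$ each to the $Q$-hierarchy event and the hold-out event, for a total of $3\delta$; and one must check the two-sided nature of the relative-deviation bounds (needed both to place $\hstar_{P,i}$ inside $\hHyp^P_i$ and to control members of $\hHyp^P_i$). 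These are routine once the event is set up, but the order of quantifiers --- fix the event first, then reason deterministically --- is what keeps the argument clean.
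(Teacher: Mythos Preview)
Your overall architecture matches the paper's --- three events ($\event_P$, $\event_Q$, $\event_H$), a case split on the test outcome, and the transfer-exponent conversion from $\E_P$ to $\E_Q$. But there is a genuine gap in the \textbf{else} branch.

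You write that in the else branch ``the conclusion is immediate from (b)'', i.e.\ $\E_Q(\hat h_Q) \lesssim r_Q$. That only delivers the second argument of the $\min$; you still owe $\E_Q(\hat h_Q) \lesssim r_P(i)$ when $r_P(i) < r_Q$. The paper closes this via the \emph{converse} direction of the test: it shows that whenever $\E_Q(\hat h_{P,i}) \le \E_Q(\hat h_Q)$, the if-condition is (under $\event_H$) necessarily satisfied, so the algorithm takes the \textbf{then} branch. By contraposition, entering the \textbf{else} branch forces $\E_Q(\hat h_Q) < \E_Q(\hat h_{P,i}) \lesssim r_P(i)$, the last inequality by your part (a). Without this converse, ``combining the two branches'' does not yield $\min\{r_P(i), r_Q\}$; it only yields $r_Q$. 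The paper accordingly first concludes $\E_Q(\hat h) \le \min\{\E_Q(\hat h_{P,i}),\, C\cdot r_Q\}$ and only then bounds $\E_Q(\hat h_{P,i})$ by $r_P(i)$.

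Two smaller points. First, in the \textbf{then} branch, bounding $\P'_{n_Q}[\hat h_{P,i}\neq \hat h_Q]\lesssim 1$ yields only an additive $\sqrt{\genGap{n_Q}{\delta}{1}}$, which is the $\beta_Q=0$ rate and can be much larger than $r_Q$ when $\beta_Q > 0$; you must commit to the BCC route you mention parenthetically. The paper splits $\P'_{n_Q}[\hat h_{P,i}\neq \hat h_Q] \le \P'_{n_Q}[\hat h_{P,i}\neq h^*_Q] + \P'_{n_Q}[\hat h_Q\neq h^*_Q]$, passes to true probabilities via $\event_H$, applies BCC under $Q$ to get $C_\beta\big(\E_Q(\hat h_{P,i})^{\beta_Q}+\E_Q(\hat h_Q)^{\beta_Q}\big)$, and then \emph{self-bounds} (solves the resulting inequality for $\E_Q(\hat h_{P,i})$) to obtain the $1/(2-\beta_Q)$ exponent. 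Second, a minor slip: under $\event_Q$ one has $\hat i_Q \le i^*_Q$ (not $\ge$), since $h^*_Q$ belongs to every $\hHyp_j^Q$ for $j \ge i^*_Q$; the reason $\beta_Q = \inf_{j\ge i^*_Q}\beta_{Q,j}$ is needed is not the location of $\hat i_Q$ but rather that the BCC self-bounding step above must apply to $\hat h_{P,i}$, which lies in $\Hyp_{\max(i, i^*_Q)}$.
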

The proof is given in  \Cref{app sec: remaining proofs for upper bounds}. The proofs of main upper-bound results are given next.

\begin{proofof} {\Cref{thm: upper bound}}
Let $\hat{h}$ be the output of \Cref{alg: upper bound}. 
Note that under the same events where the bound in \Cref{lem: single level bounds.} holds, using the same arguments as in \Cref{claim: upper bound on i hat Q} we can conclude that with probability of at least $1- \delta$, $\hat{i}_P \le i^*_P$.  Consequently  $\tilde{\Hyp}^P \subseteq \hHyp_{i^*_P}^P$ . Since \Cref{alg: upper bound} returns the output of \Cref{alg: single level phi} on a subset of $\tilde{\Hyp}^P$, it enjoys the guarantees as \Cref{alg: single level phi} for level $i^*_P$. Therefore, the bound in \Cref{lem: single level bounds.} applies to the output of \Cref{alg: upper bound} at $i = i^*_P$.
\end{proofof}

The proof of the oracle upper is also a simple application of \Cref{lem: single level bounds.}.

\begin{proofof}{ \Cref{thm: oracle upper bound.}}
    Let $i^\sharp \defeq \argmin_i \phi^\sharp(i)$.
    Given $i^\sharp$, oracle would then run \Cref{alg: single level phi} with given samples and $\hHyp_{i^\sharp}^P$ as input. 
    Applying \Cref{lem: single level bounds.} to the output would prove the statement of the theorem.
\end{proofof}

\subsection{Proofs for Lower-bounds}\label{sec: lower bound proofs}

\subsubsection{Proof of \Cref{thm: simple rho lower bound }}
We start with a construction, defining a suitable hierarchy and distributions. 

\paragraph{Construction.} 
Let $\X = [0, 1]$.
 We let $\Hyp_1$ contain only two one sided threshold classifiers, and $\Hyp_2$ contains $\Hyp_1$ plus two one sided interval classifiers. Let $r = 1/9$. The one sided threshold classifiers in $\Hyp_1$ are $h_1(x) = \sgn{x - 2/3}$ and $h'_1(x) = \sgn{x - 2/3 + r}$. The one sided intervals $h_2$ and  $h'_2$ only positively label the set of points in $[1/9, 1/3]$ and $[1/9, 1/3+ r]$ respectively.

We construct a family of four  distributions $ \{ (P_\sigma, Q_\sigma)\}_{\sigma \in \{ \pm 1\}^2}$, where each $P_\sigma$ and $Q_\sigma$ is  supported over $[1/9, 1] \times  \{\pm 1\} $.  Throughout this section we drop the subscript $\sigma$  when a quantity is the same for all distributions in the family.
We refer to the intervals $[1/9, 1/3], [1/3, 1/3 + r], [2/3- r, 2/3]$ and $[2/3, 1]$ as $\Lo, \Li, \Ri$ and $\Ro$ respectively.

For the marginals, we assume that within each interval the mass is uniformly distributed.
 Let $ P_X$ and $Q_{X, \sigma}$, be the marginal distributions under source and target respectively. 
All the distributions in the family have the same source marginal distribution $P_X$, which has $P_X(\Lo) = 1/3, P_X(\Li) = P_X(\Ri) = \frac{1}{c_1 n_P}$, $P_X([1/3 + r, 2/3 - r]) = \frac{5}{12} - \frac{2}{c_1 n_P}$, and $P_X(\Ro) = \frac{1}{4}$.
The constant $c_1$ is set to $32$, the reason for which becomes clear in \Cref{claim: lower bound typical samples} . The labels for the source are $Y_P(\Lo) = Y_P(\Ro) = +1$, and for the rest of intervals the labels are the same as $Q$ for $\sigma$.

The target marginal distribution $Q_{X, \sigma} (\Li)$ and $Q_{X, \sigma} (\Ri)$ depends on $\sigma_2$. If 
 $\sigma_2 = +1$, set $Q_{X, (\sigma_1, + 1)} (\Li) = \left(\frac{1}{c_1 n_P}\right)^{1/\rho_a}$ and 
 $Q_{X, (\sigma_1, + 1)} (\Ri) = \left(\frac{1}{c_1 n_P}\right)^{1/\rho_b}$, while if $\sigma_2 = -1$, 
  $Q_{X, (\sigma_1, -1)} (\Li) = \left(\frac{1}{c_1 n_P}\right)^{1/\rho_b}$ and 
 $Q_{X, (\sigma_1, -1 )} (\Ri) = \left(\frac{1}{c_1 n_P}\right)^{1/\rho_a}$.
 
Let $\Delta \defeq \left(\frac{1}{c_1 n_P}\right)^{1/\rho_a} - \left(\frac{1}{c_1 n_P}\right)^{1/\rho_b}$,  for the rest of the intervals, $Q_X(\Lo) = Q_X(\Ro) = \frac{1}{2} \Delta$, and finally $Q_X([1/3 + r, 2/3 - r]) = 1 - 2 \left(\frac{1}{c_1 n_P}\right )^{1/\rho_a}$.

For all $\sigma$, labels are noiseless. Let $Y_{\sigma}(A)$ denote the label of the set $A$ under $\sigma$.
We set $Y_{Q,\sigma} (\Lo) = -\sigma_1 \sigma_2$, $Y_{Q, \sigma}(\Ro) = \sigma_2$, $Y_\sigma(\Li) = Y_\sigma(\Ri) = \sigma_1$ and $Y([1/3 + r, 2/3 - r]) = -1$.

We make the following two claims, which imply statements (i) and (ii) of the theorem. Additionally, $\beta_{P, \sigma} = \beta_{Q, \sigma} = 1$, since the labels are noiseless.

\begin{claim}
     For every $\sigma$ and $i \in \{1, 2\}$, $\E_{Q_\sigma}(\hstar_{P_\sigma, i}) = 0$.
\end{claim}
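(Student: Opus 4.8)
Proof proposal for the Claim ($\E_{Q_\sigma}(\hstar_{P_\sigma, i}) = 0$ for all $\sigma$ and $i \in \{1,2\}$):

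The plan is to exhibit an explicit $Q_\sigma$-risk minimizer over each $\Hyp_i$ and to check that it coincides with (for $i=1$), or ties with (for $i=2$), the source minimizer $\hstar_{P_\sigma,i}$. Write $h_1^{+1}:=h_1'$, $h_1^{-1}:=h_1$, $h_2^{+1}:=h_2'$, $h_2^{-1}:=h_2$, and abbreviate $a:=(1/(c_1n_P))^{1/\rho_a}$, $b:=(1/(c_1n_P))^{1/\rho_b}$, so that $a>b$ and $\Delta=a-b$. Two bookkeeping facts drive the argument. First, on the support $[1/9,1]$ the classifiers $h_1$ and $h_1'$ agree except on $\Ri$, where $h_1\equiv-1$ and $h_1'\equiv+1$; likewise $h_2$ and $h_2'$ agree except on $\Li$, where $h_2\equiv-1$ and $h_2'\equiv+1$. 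Second, all four classifiers equal $-1$ on $[1/3+r,2/3-r]$, a region labelled $-1$ under both $P_\sigma$ and $Q_\sigma$, so it never contributes to any risk; hence every risk reduces to a tally over $\Lo,\Li,\Ri,\Ro$ alone, with $Q_{X,\sigma}(\Lo)=Q_{X,\sigma}(\Ro)=\tfrac12\Delta$ and $\{Q_{X,\sigma}(\Li),Q_{X,\sigma}(\Ri)\}=\{a,b\}$ (which of the two equals $a$ is controlled by $\sigma_2$).

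First I would pin down $\hstar_{P_\sigma,i}$. Since $Y_{P_\sigma}\equiv\sigma_1$ on both $\Li$ and $\Ri$, each of positive $P_X$-mass, the first fact shows that under $P_\sigma$, $h_1^{(\sigma_1)}$ strictly beats $h_1^{(-\sigma_1)}$ and $h_2^{(\sigma_1)}$ strictly beats $h_2^{(-\sigma_1)}$; and since $R_{P_\sigma}(h_2^{(t)})=R_{P_\sigma}(h_1^{(t)})-1/12$ for $t=\pm1$, the interval classifier wins at level $2$. Thus $\hstar_{P_\sigma,1}=h_1^{(\sigma_1)}$ and $\hstar_{P_\sigma,2}=h_2^{(\sigma_1)}$, strictly, so the tie-breaking clause of Assumption \ref{assump:hstarinclass} is vacuous here. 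For $i=1$ the claim is then immediate: $Y_{Q_\sigma}\equiv\sigma_1$ on $\Ri$, which has positive $Q_{X,\sigma}$-mass, so the same argument via the first fact shows the $Q_\sigma$-minimizer over $\Hyp_1=\{h_1,h_1'\}$ is again $h_1^{(\sigma_1)}=\hstar_{P_\sigma,1}$, whence $\E_{Q_\sigma}(\hstar_{P_\sigma,1})=0$.

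For $i=2$ I would exploit a symmetry. On the set $D^{(t)}$ where $h_2^{(t)}$ disagrees with $h_1^{(t)}$ (namely $\Lo\cup\Ro$ for $t=-1$ and $\Lo\cup\Li\cup\Ri\cup\Ro$ for $t=+1$), $h_2^{(t)}$ is the pointwise negation of $h_1^{(t)}$, while outside $D^{(t)}$ both equal $-1$ and both classify correctly under $Q_\sigma$ — the only nontrivial case here is $t=-1$, where this uses $Y_{Q_\sigma}\equiv\sigma_1$ on $\Li,\Ri$. Consequently $R_{Q_\sigma}(h_1^{(\sigma_1)})+R_{Q_\sigma}(h_2^{(\sigma_1)})=Q_{X,\sigma}(D^{(\sigma_1)})$. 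A short computation shows $\min_{h\in\Hyp_1}R_{Q_\sigma}(h)$ equals $a$ when $\sigma_1=+1$ and $\tfrac12\Delta$ when $\sigma_1=-1$, is attained at $h_1^{(\sigma_1)}$ (by the $i=1$ step), and satisfies the numerical identity $Q_{X,\sigma}(D^{(\sigma_1)})=2\min_{h\in\Hyp_1}R_{Q_\sigma}(h)$; plugging this in gives $R_{Q_\sigma}(h_2^{(\sigma_1)})=\min_{h\in\Hyp_1}R_{Q_\sigma}(h)$. Finally, no element of $\Hyp_2$ beats $\min_{h\in\Hyp_1}R_{Q_\sigma}(h)$: the threshold classifiers cannot (one attains it, the other exceeds it by $Q_{X,\sigma}(\Ri)>0$), and a direct four-region tally gives $R_{Q_\sigma}(h_2^{(-\sigma_1)})\ge\min_{h\in\Hyp_1}R_{Q_\sigma}(h)$. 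Therefore $\hstar_{P_\sigma,2}=h_2^{(\sigma_1)}$ is a $Q_\sigma$-risk minimizer over $\Hyp_2$, and $\E_{Q_\sigma}(\hstar_{P_\sigma,2})=0$.

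The argument is entirely elementary; the only thing that demands care is keeping straight, across the four choices of $\sigma=(\sigma_1,\sigma_2)$, both the $Q_\sigma$-labels of the four active regions ($Y_{Q_\sigma}(\Lo)=-\sigma_1\sigma_2$, $Y_{Q_\sigma}(\Ro)=\sigma_2$, $Y_{Q_\sigma}(\Li)=Y_{Q_\sigma}(\Ri)=\sigma_1$) and which of $\Li,\Ri$ carries mass $a$ versus $b$. Laying this out as a $4\times4$ table of ``mass-in-error'' entries renders the case check mechanical, so the main obstacle is bookkeeping rather than any conceptual difficulty.
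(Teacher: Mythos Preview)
Your proof is correct and takes essentially the same approach as the paper's: both are elementary case checks over the four hypotheses $h_1,h_1',h_2,h_2'$, identifying the $Q_\sigma$-minimizers and matching them to the $P_\sigma$-minimizers. The paper organizes the case split by $\sigma_1$ and leaves the identification of $\hstar_{P_\sigma,i}$ largely implicit, whereas you split by level $i$, pin down $\hstar_{P_\sigma,i}$ explicitly, and for $i=2$ replace a direct four-region tally with the complementarity identity $R_{Q_\sigma}(h_1^{(\sigma_1)})+R_{Q_\sigma}(h_2^{(\sigma_1)})=Q_{X,\sigma}(D^{(\sigma_1)})=2\min_{h\in\Hyp_1}R_{Q_\sigma}(h)$; this is a tidier bookkeeping device but not a different idea.
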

\begin{proof}
For every $\sigma$ under $Q_\sigma$ there are two risk minimizers, one in $\Hyp_1$ and another in $\Hyp_2 \setminus \Hyp_1$. Specifically, when $\sigma_1 = +1$, both $h'_2$ and $h'_1$ are risk minimizers, with risk $\left(\frac{1}{c_1 n_P}\right )^{1/\rho_a}$, since the one that mislabels the inner interval with mass $\left(\frac{1}{c_1 n_P}\right )^{1/\rho_b}$ will also mislabel $\Lo \cup \Ro$.

    On the other hand, when $\sigma_1 = -1$, since the regions $\Lo$ and $\Ro$ have the same sign, each of $h_1$ and $h_2$ will mislabel exactly one of them, which results in the minimum risk of $\frac{\Delta}{2}$.
    It is easy to see that both of $h'_1$ and $h'_2$ have a strictly larger risk. 
\end{proof}

\begin{claim} The following holds for every value of $\sigma_1$. 
If $\sigma_2 =1$, we have $\rho_1 = \rho_b $ and $\rho_2 = \rho_a$. 
Otherwise, for $\sigma = -1$, we have 
$\rho_1 = \rho_a $ and $\rho_2 = \rho_b$.
Furthermore, for all $\sigma$, $C_{\rho_1} = C_{\rho_2} = 1$.
\end{claim}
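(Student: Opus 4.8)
The plan is to pin down, for each class $\Hyp_i$ and each parameter $\sigma$, the source risk minimiser $\hstar_{P_\sigma,i}$, and then read the minimal transfer exponent directly off the (finite) list of excess‑risk pairs $\big(\E_{P_\sigma}(h,\hstar_{P_\sigma,i}),\ \E_{Q_\sigma}(h,\hstar_{P_\sigma,i})\big)$, $h\in\Hyp_i$. Since there are only four classifiers $h_1,h_1',h_2,h_2'$, all piecewise constant, and the marginals are piecewise uniform with noiseless labels, every risk is an explicit signed sum of interval masses, so this is a direct computation.

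First I would locate the source minimisers: the $P_\sigma$-risk of each of the four classifiers is a sum of interval masses of the form $1/3$ or $1/4$, plus a correction of order $1/(c_1\np)$ whose precise value depends on $\sigma_1$; a short check shows the minimiser is unique in each class — in $\Hyp_1$ it is $h_1$ if $\sigma_1=-1$ and $h_1'$ if $\sigma_1=+1$, in $\Hyp_2$ it is $h_2$ if $\sigma_1=-1$ and $h_2'$ if $\sigma_1=+1$ — so no tie‑breaking is needed, and by the previous claim these also minimise $R_{Q_\sigma}$, whence $\E_{Q_\sigma}(h,\hstar_{P_\sigma,i})\ge 0$ for all $h$. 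Next I would isolate the single binding comparison in each class. In $\Hyp_1$ the only $h\neq\hstar_{P_\sigma,1}$ is the other threshold, which disagrees with $\hstar_{P_\sigma,1}$ exactly on $\Ri$; since $Y_P(\Ri)=Y_{Q,\sigma}(\Ri)=\sigma_1$, it is the classifier that errs there under both distributions, so $\E_{P_\sigma}(h,\hstar_{P_\sigma,1})=P_X(\Ri)=1/(c_1\np)$ and $\E_{Q_\sigma}(h,\hstar_{P_\sigma,1})=Q_{X,\sigma}(\Ri)$. In $\Hyp_2$ the pair $\{h_2,h_2'\}$ plays the same role — they disagree only on $\Li$, with $\E_{P_\sigma}=P_X(\Li)=1/(c_1\np)$ and $\E_{Q_\sigma}=Q_{X,\sigma}(\Li)$. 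Since $Q_{X,\sigma}(\Ri)=(1/(c_1\np))^{1/\rho_a}$ and $Q_{X,\sigma}(\Li)=(1/(c_1\np))^{1/\rho_b}$ when $\sigma_2=-1$ (and the two are swapped when $\sigma_2=+1$), and since $t\mapsto t^{1/\rho}$ is increasing in $\rho$ for the fixed base $t=1/(c_1\np)\in(0,1)$, the inequality $(1/(c_1\np))^{1/\rho}\ge(1/(c_1\np))^{1/\rho_\star}$ holds with constant $1$ exactly for $\rho\ge\rho_\star$, with equality (hence minimal constant $1$) at $\rho=\rho_\star$. This yields $\rho_1=\rho_b,\ \rho_2=\rho_a$ when $\sigma_2=+1$ and $\rho_1=\rho_a,\ \rho_2=\rho_b$ when $\sigma_2=-1$, i.e.\ $\{\rho_1,\rho_2\}=\{\rho_a,\rho_b\}$ with $C_{\rho_1}=C_{\rho_2}=1$; and $\beta_{P,\sigma}=\beta_{Q,\sigma}=1$ is immediate from noiselessness.

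The step I expect to be the real obstacle is checking that no \emph{other} comparison in $\Hyp_2$ degrades the exponent, i.e.\ that $h_1$ and $h_1'$ measured against $\hstar_{P_\sigma,2}$ are slack. Each disagrees with $\hstar_{P_\sigma,2}$ on the heavy‑$P$‑mass intervals $\Lo$ and/or $\Ro$ (masses $1/3$, $1/4$), so a signed‑mass count gives $\E_{P_\sigma}(h,\hstar_{P_\sigma,2})\ge 1/3-1/4=1/12$, whereas $\E_{Q_\sigma}(h,\hstar_{P_\sigma,2})$ reduces to at most one $Q$-interval mass and is therefore $\le(1/(c_1\np))^{1/\rho_a}$. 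Here the constants bite: the hypothesis $(1/(32\np))^{1/\rho_a}\le 1/(32\nq)\le 1/32$ together with $c_1=32$ gives $(1/(c_1\np))^{1/\rho_a}\le 1/32<1/12\le(1/12)^{1/\rho}\le\E_{P_\sigma}(h,\hstar_{P_\sigma,2})^{1/\rho}$ for every $\rho\ge1$, so $C_\rho\,\E_{P_\sigma}(h,\hstar_{P_\sigma,2})^{1/\rho}\ge\E_{Q_\sigma}(h,\hstar_{P_\sigma,2})$ already holds with $C_\rho=1$, and these comparisons never constrain $\rho$. One conceptual caveat to state up front: for a finite class with a unique source minimiser every $\rho>0$ is technically a transfer exponent for some (blowing‑up) $C_\rho$, so ``minimal'' is understood here with $C_\rho\le1$ — equivalently, $C_\rho$ bounded independently of $\np$, since for $\rho<\rho_\star$ the binding comparison forces $C_\rho\ge(c_1\np)^{1/\rho-1/\rho_\star}\to\infty$.
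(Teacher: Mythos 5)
Your argument is correct and follows essentially the same structure as the paper's proof: identify the $P$-minimizer in each $\Hyp_i$, observe that the binding comparison in $\Hyp_1$ is the disagreement on $\Ri$ (source mass $\frac{1}{c_1 n_P}$, target mass $(\frac{1}{c_1 n_P})^{1/\rho_b}$ or $(\frac{1}{c_1 n_P})^{1/\rho_a}$ depending on $\sigma_2$), that the binding comparison in $\Hyp_2$ is $h_2$ vs.\ $h_2'$ on $\Li$, and that the $\Hyp_1$-comparisons within $\Hyp_2$ are slack because they force a source excess risk $\ge 1/12$ which dominates any target excess risk $\le (\frac{1}{c_1 n_P})^{1/\rho_a} \le 1/32$. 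One small imprecision: the phrase ``reduces to at most one $Q$-interval mass'' is not literally what happens in every case (for instance, with $\sigma = (+1,-1)$ the excess $Q$-risk of $h_1$ against $h_2'$ comes from a signed cancellation across four intervals that happens to equal $(\frac{1}{c_1 n_P})^{1/\rho_a}$), but the stated upper bound is correct and matches the paper's assertion; your closing remark about interpreting minimality via $C_\rho\le 1$ is a valid clarification the paper leaves implicit.
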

\begin{proof}
    First consider $\Hyp_1$.
    Suppose that $\sigma_2 = 1$, since $Q_{X, \sigma}(\Ri)= \left(\frac{1}{c_1 n_P}\right)^{1/\rho_b}$, whichever of $h_1$ or $h'_1$ that is not a risk minimizer under source and target, will have excess risk of $\frac{1}{c_1 n_P}$ under source and  $\left(\frac{1}{c_1 n_P}\right)^{1/\rho_b}$ under target, which means that $\rho_b$ is a transfer exponent with coefficient one.
    When $\sigma_2 = -1$, since the region where $h_1$ and $h'_1$ differ has mass $\frac{1}{c_1n_P}$ under source and $\left(\frac{1}{c_1 n_P}\right)^{1/\rho_a}$ under target,  $\rho_a$ is a transfer exponent with respect to $\Hyp_1$ with coefficient one. 
    
    For $\Hyp_2$, note that every $h \in \Hyp_1$ has an excess risk of at least $1/3 - 1/4 = 1/12 >  \frac{1}{c_1 n_P} = \frac{1}{32 n_P}$ under source and excess risk of at most  $\left( \frac{1}{c_1 n_P}\right)^{1/\rho_a}$ under target, so the transfer exponent condition  with $\rho_b$ or $\rho_a$  and coefficient one holds trivially.
    For hypotheses that are in $\Hyp_2 \setminus \Hyp_1$, since one of them is a risk minimizer, and the region they differ has mass $\left(\frac{1}{c_1 n_P}\right)^{1/\rho_a}$ or $\left(\frac{1}{c_1 n_P}\right)^{1/\rho_b}$  under target and $\frac{1}{c_1n_P}$ under source, then depending on $\sigma_2$, either   $\rho_b$ or $\rho_a$ would be a transfer exponent with coefficient one.  
\end{proof}

The next proposition shows that for every possibly improper learner, there is a distribution in the family under which the learner has high excess risk. The proof is given in the appendix.

\begin{proposition}\label{prop: adaptivitiy bad distributions}
Let $c_1 = 32$ in the construction.
For any classifier $\tilde{h}$, possibly improper, there exists $\sigma \in \{\pm 1\}^2$ such that $\E_{Q_{\sigma}} (\tilde{h}) \ge \lv $.
\end{proposition}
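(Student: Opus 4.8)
\textbf{Proof proposal for Proposition~\ref{prop: adaptivitiy bad distributions}.}

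The plan is to exploit the fact that, with probability bounded below by a constant, neither the $\np$ source samples nor the $\nq$ target samples carry any information about $\sigma$, so that the learner effectively reduces to a fixed classifier, and then to show combinatorially that every fixed classifier is bad on at least one of the four $Q_\sigma$'s. Write $m_a\defeq\paren{\tfrac1{32\np}}^{1/\rho_a}$ and $m_b\defeq\paren{\tfrac1{32\np}}^{1/\rho_b}$, so $m_a>m_b$ (since $\rho_a>\rho_b\ge1$ and $\tfrac1{32\np}<1$) and $\Delta=m_a-m_b$. Introduce the ``uninformative event'' $\mathcal{T}$: no source sample falls in $\Li\cup\Ri$ and no target sample falls in $\Lo\cup\Li\cup\Ri\cup\Ro$. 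Since $P_X(\Li\cup\Ri)=\tfrac1{16\np}$ and $Q_{X,\sigma}(\Lo\cup\Li\cup\Ri\cup\Ro)=\Delta+(m_a+m_b)=2m_a\le\tfrac1{16\nq}$ --- the last step being exactly the hypothesis $\paren{\tfrac1{32\np}}^{1/\rho_a}\le\tfrac1{32\nq}$ with $c_1=32$ --- a union bound over samples gives $\Prob_{P_\sigma^\np\times Q_\sigma^\nq}\brackets{\mathcal{T}}\ge1-\tfrac1{16}-\tfrac1{16}=\tfrac78$. All these masses are the same for every $\sigma$, so $\Prob\brackets{\mathcal{T}}$ is $\sigma$-free; and conditioned on $\mathcal{T}$ every source sample lies in $\Lo\cup(\text{middle})\cup\Ro$ and every target sample in the middle, where both the marginals and the labels of $P_\sigma,Q_\sigma$ agree across all four $\sigma$ (source labels $+1$ on $\Lo,\Ro$ and $-1$ on the middle; target label $-1$ on the middle). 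Hence, conditioned on $\mathcal{T}$, the joint law of the data, and therefore the conditional law of the (possibly improper) output $\tilde h$, does not depend on $\sigma$.

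Next I would establish the combinatorial inequality: for every classifier $g$ (proper or improper), $\tfrac14\sum_{\sigma\in\{\pm1\}^2}\E_{Q_\sigma}(g)\ge\tfrac{m_a}4$. Write $R_{Q_\sigma}(g)$ as the sum over the five intervals of (mass)$\times$(conditional error of $g$ on that interval), and average over the four $\sigma$. The middle term is nonnegative, so drop it. The $\Lo$ and $\Ro$ terms each average to $\Delta/4$ regardless of $g$, since their masses are constant ($\Delta/2$) while their target labels $-\sigma_1\sigma_2$, resp.\ $\sigma_2$, equal $+1$ for exactly two of the four $\sigma$ and $-1$ for the other two. The $\Li$ and $\Ri$ terms each average to $(m_a+m_b)/4$ regardless of $g$, since over the four $\sigma$ each of the two masses $\{m_a,m_b\}$ is paired once with label $+1$ and once with label $-1$. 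Thus $\tfrac14\sum_\sigma R_{Q_\sigma}(g)\ge\tfrac\Delta2+\tfrac{m_a+m_b}2=m_a$. Subtracting the optimal $Q_\sigma$-risks --- by the proof of the first Claim above these equal $m_a$ when $\sigma_1=+1$ and $\Delta/2$ when $\sigma_1=-1$, so $\tfrac14\sum_\sigma R_{Q_\sigma}(\hstar_{Q_\sigma})=\tfrac{3m_a-m_b}4$ --- gives $\tfrac14\sum_\sigma\E_{Q_\sigma}(g)\ge m_a-\tfrac{3m_a-m_b}4=\tfrac{m_a+m_b}4\ge\tfrac{m_a}4$. In particular, for every $g$, $\max_\sigma\E_{Q_\sigma}(g)\ge\tfrac{m_a}4\ge\tfrac14\cdot\tfrac1{32}\paren{\tfrac1\np}^{1/\rho_a}=\tfrac1{128}\paren{\tfrac1\np}^{1/\rho_a}\ge\lv$, using $32^{1/\rho_a}\le32$.

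To finish, I would glue the two facts. For each realization $g$ of $\tilde h$ the previous step supplies a $\sigma$ with $\E_{Q_\sigma}(g)\ge\lv$, hence $\tfrac14\sum_\sigma\ind{\E_{Q_\sigma}(g)\ge\lv}\ge\tfrac14$ pointwise in $g$. Taking the expectation of this bound under the $\sigma$-free conditional law of $\tilde h$ given $\mathcal{T}$, and using $\Prob\brackets{\mathcal{T}}\ge\tfrac78$ (also $\sigma$-free), one gets $\tfrac14\sum_\sigma\Prob_{P_\sigma^\np\times Q_\sigma^\nq}\brackets{\E_{Q_\sigma}(\tilde h)\ge\lv}\ge\tfrac78\cdot\tfrac14=\tfrac7{32}>\tfrac18$; therefore some $\sigma$ satisfies $\Prob_{P_\sigma^\np\times Q_\sigma^\nq}\brackets{\E_{Q_\sigma}(\tilde h)\ge\lv}\ge\tfrac18$, which is the desired conclusion.

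The step I expect to be the main obstacle is the combinatorial one: it works only because the construction is tuned so that, after averaging over the four $\sigma$, all of $g$'s freedom on $\Lo,\Li,\Ri,\Ro$ cancels --- this is precisely why the $\Lo,\Ro$ labels are $-\sigma_1\sigma_2$ and $\sigma_2$, and why $\sigma_2$ swaps the $\{m_a,m_b\}$ masses while $\sigma_1$ sets the $\Li,\Ri$ labels --- and one must normalize against the genuinely $\sigma$-dependent minimal risks $R_{Q_\sigma}(\hstar_{Q_\sigma})$ rather than against $0$, which is exactly what costs the constant factor relative to the clean value $m_a$. The conditioning step is routine but does need the stated relation between $\np,\nq,\rho_a$ to ensure that the informative target regions go unsampled with high probability.
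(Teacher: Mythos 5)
Your second paragraph is a correct --- and cleaner --- proof of the proposition, but it takes a genuinely different route from the paper. Where the paper splits into proper estimators and three improper sub-cases based on the biases $\el,\elin,\erin,\er$, you instead average the risk over the four $\sigma$'s and observe that the construction is tuned so the $g$-dependence cancels interval by interval: on $\Lo$ and $\Ro$ the mass $\Delta/2$ is $\sigma$-free and the label ($-\sigma_1\sigma_2$, resp.\ $\sigma_2$) is $+1$ for exactly two of the four $\sigma$'s, while on $\Li$ and $\Ri$ the mass pair $\{m_a,m_b\}$ is swapped by $\sigma_2$ and the label is $\sigma_1$, so each mass is paired once with label $+1$ and once with $-1$. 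This gives $\tfrac14\sum_\sigma R_{Q_\sigma}(g)\ge m_a$ for every $g$; subtracting $\tfrac14\sum_\sigma R_{Q_\sigma}(\hstar_{Q_\sigma})=\tfrac{3m_a-m_b}{4}$ yields $\max_\sigma\E_{Q_\sigma}(g)\ge\tfrac{m_a+m_b}{4}\ge\tfrac{m_a}{4}$, which is even slightly stronger than the $\tfrac{m_a}{8}$ the paper's worst case (Case 3) produces, and comfortably exceeds $\lv$. The advantage of your averaging identity is that it absorbs all of the paper's cases into one calculation with no geometric bookkeeping; the disadvantage is that it needs the exact value of $\tfrac14\sum_\sigma R_{Q_\sigma}(\hstar_{Q_\sigma})$, which you correctly read off from the first Claim's computation of the per-$\sigma$ minimal risks ($m_a$ when $\sigma_1=+1$, $\Delta/2$ when $\sigma_1=-1$).

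You have, however, misread the scope of the proposition. It is a purely deterministic statement about a \emph{fixed} classifier $\tilde h$ --- there are no samples and no probability in it. Your first and third paragraphs (the uninformative event $\mathcal{T}$, the hypothesis $\paren{\tfrac1{32\np}}^{1/\rho_a}\le\tfrac1{32\nq}$, and the $7/32$ bound) reproduce the proof of Theorem~\ref{thm: simple rho lower bound }, which invokes the proposition as a black box via the event $B$ of Claim~\ref{claim: lower bound typical samples}. That material is correct but misplaced: it belongs to the surrounding theorem, and here it obscures the fact that what you actually need --- and what your middle paragraph delivers --- is the combinatorial claim alone.
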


Let $\Pi_{\sigma} \defeq P^{\np} \times Q^\nq$ and $S_\sigma \sim \Pi_\sigma$ be the source and target samples. 
The next claim defines the event $B$ and lower bounds its probability.

\begin{claim} \label{claim: lower bound typical samples}
Let $B$ be the event that of all $n_P$ source and $n_Q$ target samples fall in the intervals $\Lo \cup [1/3 + r, 2/3 - r] \cup \Ro$ under source  and $[1/3 + r, 2/3 - r] $ under target. Then we may choose $c_1$ (from the definition of marginal distributions) such that for all $\sigma \in \{\pm 1\}^2$, $\Pi_\sigma [B] \ge  7/8$.

\end{claim}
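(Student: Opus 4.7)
The plan is to prove this by a straightforward union bound on the complementary event, exploiting the fact that the construction has been carefully calibrated so that the bad (low-mass) regions are small enough relative to the sample sizes.

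First, I would identify the "bad" regions for each sample type. A source sample contributes to $B^c$ iff it lands in $\Li \cup \Ri$, which by construction has $P$-mass $P_X(\Li \cup \Ri) = 2/(c_1 n_P)$. A target sample contributes to $B^c$ iff it lands in $\Lo \cup \Li \cup \Ri \cup \Ro$. Using the definition of $Q_{X,\sigma}$, regardless of $\sigma$:
\begin{align*}
Q_{X,\sigma}(\Lo) + Q_{X,\sigma}(\Ro) &= \Delta, \\
Q_{X,\sigma}(\Li) + Q_{X,\sigma}(\Ri) &= \left(\tfrac{1}{c_1 n_P}\right)^{1/\rho_a} + \left(\tfrac{1}{c_1 n_P}\right)^{1/\rho_b}.
\end{align*}
Adding and substituting $\Delta = (1/(c_1 n_P))^{1/\rho_a} - (1/(c_1 n_P))^{1/\rho_b}$ makes the $\rho_b$ terms cancel, leaving the $Q$-mass of the bad region equal to exactly $2(1/(c_1 n_P))^{1/\rho_a}$.

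Next, I would apply a union bound over each sample independently. For the source, the probability that at least one of the $n_P$ source samples lands in $\Li \cup \Ri$ is at most $n_P \cdot 2/(c_1 n_P) = 2/c_1$. Setting $c_1 = 32$ yields $1/16$. For the target, the probability that at least one of the $n_Q$ target samples lands in the bad region is at most $2 n_Q \cdot (1/(c_1 n_P))^{1/\rho_a}$. Invoking the hypothesis of the theorem $(1/(32 n_P))^{1/\rho_a} \le 1/(32 n_Q)$ (which is exactly why $c_1 = 32$ was chosen), this is bounded by $2 n_Q \cdot 1/(32 n_Q) = 1/16$.

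Summing the two contributions gives $\Pi_\sigma[B^c] \le 1/16 + 1/16 = 1/8$, hence $\Pi_\sigma[B] \ge 7/8$ for every $\sigma \in \{\pm 1\}^2$. The only real content is the mass computation for the target bad region—the cancellation in $\Delta$ is what makes the bound depend only on $\rho_a$, which is then handled by the theorem's hypothesis. No step here is an obstacle; the construction was engineered precisely so this union bound succeeds with room to spare.
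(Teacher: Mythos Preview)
Your proposal is correct and essentially the same as the paper's proof. The paper writes the exact probability $\Pi_\sigma[B] = (1-2/(c_1 n_P))^{n_P}(1-2(1/(c_1 n_P))^{1/\rho_a})^{n_Q}$ and applies Bernoulli's inequality $(1-x)^n \ge 1-nx$ to each factor, whereas you union-bound the complement; these are equivalent maneuvers yielding the same numerical constants $1/16 + 1/16 = 1/8$ with $c_1 = 32$, and your computation of the target bad-region mass $2(1/(c_1 n_P))^{1/\rho_a}$ via the $\Delta$-cancellation matches the paper's implicit use of $Q_X([1/3+r,2/3-r]) = 1 - 2(1/(c_1 n_P))^{1/\rho_a}$.
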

\begin{proof}
    For any $\sigma$, 
    \begin{align}
        \Pi_\sigma [B] &= \left( 1 - \frac{2}{c_1 n_P} \right)^{n_P}
        \left(1 - 2 \left(\frac{1}{c_1 n_P}\right)^{1/\rho_a} \right)^{n_Q}
        \ge 
        \left(1 - \frac{2 n_P}{c_1 n_P}\right)
        \left(1 - 2 n_Q (\frac{1}{c_1 n_P})^{1/\rho_a}\right),
    \end{align}
where the inequality follows by Bernoulli's inequality.  By the assumption that $(\frac{1}{c_1 n_P})^{1/\rho_a} \le \frac{1}{32 n_Q}$ and picking $c_1 = 32$, we can  ensure $\Pi_\sigma [B] \ge 7/8$.
\end{proof}

 \begin{proofof}{\Cref{thm: simple rho lower bound }}
     
Let $\hat{h}$ be a classifier that is output by a learning algorithm that has access to samples $S_\sigma$.
The lower bound follows by randomizing the choice of $\sigma$. Suppose that $\hat{\sigma}$ is sampled uniformly at random from $\{\pm 1\}^2$, then

\begin{align}
    \sup_{\sigma} \Prob_{\Pi_\sigma}\brackets{\E_{Q_\sigma}(\hat{h}) \ge \lv}
    \ge &
    \expecf{\hat{\sigma}} \exf{S_{\hat{\sigma}}}{ 
        \indic{\E_{Q_\sigma}(\hat{h}) \ge \lv }}
    \\ = &
    \expecf{S_{\hat{\sigma}}} \exf{\hat{\sigma} \mid S_{\hat{\sigma}}}{ 
        \indic{\E_{Q_\sigma}(\hat{h}) \ge \lv }}
    \\ \ge & 
     \expecf{S_{\hat{\sigma}}} \exf{\hat{\sigma} \mid S_{\hat{\sigma}}}{ 
        \indic{\E_{Q_\sigma}(\hat{h}) \ge \lv } \cdot \indic{B}}.
\end{align}

By construction, $\Prob_{\hat{\sigma} \mid S_{\hat{\sigma}}, B}(\sigma) = \Prob_{\hat{\sigma}}(\sigma) = 1/4$. Let $\tilde{\sigma}$ index the distribution that results in high $\hat{h}$ excess risk as in \Cref{prop: adaptivitiy bad distributions}. We have 

\begin{align}
     \expecf{S_{\hat{\sigma}}} \exf{\hat{\sigma} \mid S_{\hat{\sigma}}}{ 
        \indic{\E_{Q_\sigma}(\hat{h}) \ge \lv } \cdot \indic{B}}
    &\ge  \expecf{S_{\hat{\sigma}}} \exf{\hat{\sigma} \mid S_{\hat{\sigma}}}{ 
        \indic{\hat{\sigma}= \tilde{\sigma}} \cdot \indic{B}}
    \\ & = 
    \expecf{S_{\hat{\sigma}}} \exf{\hat{\sigma} \mid S_{\hat{\sigma}}}{ 
        \indic{\hat{\sigma}= \tilde{\sigma}} \cdot \indic{B}}
    \\ & = 
    \frac{1}{4} \cdot \Prob_{\Pi_{\hat{\sigma}}} \brackets{B} 
    \ge \frac{7}{32}.
\end{align}
\end{proofof}

\section*{Conclusion} 
 We have shown that source data can help significantly improve target risk under model selection; however, adaptive rates do not always match oracle rates in the model selection setting, as we exhibit situations where no procedure can attain oracle rates without distributional knowledge. Even more striking is that the gap between optimal adaptive rates and oracle minimax rates can be arbitrary, which is not often the case in minimax theory. However this leaves open the possibility of smaller or more controlled gaps under, e.g., further structural assumptions on the model hierarchy.

\section*{Acknowledgments}
We thank COLT reviewers and AC for useful comments that help improve the manuscript

\bibliography{refs}

\appendix
\section{Proofs of Propositions for Examples} \label{sec: main example proofs}

\subsection{Proof of \Cref{prop: threshold neural net}}

\begin{proofof}{ \Cref{prop: threshold neural net}}

We construct $P$ and $Q$ such that
marginal distributions $Q_X$ and $P_X$ are supported on $[0, 1]$. Let $Q_X$ be the uniform distribution over $[0, 1]$.  To define the source marginal distribution, we pick $L$  points $V = \{v_k\}_{k=1}^L$ on the unit interval so that each 
$v_k =  \frac{k}{L+1}$. Then define
\begin{equation} \label{eq: source one dim NN construction}
     \den_P(x) \propto \rho_m \cdot 2^{-2 m \cdot \rho_m} \abs{x - v_m}^{\rho_{m}  -1},
\end{equation}
where $v_m \in V$ is the closest point to $x$, and ties are broken by picking the smaller one, except when $x$ is in the first interval, in which case we set $m =1$. This leads to $L$ partitions $R_1, \dots R_L$ of the interval $[0, 1]$ such that for every $x \in R_i$, $\den_P(x) = \frac{\rho_i \cdot 2^{- (2 i \cdot \rho_i + i)}}{Z} \abs{x - v_i}^{\rho_i -1}$, where $Z$ is a normalizing constant. See \Cref{fig: denisity of P} for an example with $L= 3$.

\begin{figure}[H]
    \centering
    \begin{tikzpicture}
    \draw[thick,->] (0,0) -- (6,0);
    \draw[thick,->] (0,0) -- (0,4) node[anchor= south east] {$f_P$};
    
    \draw (0, 3) to [out=-80, in=150] (1.5, 0);
    \draw (1.5, 0) to [out = 30, in= -120]  (2.25, 0.75);

    \draw (2.25, 0.45) [out= -60, in=165 ] to (3, 0);
    \draw (3, 0) to [out=15, in =-130](3.75, 0.45);

    \draw (3.75, 0.23) to [out=-40, in= 170] (4.5, 0);
    \draw(4.5, 0) to  [out=10, in= -140] (6, 0.4);

    \foreach \x in {1.5,3,4.5}
        \draw [black!20!white, line width= 3] (\x, 0.5) -- (\x , -0.5);

    \foreach \x in {0.75, 3.75}
        \node at (\x, -0.25) {$+$};

    \foreach \x in {2.25, 5.25}
        \node at (\x, -0.25) {$-$};
    
    \draw [decorate,
    decoration = {calligraphic brace, mirror}, line width=1] (0,- 0.6) --  (2.25,-0.6);
    \node at   (1.125, -1) {$R_1$};
\end{tikzpicture}
    \caption{Construction of the marginal density of $P$ for the threshold neural neural net example, with $L=3$.}
    \label{fig: denisity of P}
\end{figure}
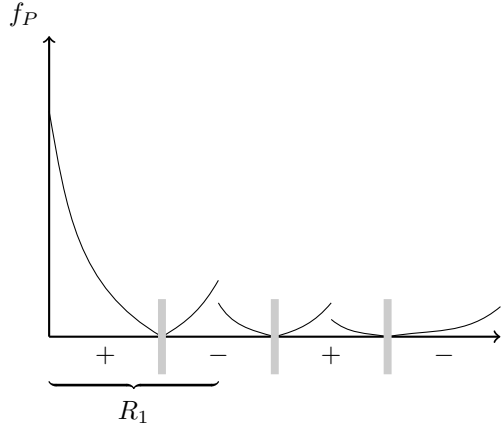

The labels for both source and target are given by $f_{\theta^*}$, where $\theta^* = (L, a^*, r^*, w^*, b^*)$ is a set of parameters for a risk minimizer. We pick these parameters such that $i^*_P = i^*_Q = L$ as follows.
First, set $w^*$ to the all ones vector and $b_i^* = v_i$ for all $i \le L$. The boundaries $b_i^*$ divide $[0, 1]$ into $L + 1$ intervals, distinct from the $L$ regions $R_1, \cdots R_L$. 
Next claim shows that we can pick $ a^*, r^*$ such that the label for these intervals are alternating, so that every point $b^*_i = v_i$ is indeed a decision boundary. Note that this leads to $\beta_{P, i} = \beta_{Q, i} = 1$. Proofs for all the claims in this proof appear at the end of this section.

\begin{claim}\label{prop: realizing alternating signs}
 Let $ 0 < b'_1 < \dots < b'_i < 1$ be an increasing sequence of points in $[0, 1]$ that partition the unit interval into $i + 1$ intervals $I_1 = [0, b'_1], I_2 = (b'_1, b'_2],  \dots,$ and $ I_{i+1} = (b'_i, 1]$. For any sign pattern $\sigma \in \{ \pm 1\}^{i + 1}$, there exists a set of parameters $\theta$, such that $f_\theta$
maps any $x \in I_j$ to $\sigma_j$ for all $ j \in [i+1]$. Furthermore, any two layer threshold neural net of the form \cref{eq: def two layer threshold}, that is, any $\theta$,  with $i$ hidden units can lead to at most $i$ decision boundaries.
\end{claim}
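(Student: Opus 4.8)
The plan is to prove the two assertions separately, since they are essentially converse in flavor. For the first (realizability of an arbitrary sign pattern), I would proceed constructively. Given the increasing breakpoints $0 < b'_1 < \dots < b'_i < 1$, I set $w^* $ to the all-ones vector and $b^*_j = b'_j$, so that the inner activation $\sgn{x - b'_j}$ flips from $-1$ to $+1$ exactly as $x$ crosses $b'_j$. Thus on interval $I_k$ the vector of activations is $(\,\underbrace{+1,\dots,+1}_{k-1},\underbrace{-1,\dots,-1}_{i-k+1}\,)$, and the pre-sign output on $I_k$ is the partial-sum expression $S_k \defeq \sum_{j<k} a_j - \sum_{j\ge k} a_j + r$. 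Note that $S_{k+1} - S_k = 2a_k$, so the increments between consecutive intervals are exactly $2a_1, \dots, 2a_i$, fully controllable. Now I would choose the $a_j$ and $r$ recursively so that $\sgn{S_k} = \sigma_k$ for each $k$: pick any $S_1$ with the correct sign and magnitude at least $1$, then having fixed $S_k$ choose $a_k$ so that $S_{k+1} = S_k + 2a_k$ again has magnitude at least $1$ and sign $\sigma_{k+1}$ (always possible, since a single real step of unbounded size can hit any target value). Finally set $r = S_1 + \sum_{j\ge 1} a_j$ to be consistent with the definition of $S_1$, and nondegeneracy of the breakpoints guarantees no $x$ ever lands exactly on a boundary from the other partition — but in fact we only care about the interval labels, so this is immediate.

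For the second assertion (at most $i$ decision boundaries), the key observation is that $x \mapsto \sum_{j=1}^i a_j \sgn{w_j x - b_j} + r$ is a piecewise-constant, real-valued function of the scalar $x$, with breakpoints only at the $i$ points $x = b_j / w_j$ (discarding any unit with $w_j = 0$, which contributes a constant). Hence the real-valued pre-sign function takes at most $i+1$ distinct constant values on at most $i+1$ intervals, and applying $\sgn{\cdot}$ can only produce a sign change at one of these at most $i$ breakpoints. Therefore $h_\theta$ has at most $i$ decision boundaries. I would phrase this carefully to handle the edge cases $w_j = 0$ and coincident breakpoints $b_j/w_j = b_{j'}/w_{j'}$, both of which only decrease the count.

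I do not anticipate a serious obstacle here; the only mild care needed is in the recursive choice of the $a_j$ for the first part — one must verify that insisting $|S_k| \ge 1$ (rather than just $\sgn{S_k} = \sigma_k$) is still always achievable at each step, which it is because $S_{k+1}$ ranges over all of $\real$ as $a_k$ does. Keeping the magnitudes bounded away from zero is what later yields $\beta_{P,i} = \beta_{Q,i} = 1$ in the surrounding proof of Proposition \ref{prop: threshold neural net}, since it ensures the decision boundaries are genuine sign changes with no measure-zero ambiguity, though that consequence is drawn outside this claim. The second part is a routine structural argument about piecewise-constant functions of one variable.
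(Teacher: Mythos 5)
Your proposal is correct, and your argument for the second assertion (at most $i$ decision boundaries) is essentially the same structural observation the paper makes. For the first assertion (realizability of an arbitrary sign pattern), however, you take a genuinely different route. The paper maps the $i+1$ intervals to the points $g(x_k) \in \{\pm 1\}^i$, observes that these form an affinely independent set (a simplex), and invokes the fact that affine halfspaces in $\mathbb{R}^i$ can shatter any $i+1$ affinely independent points; the desired $(a,r)$ is then the halfspace realizing the labeling $\sigma$. You instead give a direct recursive construction: with $w$ all ones and $b_j = b'_j$, the pre-sign value on $I_k$ is $S_k = \sum_{j<k} a_j - \sum_{j\ge k} a_j + r$, so $S_{k+1} - S_k = 2a_k$, and you choose the $a_k$ sequentially to force $\sgn{S_k} = \sigma_k$, recovering $r$ at the end. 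Both are valid. The paper's shattering argument is shorter but relies on a non-constructive VC-theoretic fact and an affine-independence check; your partial-sum recursion is more elementary, fully explicit, and has the minor extra benefit that you can keep $\lvert S_k\rvert \ge 1$ uniformly, giving a quantitative margin for free. (One small caution shared by both proofs: points $x = b'_j$ sit at the closed end of $I_j$, so the convention for $\sgn{0}$ matters for those boundary points; this is measure-zero and immaterial to the surrounding risk calculations, and the paper glosses over it as well.)
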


Next claim shows that there is risk minimizer in  $\Hyp_i$ that has the same decision boundaries as the smallest $i$ decision boundaries in $\theta^*$ and  correctly labels the first $i + 1$ intervals, by matching their signs with the signs of the first $i + 1$ intervals generated by $\theta^*$.

\begin{claim} \label{claim: construction of risk minimizers for lower levels}
 For any $ i \le L$, let $\theta^*_{i} = ( i, a^{*, i}, r^{*,i}, w^{*,i}, b^{*, i})$, where for every $j \in [i]$, $b^{*,i}_ j = b_j^*$, and $w^{*,i}$ is the all ones vector. The first $i$ intervals generated by $b^{*,i}$ are the same as those of $\theta^*$, and  by \Cref{prop: realizing alternating signs} we can pick $a^{*, i}$ and $r^{*,i}$ such that $\theta^*_{i}$ makes no error in the first $ i + 1$ intervals generated by $\theta^*$. Then 
 $f_{\theta^*_{i}}$ is a unique risk minimizer over the class $\Hyp_{i}$ under $P$.
\end{claim}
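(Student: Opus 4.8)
The plan is to prove that $f_{\theta^*_i}$ is, up to $P$-null sets, the unique minimizer of the $P$-risk over $\Hyp_i$. Since the labels under $P$ are noiseless and equal to $f_{\theta^*}$, this is the same as minimizing $P(h\neq f_{\theta^*})$ over $h\in\Hyp_i$. Write $I_1=[0,v_1]$ and $I_j=(v_{j-1},v_j]$ for $2\le j\le L+1$ for the intervals on which $f_{\theta^*}$ is constant, and note that $\den_P$ is strictly positive away from the finite set $V=\{v_1,\dots,v_L\}$, so every nondegenerate interval has positive $P$-mass. By the second part of \Cref{prop: realizing alternating signs}, every $h\in\Hyp_i$ has at most $i$ sign changes on $[0,1]$.

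The first step is a boundary-relocation argument reducing to classifiers whose sign changes all lie in $V$. If $h$ has a sign change at a point $c$ in the interior of some $I_j$, chosen leftmost among such off-grid points, then $h$ is constant, say $\equiv s$, on $(v_{j-1},c]$; redefining $h$ on $(c,v_j]$ to be $s$ (when $s$ agrees with $f_{\theta^*}$ on $I_j$) or redefining $h$ on $(v_{j-1},c]$ to be the value of $f_{\theta^*}$ on $I_j$ (otherwise) deletes the change at $c$, changes the total number of sign changes by $0$, $-1$, or $-2$ — so never increases it — and strictly lowers $P(h\neq f_{\theta^*})$ by the positive mass of the overwritten subinterval. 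Iterating over the finitely many off-grid changes, every $h\in\Hyp_i$ is dominated, strictly unless it already has all sign changes in $V$, by a classifier in $\Hyp_i$ with sign changes only in $V$; hence any $P$-risk minimizer over $\Hyp_i$ must be such a ``grid'' classifier.

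Second, a grid classifier is determined by its set $S\subseteq V$ of sign changes, $|S|\le i$, and a starting sign; since $f_{\theta^*}$ changes sign at every point of $V$, one gets $P(g_S\neq f_{\theta^*})=\sum_{j\in E(S)}P(I_j)$, where $j\in E(S)$ exactly when $|\bar S\cap\{v_1,\dots,v_{j-1}\}|$ ($\bar S:=V\setminus S$) has the parity that mislabels $I_j$. A wrong starting sign puts $1\in E(S)$ and so costs at least $P(I_1)$; for the correct starting sign, listing $\bar S=\{v_{k_1}<v_{k_2}<\cdots\}$ one finds $E(S)$ is a union of index blocks $\{k_1+1,\dots,k_2\}\cup\{k_3+1,\dots,k_4\}\cup\cdots$, so $k_1+1\in E(S)$, and $|\bar S|\ge L-i$ forces $k_1\le i+1$ with equality only for $S=S_0:=\{v_1,\dots,v_i\}$, for which $E(S_0)$ consists exactly of the $j\ge i+2$ with $j-i$ even — which is precisely the set of intervals mislabeled by $f_{\theta^*_i}$ (its sign changes are $S_0$, it is correct on $I_1,\dots,I_{i+1}$, and on $(v_i,1]$ it has the sign of $I_{i+1}$).

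Finally, from the explicit density \eqref{eq: source one dim NN construction}: up to the common normalizer, the mass of the region $R_m$ is of order $(w\,4^{-m})^{\rho_m}$ with $w=\tfrac{1}{2(L+1)}<1$, which is strictly decreasing in $m$ — the base is below $1$ and the exponents $\rho_m$ are nondecreasing — with decay ratio bounded below by an absolute constant exceeding $2$; since each $P(I_j)$ is a fixed combination of $P(R_{j-1})$ and $P(R_j)$ (with $P(I_1)$ the largest), $j\mapsto P(I_j)$ is strictly decreasing and $\sum_{j\ge i+2}P(I_j)<P(I_{i+1})$. Hence any grid classifier other than $f_{\theta^*_i}$ has $P$-risk at least $P(I_{k_1+1})\ge P(I_{i+1})$ (at least $P(I_1)$ if its starting sign is wrong), strictly above $P(f_{\theta^*_i}\neq f_{\theta^*})=\sum_{j\ge i+2,\ j-i\text{ even}}P(I_j)$. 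Combined with the first step and $f_{\theta^*_i}\in\Hyp_i$ (first part of \Cref{prop: realizing alternating signs}), this shows $f_{\theta^*_i}$ is the unique $P$-risk minimizer over $\Hyp_i$. I expect the main obstacle to be making the relocation step fully rigorous — checking that moving one sign change cannot increase the total count once its interaction with any neighboring change is accounted for; the mass-decay estimate is a direct computation from the explicit density and is the one place where the weighting $2^{-2m\rho_m}$ with nondecreasing $\rho_m$ is used essentially.
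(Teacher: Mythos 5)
Your proposal takes a genuinely different route from the paper. The paper works directly in terms of the \emph{regions} $R_1,\dots,R_L$ (the nearest-$v_m$ partition underlying the density $\den_P$), dichotomizing classifiers into those that miss a decision boundary in some $R_{i'}$, $i'\le i$ (so incur risk at least the mass of a half-interval in $R_{i'}$, bounded below by \cref{eq: ex1 high error due to missing a region}) versus those with exactly one boundary per $R_1,\dots,R_i$ (where a local boundary-sliding argument strictly improves any non-$b^*$ placement); the separating estimate is \cref{eq: upperbound on mass of interval under source} summed as in \cref{eq: ex 1 risk of risk minimizer}. You instead work in terms of the $\theta^*$-intervals $I_1,\dots,I_{L+1}$: a relocation step first collapses every competitor to a ``grid'' classifier with sign changes in $V$, and then a parity/packing argument on the complement $\bar S$ shows that any $S\neq S_0=\{v_1,\dots,v_i\}$ of size $\le i$ must mislabel some $I_{k_1+1}$ with $k_1+1\le i+1$. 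The parity bookkeeping ($E(S)$ as a union of index blocks; $S_0$ is the unique $S$ with $|S|\le i$ and $k_1=i+1$) is a cleaner combinatorial core and separates the two roles of the decay estimate — monotonicity of $j\mapsto P(I_j)$ and $\sum_{j\ge i+2}P(I_j)<P(I_{i+1})$ — from the case analysis, which is a genuine structural simplification. The paper's route, by contrast, stays entirely inside the region picture and never needs to argue that all boundaries can be moved to grid points.

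One place you underestimate the work is the mass-decay step, not the relocation step. Your justification — ``each $P(I_j)$ is a fixed combination of $P(R_{j-1})$ and $P(R_j)$'' — is false at the boundary: $I_1$ and $I_{L+1}$ each lie \emph{entirely} within $R_1$ resp.\ $R_L$ and have full width $\tfrac{1}{L+1}=2w$ on one side of $v_1$ resp.\ $v_L$, rather than the half-width $w$, so $P(I_1)=2^{\rho_1}a_1$ and $P(I_{L+1})=2^{\rho_L}a_L$ where $a_m:=\frac{2^{-(2m\rho_m+m)}}{Z}w^{\rho_m}$, while interior intervals satisfy $P(I_j)=a_{j-1}+a_j$. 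In particular $P(I_L)>P(I_{L+1})$ does \emph{not} follow from decay of $a_m$ alone — it needs $a_{L-1}\ge (2^{\rho_L}-1)a_L$, which in turn genuinely uses both the $2^{-2m\rho_m}$ weighting and the nondecreasing $\rho_m$ to force $a_{L-1}/a_L\ge 2^{2\rho_L+1}$; with a generic polynomial weight the inequality can fail. Likewise $\sum_{j\ge i+2}P(I_j)<P(I_{i+1})$ requires keeping the $2^{\rho_L}a_L$ tail term under control, which again rides on this sharp decay. The monotonicity and sum bound you need are in fact true under the construction, so your proof goes through, but these two boundary cases have to be verified separately — they are not ``a direct computation'' from the region masses being decreasing, and omitting them is the actual gap to close, more so than the relocation step you flag.
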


In the next proposition we show that for any $i \le L$, $\rho_i$ is a transfer exponent from $P$ to $Q$ with respect to $\Hyp_i$. Intuitively, we show that whenever there is error, it is dominated by the error in the regions determined by the first $i$ thresholds.

\begin{claim} \label{claim: te for TNN}
 For every $ 1 \le i \le L$, there exists a constant $ 0 < C_{\rho_i} < \infty$ such that 
 $\rho_i$ is a transfer exponent from $P$ to $Q$ with respect to $\Hyp_i$ with coefficient  $C_{\rho_i} \le (L+1) 2^{3i +1}$.
\end{claim}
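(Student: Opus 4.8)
The plan is to work entirely with symmetric-difference measures (the labels being noiseless) and to push all the analytic content into one elementary inversion inequality dictated by the shape of $\den_P$ near the points $v_m$.

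\medskip
\noindent\emph{Step 1: noiseless reduction.} Since the labels of both $P$ and $Q$ are given by $f_{\theta^*}$, for any $g \in \Hyp_i$ we have $\Risk_\mu(g) = \mu_X(g \neq f_{\theta^*})$ for $\mu \in \{P,Q\}$. By \Cref{claim: construction of risk minimizers for lower levels}, $\hstar_{P,i} = f_{\theta^*_i}$ is the \emph{unique} $P$-risk minimizer over $\Hyp_i$, it coincides with $f_{\theta^*}$ on $[0,v_i]$, and it is the constant $\epsilon_0$ on $(v_i,1]$, where $\epsilon_0$ is the value of $f_{\theta^*}$ on $(v_i,v_{i+1}]$. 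Put $U \doteq \{h \neq f_{\theta^*}\}\cap\{\hstar_{P,i} = f_{\theta^*}\}$ and $V \doteq \{h = f_{\theta^*}\}\cap\{\hstar_{P,i} \neq f_{\theta^*}\}$; then $\E_\mu(h,\hstar_{P,i}) = \mu_X(U) - \mu_X(V)$ for $\mu\in\{P,Q\}$, and $\E_P(h,\hstar_{P,i}) = P_X(U) - P_X(V) = \E_P(h) \geq 0$ by minimality. If $\E_Q(h,\hstar_{P,i}) \leq 0$ the claim is trivial, so I assume $\E_Q(h,\hstar_{P,i}) > 0$. Note also $V \subseteq (v_{i+1},1]$ (the error set of $\hstar_{P,i}$) while $U \subseteq [0,v_i]\cup\bigl(\{f_{\theta^*} = \epsilon_0\}\cap(v_i,1]\bigr)$.

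\medskip
\noindent\emph{Step 2: the inversion inequality.} This is the one real computation. On the Voronoi cell $R_m$ of $v_m$, where $\den_P(x) = \tfrac{\rho_m 2^{-(2m\rho_m+m)}}{Z}|x-v_m|^{\rho_m-1}$ ($Z$ the normalizer), I claim that for every measurable $S \subseteq R_m$,
\[
  Q_X(S) \;\leq\; 2^{3m+2}\,P_X(S)^{1/\rho_m}.
\]
Indeed, $|x-v_m|^{\rho_m-1}$ is smallest at $v_m$, so among sets of fixed measure the integral $\int_S|x-v_m|^{\rho_m-1}\,dx$ is minimized by a set hugging $v_m$, giving $P_X(S) \geq \tfrac{2^{-(2m\rho_m+m)}}{Z}(|S|/2)^{\rho_m}$; since each Voronoi half-width is $<1$ and $2^{-2m\rho_m}\leq 1$ we get $Z\leq 2$, so inverting and using $\rho_m\geq 1$ yields $Q_X(S)=|S|\leq 2\,(Z\,2^{2m\rho_m+m})^{1/\rho_m}P_X(S)^{1/\rho_m}\leq 2^{3m+2}P_X(S)^{1/\rho_m}$. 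The crucial feature is that because $f_{\theta^*}$ has exactly one breakpoint ($v_m$) inside $R_m$, this absorbs an arbitrary, possibly multi-interval, disagreement region of $h$ within one cell with no combinatorial penalty. For $m\leq i$ one then replaces $1/\rho_m$ by $1/\rho_i$ (using $\rho_m\leq\rho_i$ and $P_X(S)\leq 1$) and $2^{3m+2}$ by $2^{3i+2}$.

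\medskip
\noindent\emph{Step 3: assembly, and the crux.} Split $[0,1]$ at $v_i$. On $[0,v_i]$ the minimizer is exact, so $U\cap[0,v_i] = \{h\neq f_{\theta^*}\}\cap[0,v_i]$; covering $[0,v_i]$ by $R_1,\dots,R_i$, applying Step 2 cell by cell, and using subadditivity of $x\mapsto x^{1/\rho_i}$ (a factor $\leq i^{1-1/\rho_i}\leq i\leq L$, together with a careful accounting of the small values of $Z$ and of the cells) bounds $Q_X(U\cap[0,v_i])$ by $(L+1)\,2^{3i+1}\,P_X(U')^{1/\rho_i}$, where $U' := \{h\neq f_{\theta^*}\}\cap[0,\tfrac{v_i+v_{i+1}}{2}]\subseteq U$ (since $\hstar_{P,i}$ and $f_{\theta^*}$ still agree on $(v_i,\tfrac{v_i+v_{i+1}}{2}]$). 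It remains to control $Q_X(U\cap(v_i,1]) - Q_X(V)$. \textbf{This is the main obstacle.} A priori this difference is strictly positive (e.g.\ $h$ may place a short ``bump'' disagreeing with $\hstar_{P,i}$ on a single $\epsilon_0$-interval of $(v_i,1]$ while producing no $V$), so the two terms do not simply cancel, and one cannot freely replace $P_X(U')$ by $\E_P(h,\hstar_{P,i})$. The idea is that $f_{\theta^*}$ alternates over the \emph{equal-length} intervals $(v_i,v_{i+1}],\dots,(v_L,1]$ while $Q_X$ is uniform, so evaluated run-by-run over $\{h=-\epsilon_0\}\cap(v_i,1]$ (at most $\lceil i/2\rceil+1$ runs, since $h$ has at most $i$ breakpoints) this difference is $O(1/(L+1))$ and is nonzero only when a run has an endpoint strictly inside $(v_i,1]$ that is a breakpoint of $h$ not shared by $f_{\theta^*}$; each such ``spent'' breakpoint forces — because $\hstar_{P,i}$ is exact on $[0,v_i]$ and is the \emph{unique} $P$-minimizer — a source-mass disagreement of at least a half-Voronoi-cell near some $v_\ell$, $\ell\leq i$, whence, using that interior cells $R_{i+1},\dots,R_{L-1}$ contribute zero net $P$-mass to $\E_P$ (the density is symmetric about each interior $v_m$) and $R_L$ contributes at most $\tfrac12 P_X(R_i^+)$ (by the growth of the exponents $2m\rho_m+m$), one gets $\E_P(h,\hstar_{P,i})\geq\tfrac12 P_X(R_i^+)\geq \tfrac12\cdot\tfrac{2^{-(2i\rho_i+i)}}{Z}\bigl(\tfrac{1}{2(L+1)}\bigr)^{\rho_i}$ whenever $Q_X(U\cap(v_i,1])-Q_X(V)>0$. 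With $\rho_i\geq 1$, $Z\leq 2$ this makes $(L+1)\,2^{3i+1}\,\E_P(h,\hstar_{P,i})^{1/\rho_i}$ at least an absolute constant, hence at least $Q_X(U\cap(v_i,1])-Q_X(V)$; combined with the $[0,v_i]$ bound (note $P_X(U')\leq P_X(U)-P_X(V)=\E_P$ in the complementary case where the $(v_i,1]$ term is $\leq 0$) and one last use of subadditivity, this gives $\E_Q(h,\hstar_{P,i})\leq (L+1)\,2^{3i+1}\,\E_P(h,\hstar_{P,i})^{1/\rho_i}$, i.e.\ $C_{\rho_i}\leq (L+1)\,2^{3i+1}$.

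\medskip
\noindent I expect Step 3, specifically the analysis on $(v_i,1]$, to be by far the delicate part: turning ``$h$ has at most $i$ breakpoints'' into a genuine lower bound on $\E_P(h,\hstar_{P,i})$ that dominates the $Q$-excess $h$ can manufacture on the very region where $\hstar_{P,i}$ itself errs. This is exactly where the normalizations $2^{-(2m\rho_m+m)}$ of \eqref{eq: source one dim NN construction} and the uniqueness/precise form of $\hstar_{P,i}$ from \Cref{claim: construction of risk minimizers for lower levels} are essential.
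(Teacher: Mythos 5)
Your overall skeleton (noiseless reduction, a per-Voronoi-cell inversion inequality $Q_X(S)\lesssim P_X(S)^{1/\rho_m}$ driven by the shape of $\den_P$, and the constraint that $h\in\Hyp_i$ has at most $i$ breakpoints) is the same machinery the paper uses. Step 2 is sound in spirit, but your bound $Z\le 2$ is far too loose. The paper's computation (via \cref{eq: upperbound on mass of interval under source}) gives $Z\le 2^{-4}$, and this matters: redo your own arithmetic in Step 3 with $Z\le 2$ and you get $(L+1)\,2^{3i+1}\,\E_P(h,\hstar_{P,i})^{1/\rho_i}\ge 1/4$, which does \emph{not} dominate the $Q$-excess that $h$ can manufacture on $(v_i,1]$. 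Your assertion that this $Q$-excess is $O(1/(L+1))$ is false as stated: taking $h\equiv-\epsilon_0$ on $(v_i,1]$ gives $Q_X(U\cap(v_i,1])\approx\tfrac{L-i}{2(L+1)}$, which can be close to $1/2$. The constant closes only with the tighter $Z$.

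Step 3 also has two logical gaps beyond the constant. First, the claim that ``interior cells $R_{i+1},\dots,R_{L-1}$ contribute zero net $P$-mass'' presupposes $h$ is constant on each such cell; but the whole premise of the ``spent breakpoint'' branch is that $h$ \emph{has} breakpoints in $(v_i,1]$, so this symmetry cancellation is not available cell-by-cell. (One can still bound the total loss from $R_{i+1},\dots,R_L$ by $\lesssim\tfrac17 P_X(R_i^+)$ using the geometric decay of $2^{-(2m\rho_m+m)}$ — which is precisely \cref{eq: upperbound on mass of interval under source} — but that is a different argument from the one you write.) Second, and more seriously, the step ``$P_X(U')\le P_X(U)-P_X(V)=\E_P$ in the complementary case'' is unjustified: the complementary hypothesis $Q_X(U\cap(v_i,1])-Q_X(V)\le 0$ is a statement about $Q$-measure, while the inequality you need, $P_X(V)\le P_X(U\setminus U')$, is about $P$-measure, and $P_X$ and $Q_X$ weight $(v_i,1]$ very differently, so one does not imply the other. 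The paper avoids all of this by an explicit case split on boundary placement: either $h$ misses a boundary in some $R_\ell$, $\ell\le i$, in which case $R_P(h)$ is lower-bounded by a half-cell mass and $R_P(\hstar_{P,i})$ is separately upper-bounded so the subtraction gives $\E_P\gtrsim(1/(L+1))^{\rho_i}$ directly (\cref{eq: ex1 lower bound on source case1}); or $h$ places exactly one boundary per $R_1,\dots,R_i$, hence is constant on $(\tfrac{v_i+v_{i+1}}{2},1]$, and either matches $\hstar_{P,i}$ there (so all disagreement lies in $[0,\tfrac{v_i+v_{i+1}}{2}]$ where $\hstar_{P,i}$ is exact and $V=\emptyset$, letting the per-cell argument go through cleanly) or disagrees everywhere there (reducing to the first case). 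This explicit casework is the missing ingredient your Step 3 is reaching for; as written, the argument would not close.
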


Next, we show that for every $1 \le i \le L$ a transfer exponent from $P$ to $Q$ with respect to $\Hyp_i$ is lower bounded by $\rho_i$.
Fix a level  $ 1 \le i \le L$, and consider a sequence of classifiers $f_{\theta(t)}$ constructed so that $\theta(t)$ matches $\theta^*_{i} $ everywhere except for the last decision boundary $b^{*, i}_{i}$. That is, $\theta(t) = (i, a^*_{i}, r^{*, i}, w^{*, i},  b^t)$, where $b^t_j = b^{*,i}_ j$
for all $ j < i$, and for $ t <  \frac{1}{2(L+1)}$,
\begin{equation}
    b^t_i = b^{*,i}_i + t .
\end{equation}

It is easy to see that the interval $[ b^{*,i}_i, b^t_i]$ is the only disagreement region between $\theta^*_{i}$  and $ \theta(t)$ and has length $t$. By the construction of $P_X$ given in \cref{eq: source one dim NN construction}, and integrating over this region, the excess risk is

\begin{equation}
    \E_P(f_{\theta(t)}, f_{\theta^*_{i}}) = C t^{\rho_{i}},
\end{equation}
for some constant $C$. Since $Q_X$ is the uniform distribution,
\begin{equation}
    \E_Q (f_{\theta(t)}, f_{\theta^*_{i}}) = t.
\end{equation}

Now we argue that the minimal transfer exponent for this level is
at least $\rho_{i}$.
Suppose for contradiction, that there exists a transfer exponent  $\tilde{\rho}_{i} < \rho_{i} $. That would imply that there exists a constant $C_{\tilde{\rho}_{i}}$ such that for every $f_\theta \in \Hyp_{i}$, 

\begin{equation}
    \frac{
        \E_Q(f_\theta, f_{\theta^*_{i}})
    }{
        \E_P(f_\theta, f_{\theta^*_{i}})^{1/\tilde{\rho}_{i}}
        }
    \le C_{\tilde{\rho}_{i}}.
\end{equation}

However, for the sequence of $f_{\theta_t}$ constructed above 
\begin{equation}
    \lim_{t \rightarrow 0}
    \frac{
        \E_Q(f_{\theta_t}, f_{\theta^*_{i}})
    }{
        \E_P(f_{\theta_t}, f_{\theta^*_{i}})^{1/\tilde{\rho}_{i}}
        }
    = \frac{t}{ (C t^{\rho_{i}})^{1/\tilde{\rho}_{i}}}
    = C' t^{1 - \frac{\rho_{i}} { \tilde{\rho}_{i}} }
    = \infty.
\end{equation}

Now that we have shown that $\rho_i$ are indeed minimal transfer exponents, we will show that for these minimal transfer exponents, $C_{\rho_i}$ is lower bounded by a function that depends only on $L+1$. Fix some $t_0 < \frac{1}{2(L+1)}$ and consider $f_{\theta_{t_0}}$.  By the same calculations as in \cref{eq: ex1 high error due to missing a region} we have $  \E_P(f_{\theta_t}, f_{\theta^*_{i}}) = \frac{2^{-(2 i \cdot \rho_i + i)}}{Z} t^{\rho_i}$ and 
\begin{align}
     \frac{
        \E_Q(f_\theta, f_{\theta^*_{i}})
    }{
        \E_P(f_\theta, f_{\theta^*_{i}})^{1/\rho_{i}}
        }
    = 
    \frac{t Z^{1/\rho_i}}{ 2^{-(2i + i/\rho_i)} t}
    = \frac{ Z^{1/\rho_i}}{ 2^{-(2i + i/\rho_i)} }.
\end{align}

Since $2^{-(2i + i/\rho_i)} \le 2^{-2i}$, it suffices to lower bound $Z^{1/\rho_i}$, note that $Z \ge 2^{-(2i\cdot \rho_i + i)} \paren{\frac{1}{L+1}}^{\rho_i} $ and plugging this into the expression above we get 
\begin{equation}
    \frac{
        \E_Q(f_\theta, f_{\theta^*_{i}})
    }{
        \E_P(f_\theta, f_{\theta^*_{i}})^{1/\rho_{i}}
        }
    \ge \frac{1}{L+1}.
\end{equation}

Finally, to see that $i^*_P = i^*_Q = L$, note that for every $1 \le i < L$, $f_{\theta^*_i}$ labels the interval $I_{i + 2}$, which starts at $v_{i  + 1}$, incorrectly, so it cannot achieve zero excess risk.
\end{proofof}

\begin{proofof}{ \cref{prop: realizing alternating signs}}

We will argue that there exists $a_\sigma \in \real^i$ and $r_\sigma \in \real$ such functions of the 
 
 \begin{equation}
     h_{a_\sigma, r_\sigma}(x) = \sgn{\sum_{j=1}^{i} \alpha_{\sigma, j} \; \sgn{ x - b'_j} + r_\sigma},
\end{equation}
 can produce the sign pattern $\sigma$. Since these functions are a restricted form of the two layer neural nets introduced in \cref{eq: def two layer threshold}, this would prove the first part of the claim.

Define the function $g: [0, 1] \mapsto \{\pm 1\}^{i+1}$ where 
\begin{equation}
    g(x)_j = \sgn{x - b'_j}.
\end{equation}
The function $g$ maps the $i + 1 $ intervals to $i + 1$ points on the unit cube. Let $x_1, \dots, x_{i+1}$ be a set of arbitrary points from each interval $I_1,\dots, I_{i+1}$, then we have 
\begin{align}
    & g(x_1) = [-1, -1, -1, \dots, -1],\\
    & g(x_2) = [+1, -1, -1, \dots, -1], \\
    & \dots \\
    & g(x_{i +1}) = [+1, +1, +1, \dots, +1].
\end{align}
Note that $g(x_1) = - g(x_2)$, but the set of vectors $g(x_1), \dots , g(x_i)$ are linearly independent.
The functions 
\begin{equation}
    h_{a, r}(x) = \sgn{a^\top g(x) + r}
\end{equation}
are  affine halfspaces parameterized by $a$ and $r$, so they can shatter any set of $i + 1$ points where there is at most two colinear points.
We take $a_\sigma$ and $r_\sigma$ to be coefficients of the affine halfspace that produces the labels $\sigma$.

To see that two layer neural nets of the form \cref{eq: def two layer threshold} parameterized by $\theta = (i, a, r, w, b)$ can have at most $i$ decision boundaries, note that adding a hidden unit can add at most one decision boundary, and when $i = 0$, there are no decision boundaries. 
\end{proofof}

\begin{proofof}{ \cref{claim: construction of risk minimizers for lower levels}}
Recall the regions $R_1, \dots, R_i$.
We first argue that if a classifier does not place a decision boundary in some region $i' \in [i]$, then its' error is larger than $\theta^*_i$, and then argue that among all the classifiers that place exactly one decision boundary in each of those regions, only the ones that have exactly the same decision boundaries as $\theta^{*, i}$ can be risk minimizers.
Suppose $\theta_i$ is some classifier that doesn't place any decision boundaries in $R_{i'}$, then it must mislabel one of the intervals to the left or right of $b^*_{i'}$, that is either the interval $I_{i'+1} \cap R_{i'}$ or $ I_{i'} \cap R_{i'}$. 
Then the risk can be lower bounded by

\begin{align} \label{eq: ex1 high error due to missing a region}
    \Risk_P(h_{\theta_i} ) &\ge P_X(I_{i'+1} \cap R_{i'}) 
    \\ &= 
    \int_{b^*_{i'}}^{\frac{v_{i'} + v_{i' + 1}}{2}}
    \frac{\rho_{i'} \cdot 2^{-(2 i' \cdot \rho_{i'} + i')}}{Z} \cdot \abs{x - b^*_{i'}}^{\rho_{i'} -1}  d x
    = \frac{ \rho_{i'} \cdot 2^{- (2 i' \cdot \rho_{i'} + i')}}{Z \rho_{i'}} \left( \frac{\size{I_{i'+1}}}{2} \right)^{\rho_{i'}} 
    \\ & = 
    \frac{2^{-(2 i' \cdot \rho_{i'} + i')}}{Z } \left( \frac{1}{2(L+1)} \right)^{\rho_{i'}}
    \\ & \ge 
    \frac{2^{-(2 i \cdot \rho_i + i)}}{Z } \left( \frac{1}{2(L+1)} \right)^{\rho_{i}}.
\end{align}
 
On the other hand, $\theta^*_{i}$ labels the first $ i + 1$ intervals correctly, so  
 \begin{equation} \label{eq: ex 1 risk of risk minimizer}
     \Risk_P(h_{\theta^*_{i}}) 
     \le  
     \sum_{j=i + 2 }^{L + 1}  P_X (I_j) .
 \end{equation}

Note that when $i = L$, by construction $\Risk_P(h_{\theta^*_{i}})  = 0$.

 For every $i +2 \le  j \le L $, we can write $ P_X (I_j) =  P_X(I_j \cap R_j) + P_X(I_j \cap R_{j-1}) = \frac{ 2^{-(2 j \cdot \rho_j + j)}}{Z} \left( \frac{1}{2(L+1)} \right)^{\rho_j} +
 \frac{ 2^{-(2 (j-1) \cdot \rho_{j-1} + j-1)}}{Z} \left( \frac{1}{2(L+1)} \right)^{\rho_{j-1}}
 $. Since for all $j$, $\rho_j \ge 1$, and
 $\rho_{j-1} \le \rho_j $,  we get that 
\begin{equation} \label{eq: upperbound on mass of interval under source}
    P_X (I_j) \le \frac{2^{-2 ((j-1) \cdot \rho_{j-1} + j-1) }}{Z } \paren{\frac{1}{L+1}}^{\rho_{j-1}}.
\end{equation}

Going back to \cref{eq: ex 1 risk of risk minimizer}, for $i < L$  we get
\begin{align}
    \Risk_P(h_{\theta^*_{i}}) 
     &\le \sum_{j=i + 2 }^{L + 1}
     \frac{2^{-(2 (j-1) \cdot \rho_{j-1} + j-1) }}{Z } \paren{\frac{1}{L+1}}^{\rho_{j-1}}
     \le 
     \frac{2^{-2 (i+ 1) \cdot \rho_{i + 1}}}{Z }
      \sum_{j=i + 2 }^{L + 1}
      2^{-j + 1} \cdot 
      \paren{\frac{1}{L+1}}^{\rho_{i + 1}}
     \\ & \le 
     \frac{2^{-(2 (i+ 1) \cdot \rho_{i + 1}  + i)}}{Z }
     \paren{\frac{1}{L+1}}^{\rho_{i + 1}}
     \\ & < 
    \frac{2^{-(2 i \cdot \rho_i + i)}}{Z } \left( \frac{1}{2(L+1)} \right)^{\rho_{i}} 
     \le 
     \Risk_P(h_{\theta_i} ),
\end{align}
so the excess risk can be lower bounded by 
\begin{align}
\Risk_P(h_{\theta_i} ) - \Risk_P(h_{\theta^*_{i}})  
    \ge 
    \frac{2^{-((2 i +1) \cdot \rho_i + i)} }{Z } \left( \frac{1}{L+1}\right)^{\rho_{i}}  \paren{1 - 2^{- \rho_{i + 1}}} 
    \ge
\frac{2^{-((2 i +1) \cdot \rho_i + i)}  }{ 2 Z } \left( \frac{1}{L+1} \right)^{\rho_{i}}  
\end{align}

and consequently for some $C_1  = \frac{2^{-((2 i +1) \cdot \rho_i + i)}  }{ 2 Z } $, 
\begin{equation}
    \label{eq: TNN erisk gap under source}
    \E_P(h_{\theta_i}, \hstar_{\theta_i}) \ge C_1 \left( \frac{1}{L+1} \right)^{\rho_{i}}  .
\end{equation}

Therefore, we have shown that any classifier $f_{\theta_i}$ that doesn't place a decision boundary in any of the first $i$ regions has a strictly larger excess risk than excess risk of  $f_{\theta^*_i}$. Then it would suffice to show that among the classifiers that place exactly one decision boundary in each of $R_1, \dots, R_i$, $f_{\theta^*_i}$ is a risk minimizer.

Let $\theta'_i$ be a function that places one decision boundary in each of the regions $R_1, \dots, R_i $. Let $v' \in [0,1] ^i$ denote the location of its' decision boundaries, and let $R_k$ be some region where $v'_k \neq b^{*}_k$, that is a region where the decision boundary is different from that of $\theta^{*}$.
Now we will argue that $\theta'_i$ cannot be a risk minimizer, since it can be modified to another classifier that has a strictly smaller risk. Without loss of generality assume that $ v'_k < b^{*}_k$, the other direction follows by the same argument. 
It must be that either $f_{\theta'_i}$ labels all the points in  the interval $(v'_k, b^*_k)$ incorrectly and is correct on the rest of $R_k$, or it mislabels at least all the points in $R_k \cap I_{k+1}$. In the latter case, as we have seen before in \cref{eq: ex1 high error due to missing a region}, since $k \in i$, 
excess risk of $f_{\theta'_i}$ would be strictly larger than excess risk of $f_{\theta^*_{i}}$.
If the only set of points that are mislabelled in $R_k$ are in the interval $(v'_k, b^*_k)$, then moving $v'_k$ right by increasing it to $b^*_k$ would eliminate error in this interval without affecting other intervals, thus strictly decreasing the 
excess risk. Therefore, any classifier that places decision boundaries not on 
$b^*_1, \dots, b^*_i$ cannot be a risk minimizer. 

Note that in terms of parameters, the risk minimizer is not unique, since there can be a family of parameters that give the same decision boundaries and signs. 
\end{proofof}

\begin{proofof} {\Cref{claim: te for TNN} }
Recall that he boundaries $v_1, \dots, v_L$ partitioned the unit interval into $L+1$  intervals $I_1, \dots, I_{L+1}$, where $ I_1 = [0, v_1], I_2 = (v_2, v_3]$ and so on. Recall that $\theta^*_{i}$ is the parameters of a risk minimizer in $\Hyp_i$, as described in \cref{claim: construction of risk minimizers for lower levels}. Based on the points in $V$, we also defined the regions $R_1, \dots , R_L$, where all the points in each region shared the same density function under $P$. Let $f_{\theta_i}$ be an arbitrary 
 element of $\Hyp_i$, by \Cref{prop: realizing alternating signs}, $f_{\theta_i}$ can have at most $i$ decision boundaries. We can break down the excess risk into the contributions from each region, 
 
 \begin{equation}
     \E_Q(f_{\theta_i}, f_{\theta^*_i}) = \sum_{j=1}^L Q_X ( f_{\theta_i}(X)\neq f_{\theta^*_i}(X) \land X \in R_j)
     = \sum_{j=1}^L Q_X(E_j),
 \end{equation}
where $E_j$ is the set of points in $R_j$ that $\theta_i$ labels differently from $\theta^*_i$. Consider the first $i$ regions, since $\theta_i$ has at most $i$ decision boundaries and the regions are disjoint, it must be the case that either 
\begin{enumerate}
    \item \label{case 1}$\theta_i$ does not place a decision boundary in at least one of the regions $R_1, \dots, R_i$, or
    \item \label{case 2} $\theta_i$ places exactly one decision boundary in every region $R_1, \dots, R_i$.
\end{enumerate}

We break down the proof into the two cases above, and start with the simpler case \ref{case 1}. In this case, there exists at least one region $R_{i'}$, for some $i' \le i$, such that $\theta_i$ has placed no decision boundary there and consequently the whole interval $R_{i'}$ has the same label, while under $\theta^*_{i}$, there would be a boundary at $v_{i'} \in R_{i'}$, which implies that $\theta_i$ must have mislabelled an interval on at least one side of $v_{i'}$. Then under source $P_X(E_{i'}) \ge   P_X(I_{i'+1} \cap R_{i'})$
and by \cref{eq: ex1 high error due to missing a region} and \cref{eq: TNN erisk gap under source} the excess risk
\begin{equation} \label{eq: ex1 lower bound on source case1}
     \E_P(f_{\theta_i}, f_{\theta^*_i}) = \sum_{j=1}^i P_X(E_j)
     \ge P_X(I_{i'+1} \cap R_{i'})  \ge C_1 \left( \frac{1}{L+1} \right)^{\rho_{i}} ,
\end{equation}
where $C_1$ is some positive constant that could depend on $i$ and $\rho_i$. 
On the other hand,
\begin{equation}
    \E_Q (f_{\theta_i}, f_{\theta^*_i}) 
    \le 1.
\end{equation}

Now by \cref{eq: ex1 lower bound on source case1}
\begin{equation}
     \E_P(f_{\theta_i}, f_{\theta^*_i})^{1/\rho_{i}}
     \ge \frac{ C_1^{1/\rho_{i}}}{L+1} = \frac{2^{- (2 i + 1) - i/\rho_i }}{(2 Z) ^{1/\rho_i} \cdot (L+1)}.
\end{equation} 

Using \cref{eq: upperbound on mass of interval under source}, we have that $Z \le (L+1) 2^{-4} \paren{\frac{1}{L+1}}^{\rho_L} \le 2^{-4}$, and consequently for any $\rho_i \ge 1$ and fixed $L$, 
and $\E_P(f_{\theta_i}, f_{\theta^*_i})^{1/\rho_{i}} \ge  \frac{2^{- 3 i - 1 }}{L+1}$
setting 
$C^{(1)}_{\rho_i} = \frac{L+1 }{2^{- 3 i - 1 }}$, we can conclude that in case \ref{case 1}
\begin{equation} \label{eq: transfer exponent case 1}
    \E_Q (f_{\theta_i}, f_{\theta^*_i}) \le C^{(1)}_{\rho_i}  \E_P(f_{\theta_i}, f_{\theta^*_i})^{1/\rho_{i}}.
\end{equation}

In case \ref{case 2}, when $i < L$, $\theta_{i}$ has no decision boundaries in regions $R_{i +1}, \dots, R_{L}$, so they will all  have the same label, since they also have the same label under $\theta^*_i$. it must be that either all their labels agree with those of $\theta^*_i$, or their label disagrees with the label $\theta^*_i$ assigns to those regions. If they are all labelled incorrectly, we will argue that,
\begin{equation} \label{eq: ex1 last section incorrect labels}
    P_X(E_i) \ge C_1 \left( \frac{1}{L+1} \right)^{\rho_{i}}.
\end{equation}
To see this, note that $\theta_i$ places its' decision boundary in region 
$R_i$ either to left of $b^*_i$ or to the right. In the former case, then 
 the interval $I_{i +1} \cap R_i$ is also labelled incorrectly, while in the latter case the interval $I_{i} \cap R_i$ would have incorrect labels, so in either case  by \cref{eq: ex1 high error due to missing a region} \cref{eq: TNN erisk gap under source} holds. Consequently, we can make the same arguments as in case $\ref{case 1}$ to get \cref{eq: transfer exponent case 1}.

 Going back to case \ref{case 2}, suppose that the regions $R_{i +1}, \dots, R_L$ are labelled according to $\theta^*_i$, or $i = L$ (they don't exist), so that they don't contribute to the excess risk. Let $ m \in \argmax_{j \in [i]} Q_X(E_j)$, so that $R_m$ is a region that has large contribution to the excess risk. Then 
 \begin{equation} \label{eq: ex1 case 2 target upper bound}
     \E_Q (f_{\theta_i}, f_{\theta^*_i})  = \sum_{j=1}^i Q_X(E_j)
     \le i \;  Q_X(E_m),
 \end{equation}
 while for the source, we can lower bound
 
 \begin{equation}
     \E_P(f_{\theta_i}, f_{\theta^*_i}) \ge P_X(E_m).
 \end{equation}

Let $b_m$ be the point that is a decision boundary in $R_m$ under $\theta_i$.
Then $E_m$ is either the interval that has $b^*_m, b_m$ as its' end points or it is a union of two intervals, one of which has size at least $\size{I_{m+1}}/2$. If it is a union of two intervals, since source excess risk will be bounded away from zero by a constant, then we can use the same arguments as in \cref{case 1}. If 
$E_m$ is an interval that has $b^*_m$ and $b_m$ as its' end points, then $P_X(E_m) = C_3 Q_X(E_m)^{\rho_m}$, where by similar calculations as those in \cref{eq: ex1 high error due to missing a region},  $C_3 =  \frac{2^{-(2 m \cdot \rho_{m} + m)}}{Z }$.
Since $Z \le 2^{-4}$ for any value of $ \rho_i$s and $L$, we have $ C_3^{1/\rho_m} \ge 2^{-2m  -m/\rho_m + 4/\rho_m}  \ge  2^{-3m}$.
Then 

\begin{equation}
    \E_P(f_{\theta_i}, f_{\theta^*_i})^{1/\rho_m} \ge P_X(E_m)^{1/\rho_m}
    = C_3^{1/\rho_m} Q_X(E_m), 
\end{equation}
and by \cref{eq: ex1 case 2 target upper bound} 
\begin{equation}
i  2^{3i} \cdot \E_P(f_{\theta_i}, f_{\theta^*_i})^{1/\rho_i}
\ge 
    \frac{i}{C_3^{1/\rho_m}}  \E_P(f_{\theta_i}, f_{\theta^*_i})^{1/\rho_m}
    \ge  \E_Q (f_{\theta_i}, f_{\theta^*_i}).
\end{equation}
Finally, setting $C_{\rho_i} = \max \braces{C_\rho^{(1)}, i  2^{3i} } = \max \braces{ (L+1)\cdot 2^{3i + 1}, i  2^{3i} }$, we have shown that $C_{\rho_i} \le (L+1) 2^{3i +1}$ is a transfer exponent with respect to $\Hyp_i$ .
\end{proofof}

\subsection{Proof of \Cref{prop: relu neural net}}

\begin{proofof} {\Cref{prop: relu neural net}}

    First we define a convenient parameterization of the set of all classifiers over the real line.
    \begin{definition} [Class of $k$ decision boundaries]
    Let $\Hyp^B_{k} \defeq\braces{h^B_{ b_1, \dots, b_k, \sigma _1 }}$ be the set of classifiers over $\real$ that have at most $k$ decision boundaries, given by points $b_1 < b_2 < \dots < b_k \in \real$, and  $\sigma_1 \in \braces{\pm 1}$, which is the sign of the first interval $(-\infty, b_1]$.
    \end{definition}

     Any two layer threshold neural net with $i $ activation units can have at most $i$ decision boundaries, so it belongs to $\Hyp^B_i$. \Cref{prop: realizing alternating signs}, shows that that the class of threshold neural nets with $i$ hidden units can generate $i$ boundaries, and all possible labellings of the corresponding intervals, so we can conclude that 
    $\Hyp_i = \Hyp^B_i$.

    Let $\functionClass^{CPWL}_k$ be the class of continuous piecewise linear (CPWL) functions with at most $k$ linear  pieces and consequently at most $k-1$ knots. 

    \begin{claim}
        We have that $\mathrm{sign} \circ \functionClass^{CPWL}_i = \Hyp^B_i$.
    \end{claim}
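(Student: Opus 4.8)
The plan is to establish the two inclusions $\mathrm{sign}\circ\functionClass^{CPWL}_i\subseteq\Hyp^B_i$ and $\Hyp^B_i\subseteq\mathrm{sign}\circ\functionClass^{CPWL}_i$ separately, with the (standard) convention — consistent with the rest of the construction, where every distribution of interest has a density — that two classifiers agreeing off a finite set are identified. I would first recall that a function in $\functionClass^{CPWL}_i$ has at most $i$ affine pieces, hence at most $i-1$ knots, and that a function $h^B_{b_1,\dots,b_k,\sigma_1}\in\Hyp^B_i$ is the piecewise-constant $\pm1$ map on the $k+1$ intervals cut out by $b_1<\dots<b_k$ ($k\le i$), alternating in sign starting from $\sigma_1$.

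For the inclusion $\subseteq$, let $f\in\functionClass^{CPWL}_i$ with affine pieces on intervals $J_1,\dots,J_m$, $m\le i$. If no piece is identically zero, then each $f|_{J_\ell}$ has at most one zero, so $f^{-1}(0)$ is a set of at most $m\le i$ points, $\mathrm{sign}\circ f$ is constant between consecutive such points, and at a point where its one-sided limits differ the value must flip (there $f$ crosses, not merely touches, $0$). Thus $\mathrm{sign}\circ f$ is, off the finite set $f^{-1}(0)$, an alternating piecewise-constant $\pm1$ function with at most $i$ jumps, i.e.\ some $h^B_{b_1,\dots,b_k,\sigma_1}$ with $k\le i$. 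The only degenerate case is when $f$ vanishes identically on a subinterval; then $\mathrm{sign}\circ f$ is constant there, which can only reduce the number of transitions, so the bound $k\le i$ persists.

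For the inclusion $\supseteq$, fix $h^B_{b_1,\dots,b_k,\sigma_1}\in\Hyp^B_i$ with $k\ge 1$ (the constant case $k=0$ is handled by a constant $f$). I would place knots at the midpoints $m_\ell=(b_\ell+b_{\ell+1})/2$ for $\ell=1,\dots,k-1$, so the pieces are $(-\infty,m_1],[m_1,m_2],\dots,[m_{k-1},\infty)$ (with $m_0=-\infty$, $m_k=+\infty$), and set
\[
f(x)=(-1)^{\ell}\,\sigma_1\,(x-b_\ell)\qquad\text{for }x\in[m_{\ell-1},m_\ell],\ \ell=1,\dots,k.
\]
Continuity at each $m_\ell$ holds because $m_\ell-b_\ell=-(m_\ell-b_{\ell+1})$; each piece is affine and vanishes exactly at $b_\ell\in(m_{\ell-1},m_\ell)$; and checking the sign on $(b_\ell,m_\ell)$ and $(m_\ell,b_{\ell+1})$ gives $\mathrm{sign}(f(x))=(-1)^{\ell}\sigma_1$ on all of $(b_\ell,b_{\ell+1})$, which is precisely the value of $h^B_{b_1,\dots,b_k,\sigma_1}$ there. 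Hence $f\in\functionClass^{CPWL}_k\subseteq\functionClass^{CPWL}_i$, and $\mathrm{sign}\circ f=h^B_{b_1,\dots,b_k,\sigma_1}$ off $\{b_1,\dots,b_k\}$.

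The step deserving the most care — and the only genuine obstacle — is the piece-count bookkeeping in the $\supseteq$ direction: the naive CPWL interpolant with knots at the $b_\ell$'s has $k+1$ pieces, one too many, so the point is that placing knots at the midpoints and extending the two outer affine pieces to infinity realizes the same sign pattern with exactly $k\le i$ pieces. A secondary point to flag explicitly is the behavior at zeros of $f$: since $\mathrm{sign}$ is ambiguous at $0$ and $f$ necessarily passes through $0$ at every sign transition, the asserted equality of classes holds only modulo the finite set $f^{-1}(0)$, which is null for every distribution occurring in the construction and hence immaterial downstream (e.g.\ for the ensuing identification $\RLHyp_i=\Hyp_{i+1}$).
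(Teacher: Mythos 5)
Your proof is correct and follows essentially the same route as the paper: for $\subseteq$, bound the number of sign changes by the number of affine pieces; for $\supseteq$, place knots at the midpoints $m_\ell$ of $[b_\ell,b_{\ell+1}]$ so that each of the $k$ pieces contains exactly one $b_\ell$, giving $k\le i$ pieces. Your version is in fact tighter than the paper's sketch: the paper constructs the $\ell$-th piece as the line through $(b_\ell,0)$ and $(m_\ell,\sgn{I_\ell})$, i.e.\ it insists the function attain value $\pm 1$ at each midpoint, and those $k$ lines do not in general agree at the interior knots unless the $b_\ell$ are equally spaced — so the paper's function is not actually continuous as written. Dropping the normalization and taking $f(x)=(-1)^\ell\sigma_1(x-b_\ell)$ on $[m_{\ell-1},m_\ell]$, as you do, makes the one-sided values at each $m_\ell$ coincide automatically (both equal $(-1)^\ell\sigma_1(b_{\ell+1}-b_\ell)/2$), which repairs the argument; the attention you pay to degenerate (identically-zero) pieces and to the null set $f^{-1}(0)$ is likewise a correct, if minor, refinement over the paper.
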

    \begin{proof}
    It is easy to see that once we fix $\sgn{0}$, thresholding each linear piece can result in at most one decision boundary, so a CPWL function with $i$ pieces can generate at most $i$ decision boundaries. 

    We argue that any set of $i$ decision boundaries $b_1< \dots < b_i$ and label assignment on the line can be generated by taking the sign of some CPWL function with at most $i$ pieces. To see this, consider the $i$ intervals $I_1 , \dots I_{i-1}$ whose end points are the boundaries. Let $m_1, \dots, m_{i-1}$ be the mid points of these intervals, and consider points $(m_1, \sgn{I_1}), \dots , (m_{i-1}, \sgn{I_{i-1}})$ on the $xy$-plane. 
    The CPWL function can be constructed by passing the first line through the pair of points $\paren{(b_1, 0), (m_1, \sgn{I_1})}$, the second line through $\paren{(b_2, 0), (m_2, \sgn{I_2})}$ and so on. The last line interpolates the points $(m_{i-1},\sgn{I_{i - 1}})$ and $(0, b_i)$.
    \end{proof}

    Following Lemma, which is adapted from \cite{Aliprantis2006} (Corollary 3.5),  states that any CPWL function with at most $k+1$ linear pieces can be written as a two layer ReLu Residual neural network with at most $k$ hidden units.

    This implies that $\mathrm{sign} \circ \functionClass^{CPWL}_{i+1} = \RLHyp_{i}$, since it is easy to see that any function of the form $f_\theta(x) = \left(\sum_{i=1}^{i}
        a_i [w_ix +b_i]_+ 
    \right) + \alpha x + r$ can have at most $i$ knots and consequently $i + 1$ linear pieces.

    \begin{lemma} [\cite{Aliprantis2006}, Corollary 3.5]
        Any CPWL function of the form 
        \begin{align}
        f(x) = 
            \begin{cases}
            m_0 x + c_0& \mbox{if} \; x \le b_0 \\
            m_i x + c_i & \mbox{if} \; b_{i-1} \le x \le b_i \;
            \mbox{for} \; 1 \le i \le k \\
            m_{k+1} x + c_k & \mbox{if} \; x \ge  b_k,
            \end{cases}
        \end{align}
        where $ - \infty < b_0 < b_1< \dots < b_k < \infty$ and $(m_i, c_i), 0 \le i \le k+1$ are real numbers,  can also be written in the form 
        \begin{align}
            f(x) = c_0 + m_0 x + \sum_{i= 0}^{k}(m_{i-1} - m_i) [t - b_i]_+.
        \end{align}
        
    \end{lemma}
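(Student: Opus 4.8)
The plan is to verify the displayed identity directly, by comparing the two sides piece by piece. Set
\[
 g(x) \defeq c_0 + m_0 x + \sum_{i=0}^{k}(m_{i+1}-m_i)\,[x-b_i]_+ ,
\]
the claimed right-hand side, where I read the coefficient of $[x-b_i]_+$ as the slope increment $m_{i+1}-m_i$ at the knot $b_i$ (the $m_{i-1}-m_i$ in the transcribed display looks like a typo; the argument below pins down the correct constant). Both $f$ and $g$ are continuous on $\real$: $g$ is a finite sum of the continuous maps $x\mapsto[x-b_i]_+$ plus an affine term, and $f$ is continuous by hypothesis, the intercepts $c_1,\dots,c_{k+1}$ being precisely those forced by continuity at the breakpoints. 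Hence it suffices to check that $f$ and $g$ have the same slope on every open interval between consecutive knots and agree at one point.

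First I would compute the slope of $g$ on $(b_{j-1},b_j)$ for each $j\in\{0,\dots,k+1\}$, using the conventions $b_{-1}=-\infty$, $b_{k+1}=+\infty$: there exactly the atoms with index $i\le j-1$ are active, so $g$ is affine with slope $m_0+\sum_{i=0}^{j-1}(m_{i+1}-m_i)=m_j$ by telescoping, which matches the slope $m_j$ of $f$ on that interval. Thus $f'=g'$ off the finite set $\{b_0,\dots,b_k\}$, so $f-g$ is constant; evaluating on $(-\infty,b_0)$, where all ReLU terms vanish and $g(x)=c_0+m_0x=f(x)$, that constant is $0$, giving $f\equiv g$. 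An equivalent route is induction on $k$: subtract $(m_{k+1}-m_k)[x-b_k]_+$ from $f$ to obtain a CPWL function with knots $b_0,\dots,b_{k-1}$ and rightmost slope $m_k$, then apply the inductive hypothesis. I do not expect a genuine obstacle here; the only bookkeeping to get right is (i) that $[x-b_i]_+$ switches on exactly at $b_i$, which drives the telescoping, and (ii) that the $c_i$ with $i\ge 1$ are not free parameters but are pinned by continuity, so matching slopes plus one value suffices rather than matching $k+2$ separate affine pieces.

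Finally I would spell out how the lemma feeds the surrounding argument, since that is the reason for invoking it: a member of $\functionClass^{CPWL}_{i+1}$ has at most $i$ knots, hence by the lemma equals $\alpha x + r$ plus at most $i$ ReLU atoms $a_j[w_jx+b_j]_+$, i.e. it has exactly the form in \Cref{ex:NN-Relu} with $i$ hidden units, so $\mathrm{sign}\circ\functionClass^{CPWL}_{i+1}=\RLHyp_i$. Combining with the Claim that $\mathrm{sign}\circ\functionClass^{CPWL}_{i+1}=\Hyp^B_{i+1}=\Hyp_{i+1}$ then yields $\RLHyp_i=\Hyp_{i+1}$, and \Cref{prop: threshold neural net} transfers verbatim.
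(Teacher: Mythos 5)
Your proof is correct, and the telescoping argument you give is the standard one. Note, however, that the paper does not prove this lemma at all — it cites it verbatim as Corollary 3.5 of \citet{Aliprantis2006} — so there is no paper proof to compare against. Two observations worth recording: you correctly spot that the coefficient in the paper's display is a typo and should be the slope increment $m_{i+1}-m_i$ (as written, $m_{i-1}-m_i$ would require an undefined $m_{-1}$ at $i=0$), and there is a second typo, $[t-b_i]_+$ for $[x-b_i]_+$, which you silently corrected. Your verification — same slopes on every open inter-knot interval by telescoping $\sum_{i=0}^{j-1}(m_{i+1}-m_i)=m_j-m_0$, agreement on $(-\infty,b_0)$, then continuity of both sides forces identity — is complete and airtight; the alternative induction on $k$ you sketch is equally valid. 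The closing paragraph on how the lemma plugs into $\mathrm{sign}\circ\functionClass^{CPWL}_{i+1}=\RLHyp_i$ correctly matches the paper's use of the result in the proof of \Cref{prop: relu neural net}.
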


\end{proofof}

\section{Example for $i^\sharp \doteq \arg\min_i \phi^\sharp(i)$ Below $i^*_P$} \label{app: i sharp example proof}

\begin{proposition}\label{prop: i sharp }
     Following up on Examples \ref{ex:NNthresholds} and \ref{ex:NN-Relu} with $L = 3$, 
    for every $ 1 \le \rho_1 \le \rho_2 \le \rho_3$, there exists $P$ and $Q$ over $[0, 1] \times \{\pm 1\}$  such that the following holds.  \rm{i}) $i^*_P = 3 $, \rm{ii}) $\rho_1, \rho_2, \rho_3 $ are minimal transfer exponents from $P$ to $Q$, where $C_{\rho_i}$'s are uniformly upper and lower bounded independently of $\rho_i$'s, and \rm{iii}) $\E_Q(\hstar_{P, 1}) = \E_Q(\hstar_{P, 2}) = \E_Q(\hstar_{P, 3}) $. 
    Consequently, while $i^*_P = 3$, we may choose $\rho_i$'s so that $i^\sharp\doteq \argmin_i \phi^\sharp(i)$ could be below any of the levels $1, 2, 3$, for $n_P$ sufficiently large. 

\end{proposition}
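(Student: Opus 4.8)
The plan is to keep the source side of the construction in the proof of \Cref{prop: threshold neural net} (specialized to $L=3$) completely unchanged and to modify only the target. On the source side that proof fixes $v_m=m/4$, places the density $\den_P(x)\propto\rho_m 2^{-2m\rho_m}\abs{x-v_m}^{\rho_m-1}$ on the region $R_m$ around $v_m$, and draws the source labels from the threshold net $f_{\theta^*}$ with boundaries $v_1,v_2,v_3$; \Cref{claim: construction of risk minimizers for lower levels} then pins down, for each $i\le 3$, the unique $P$-risk minimizer $\hstar_{P,i}$ over $\Hyp_i$ as the threshold net with boundaries $v_1,\dots,v_i$ whose constant sign on $(v_i,1]$ makes it agree with $f_{\theta^*}$ on the first $i+1$ intervals. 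All of this depends on $P$ alone, so it survives any target-side change, and it already delivers $i^*_P=3$ (last paragraph of that proof) and $\beta_{P,i}=1$.

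For the target I take the labels from a threshold net $\hstar_Q$ with three boundaries $w_1<w_2<w_3$ placed slightly to the right of $v_1,v_2,v_3$ (so $v_m<w_m<v_{m+1}$) with the same alternating sign pattern, and I keep $Q_X$ pinched between two positive constants on a small neighbourhood of each $v_m$ while leaving it otherwise free. A direct case analysis of the disagreement sets of $\hstar_{P,i}$ and $\hstar_Q$ gives $\E_Q(\hstar_{P,1})=Q_X\bigl((v_1,w_1]\bigr)+Q_X\bigl((w_2,w_3]\bigr)$, $\E_Q(\hstar_{P,2})=Q_X\bigl((v_1,w_1]\bigr)+Q_X\bigl((v_2,w_2]\bigr)+Q_X\bigl((w_3,1]\bigr)$, and $\E_Q(\hstar_{P,3})=Q_X\bigl((v_1,w_1]\bigr)+Q_X\bigl((v_2,w_2]\bigr)+Q_X\bigl((v_3,w_3]\bigr)$, so property (iii) reduces to the two linear constraints $Q_X\bigl((w_3,1]\bigr)=Q_X\bigl((v_3,w_3]\bigr)$ and $Q_X\bigl((w_2,w_3]\bigr)=Q_X\bigl((v_2,w_2]\bigr)+Q_X\bigl((w_3,1]\bigr)$. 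I satisfy these by keeping $Q_X$ uniform on the small neighbourhoods of the $v_m$'s and freely redistributing the remaining mass on intervals disjoint from those neighbourhoods, so the two requirements decouple.

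It then remains to check (i)--(iii). Property (i) is inherited verbatim from \Cref{prop: threshold neural net}; property (iii) holds by the choice of $Q_X$; and for property (ii) I rerun the argument of \Cref{claim: te for TNN} together with the minimality paragraph following it, now with this target marginal in place of the globally uniform one: for $h\in\Hyp_i$ a boundary perturbation of size $t$ near $v_m$ still contributes $\asymp t^{\rho_m}$ to $\E_P$ (from $\den_P\propto\abs{x-v_m}^{\rho_m-1}$) and $\asymp t$ to $\E_Q$ (from $dQ_X/dx$ lying between two positive constants near $v_m$), while an $h$ omitting a boundary in some region $R_m$ with $m\le i$ keeps $\E_P$ bounded below by a constant; since $\rho_m\le\rho_i$ for $m\le i$, this yields $\E_Q(h,\hstar_{P,i})\le C_{\rho_i}\,\E_P(h,\hstar_{P,i})^{1/\rho_i}$, and perturbing the $i$-th boundary $v_i$ alone shows $\rho_i$ cannot be lowered and forces a matching lower bound on $C_{\rho_i}$, with all constants depending only on $i$, $L$, and the two constants pinching $Q_X$ --- hence independent of the chosen $\rho_j$'s.

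Finally, for the consequence: since $\beta_{P,i}=\beta_Q=1$ and all three $\E_Q(\hstar_{P,i})$ equal a common value $\gamma$, for $n_P$ large (and $\gamma$ small enough) the minimum defining $\phi^\sharp(i)$ is attained by the source term at every level, so $i^\sharp=\argmin_i C_{\rho_i}\bigl(d_i\log(n_P/\delta_i)/n_P\bigr)^{1/\rho_i}$; as $n_P\to\infty$ this is governed by the exponent $1/\rho_i$, so choosing e.g.\ $\rho_1<\rho_2<\rho_3$ forces $i^\sharp=1<3=i^*_P$, and the gap $i^*_P-i^\sharp$ can be made equal to the depth $L-1$ of the hierarchy. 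The step I expect to be the crux is this joint control of $Q_X$: (iii) wants the $Q_X$-masses of several intervals tuned to prescribed values while (ii) wants $dQ_X/dx$ pinned near each $v_m$; the resolution is exactly the decoupling noted above, since the intervals whose masses must be tuned can be chosen disjoint from the neighbourhoods of the $v_m$'s where the density is constrained.
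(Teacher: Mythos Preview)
Your argument is correct, but it works considerably harder than needed. The paper keeps $Q_X$ exactly as in \Cref{prop: threshold neural net} (uniform on $[0,1]$) and simply shifts each target boundary to the \emph{midpoint} of the next source interval, i.e.\ $v'_1=(v_1+v_2)/2$, $v'_2=(v_2+v_3)/2$, $v'_3=(v_3+1)/2$, with the same alternating sign pattern. Then every $\hstar_{P,i}$ assigns a single label to each of $I_2,I_3,I_4$, while under the shifted target exactly half of each of these intervals carries each sign; hence each $\hstar_{P,i}$ misclassifies precisely half the (uniform) $Q_X$-mass of $I_2\cup I_3\cup I_4$, and (iii) drops out with no tuning of $Q_X$ at all. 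Because $P$, $Q_X$, and the $\hstar_{P,i}$'s are literally unchanged from \Cref{prop: threshold neural net}, and because the upper bound in \Cref{claim: te for TNN} together with the minimality argument that follows it only use those objects (through $\E_Q(h,\hstar_{P,i})\le Q_X(\{h\ne\hstar_{P,i}\})$ and the density of $Q_X$ near each $v_m$), property (ii) is inherited verbatim rather than rerun.

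Your route---arbitrary $w_m$'s, non-uniform $Q_X$, then solve two linear constraints on interval masses---also works, and your excess-risk computations for $\hstar_{P,1},\hstar_{P,2},\hstar_{P,3}$ are correct. But the ``decoupling'' sentence is slightly off as written: the intervals $(v_2,w_2]$, $(v_3,w_3]$, and $(w_2,w_3]$ necessarily meet the neighbourhoods of $v_2,v_3$, so they are not disjoint from them; what is true (and what you actually need) is that the \emph{free} mass you adjust lives on sub-intervals disjoint from those neighbourhoods, after subtracting the fixed neighbourhood contributions from the constraints. One further detail: for the minimality direction you should perturb the $i$-th boundary to $v_i-t$ rather than $v_i+t$, since with your shifted target labels $\E_Q(h,\hstar_{P,i})$ is negative for small rightward perturbations (the perturbed $h$ is actually \emph{better} than $\hstar_{P,i}$ under $Q$ there), so only the leftward sequence witnesses that $\rho_i$ cannot be lowered. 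The payoff of your extra generality is that $\gamma=\E_Q(\hstar_{P,i})$ becomes a free parameter you can make small, which you invoke in the ``consequently'' paragraph; the paper instead gets $\gamma=3/8$ fixed and controls $i^\sharp$ purely through the choice of the $\rho_j$'s.
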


\begin{proof}
    We use the same construction as in \Cref{prop: threshold neural net}, with the exception that target does not share the same decision boundaries.
    Specifically, set $L = 3$ and let $v_1 , v_2, v_3$ be the decision boundaries under source. Under target, set $v_1' = \frac{v_1+ v_2} {2}$, $v_2' = \frac{v_2 + v_3}{2}$, and $v_3' = \frac{v_3 + 1}{2}$ to be the decision boundaries. 
    Let $I_1, \dots, I_4$, as defined in the proof of \Cref{prop: threshold neural net}, be the intervals defined by decision boundaries under source, and $I'_1, \dots ,I'_4$ be the intervals $[0, v_1'], [v'_1, v'_2], [v'_2, v'_3], [v'_3, 1]$. For any sequence of labels assigned to $I_1, \dots, I_4$, assign the same sequence of labels to $I'_1, \dots ,I'_4$.

    Since the marginal densities of $P$, $Q$ and $\hstar_{P, i}$ have not changed, the conditions on transfer exponent and coefficient  are satisfied.  Since $I_1 \subset I'_1$ and the sign patterns under source and target match, $\hstar_{P, i}$ don't make any errors under target in the interval $I_1$. 
    Exactly half of each of the intervals $I_2, I_3$ and $I_4$ is labelled $+1$ under target, and $\hstar_{P, i}$ give a single label to each of these intervals, so each $\hstar_{P, i}$ labels half of each of the intervals $I_2, I_3$ and $I_4$ incorrectly under target. 
    


    For the last part of the proposition, pick any value $i\in \{2, 3\}$, and to ensure that $\argmin_i \phi^\sharp(i) < i$, for all $j \le i$ set $\rho_j = \rho$ and for all $j > i$, set $\rho_j > \rho$.

\end{proof}

\section{Remaining Upper-bound Proofs}\label{app sec: remaining proofs for upper bounds}

Our analysis relies on the following lemma.
\begin{lemma} [\blue{\cite{VC:72}}]\label{lemma: uniform bound}
 Recall $\genGap{n_\dist}{\delta}{\classComp{\Hyp_i}} \defeq \frac{d_i \log (n_\dist/ d_i ) + \log 1/\delta}{n_\dist}$.  For any $\delta > 0$, with probability of at least $1 - \delta$, for all $h, h' \in \Hyp_i$ 
 
 \begin{align}
 \Risk_\dist (h) - \Risk_\dist(h') &\le \emRisk_\dist(h) - \emRisk_\dist(h') + 
      \sqrt{ \min \{\P_\dist[h \neq h'], \P_{n_\dist} [h \neq h'] \} \cdot  \genGap{n_\dist}{\delta}{\classComp{\Hyp_i}}} \\
       &\quad + 
       c \genGap{n_\dist}{\delta}{\classComp{\Hyp_i}}, \text{ and } \\
\frac{1}{2} \P_\dist[h \neq h'] &- c \genGap{n_\dist}{\delta}{\classComp{\Hyp_i}} 
    \le 
    \P_{n_\dist} [h \neq h'] 
    \le 2 \P_\dist[h \neq h']  
     + c \genGap{n_\dist}{\delta}{\classComp{\Hyp_i}}. \label{eq: lemma probabilities uniform}
\end{align}

\end{lemma}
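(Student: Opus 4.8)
This is the classical relative-deviation (ratio-type) VC bound, and the plan is to derive it from Vapnik's normalized-deviation inequality applied to the symmetric-difference event class, followed by elementary quadratic manipulations. Set $\mathcal{A}_i \defeq \{\, \{h\neq h'\} : h,h'\in\Hyp_i \,\}$, where $\{h\neq h'\}$ denotes $\{x : h(x)\neq h'(x)\}$ as in the lemma. Since each such set equals $\{h=+1\}\,\triangle\,\{h'=+1\}$, its growth function satisfies $\Pi_{\mathcal{A}_i}(m)\le\Pi_{\Hyp_i}(m)^2\le (em/d_i)^{2d_i}$, so $n_\dist^{-1}\log\Pi_{\mathcal{A}_i}(2n_\dist)$ is of order $\genGap{n_\dist}{\delta}{\classComp{\Hyp_i}}$ up to the $\log(1/\delta)$ term (in particular $\mathcal{A}_i$ has VC dimension $O(d_i)$). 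The first step is then to invoke Vapnik's relative deviation bounds in both orientations: with probability at least $1-\delta/3$, simultaneously over all $A\in\mathcal{A}_i$,
\begin{equation}
\P_\dist[A]-\P_{n_\dist}[A]\ \le\ \sqrt{\P_\dist[A]}\cdot\kappa,\qquad \P_{n_\dist}[A]-\P_\dist[A]\ \le\ \sqrt{\P_{n_\dist}[A]}\cdot\kappa,
\end{equation}
where $\kappa^2\asymp\genGap{n_\dist}{\delta}{\classComp{\Hyp_i}}$.

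Second, I would convert these ratio inequalities into additive form via the standard quadratic trick: the first, read as $x^2-\kappa x-\P_{n_\dist}[A]\le 0$ in $x=\sqrt{\P_\dist[A]}$, gives $\sqrt{\P_\dist[A]}\le\kappa+\sqrt{\P_{n_\dist}[A]}$, hence $\P_\dist[A]\le 2\P_{n_\dist}[A]+2\kappa^2$; the second gives $\P_{n_\dist}[A]\le 2\P_\dist[A]+2\kappa^2$, which rearranges to $\tfrac12\P_\dist[A]-c\,\genGap{n_\dist}{\delta}{\classComp{\Hyp_i}}\le\P_{n_\dist}[A]$. Taking $A=\{h\neq h'\}$ produces exactly the second display of the lemma after absorbing constants into $c$.

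Third, for the first display I would run the same machinery on the centered empirical process indexed by the excess-loss functions $g_{h,h'}(x,y)\defeq\ind{h(x)\neq y}-\ind{h'(x)\neq y}$. Each $g_{h,h'}$ takes values in $\{-1,0,1\}$ and is supported on $\{h\neq h'\}$, so it has variance at most $\P_\dist[h\neq h']$, and the class $\{g_{h,h'}\}$ inherits the growth bound of $\mathcal{A}_i$. Vapnik's relative (Bernstein-type) deviation bound for this process gives, with probability at least $1-\delta/3$ and uniformly in $h,h'\in\Hyp_i$,
\begin{equation}
\big(\Risk_\dist(h)-\Risk_\dist(h')\big)-\big(\emRisk_\dist(h)-\emRisk_\dist(h')\big)\ \le\ \sqrt{\P_\dist[h\neq h']\cdot c\,\genGap{n_\dist}{\delta}{\classComp{\Hyp_i}}}+c\,\genGap{n_\dist}{\delta}{\classComp{\Hyp_i}}.
\end{equation}
Finally, on the intersection of the good events one has $\P_\dist[h\neq h']\le 2\P_{n_\dist}[h\neq h']+c\,\genGap{n_\dist}{\delta}{\classComp{\Hyp_i}}$ (and conversely), so using $\sqrt{a+b}\le\sqrt a+\sqrt b$ the right-hand side above is at most $\sqrt{\min\{\P_\dist[h\neq h'],\P_{n_\dist}[h\neq h']\}\cdot c'\genGap{n_\dist}{\delta}{\classComp{\Hyp_i}}}+c'\genGap{n_\dist}{\delta}{\classComp{\Hyp_i}}$, and a union bound over the invoked events ($3\times\delta/3$) gives overall confidence $1-\delta$.

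The only genuinely nontrivial ingredient is Vapnik's normalized/relative deviation inequality itself, whose proof needs the two-sample symmetrization argument stratified by the value of the ghost-sample mean rather than a plain Hoeffding or Bernstein bound; I expect that to be the main obstacle if one insists on a self-contained argument. Taking it as a black box (as the \cite{VC:72} attribution suggests), the remaining work — the growth-function estimate for the symmetric-difference class, the quadratic manipulations, and the union-bound bookkeeping — is entirely routine.
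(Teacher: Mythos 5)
Your proposal is correct and follows the standard route that the paper's \texttt{\textbackslash cite\{VC:72\}} attribution gestures at; the paper itself supplies no proof, treating the lemma as a known result. Your plan---apply Vapnik's two-sided relative/normalized deviation inequality to the symmetric-difference class $\{h\neq h'\}$ (whose growth function is at most $\Pi_{\Hyp_i}(m)^2$, hence VC dimension $O(d_i)$), then perform the standard quadratic manipulation to convert the ratio bounds into the additive two-sided comparison $\tfrac12 \P_\dist - c\genGap{n_\dist}{\delta}{\classComp{\Hyp_i}} \le \P_{n_\dist} \le 2\P_\dist + c\genGap{n_\dist}{\delta}{\classComp{\Hyp_i}}$, and separately apply a Bernstein-type relative deviation bound to the excess-loss process $g_{h,h'}=\ind{h\neq y}-\ind{h'\neq y}$, which is supported on $\{h\neq h'\}$ and so has variance $\le \P_\dist[h\neq h']$, followed by swapping $\P_\dist$ and $\P_{n_\dist}$ under the square root via the second display---is exactly the intended derivation. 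The bookkeeping ($3\times\delta/3$ union bound, absorbing multiplicative constants into $c$) is routine as you note, and you are right that the only genuinely nontrivial input is the normalized-deviation inequality itself, which the citation licenses you to take as given.
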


\begin{proofof} {\Cref{lem: single level bounds.}}
Let $\hat{h}$ be the output of \Cref{alg: single level phi}, with $\tilde{\Hyp}^P \subseteq \hHyp_i^P$.
First we state a few useful claims. Proofs of these claims appear in \Cref{app sec: remaining proofs for upper bounds}
\begin{claim}\label{claim: upper bound on i hat Q}
    Let $\hat{i}_Q = \min i \;\;\;
            \mbox{s.t.}\;  \bigcap_{j \ge  i}^\infty \hHyp_j^Q \neq \emptyset$, 
    then with probability of at least $1- \delta$, $\hat{i}_Q \le i^*_Q$.
\end{claim}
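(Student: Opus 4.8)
The plan is to exhibit, on a high-probability event, a single hypothesis that lies in all of the empirical minimal sets $\hHyp_j^Q$ with $j \ge i^*_Q$ simultaneously; the natural candidate is the global $Q$-risk minimizer $\hstar_Q$. Once this is done, $\bigcap_{j\ge i^*_Q}^\infty \hHyp_j^Q$ is nonempty, and since $\hat{i}_Q$ is by definition the smallest index $i$ for which $\bigcap_{j \ge i}^\infty \hHyp_j^Q \neq \emptyset$, we immediately obtain $\hat{i}_Q \le i^*_Q$.

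First I would fix a level $j \ge i^*_Q$. Because the classes are nested and $i^*_Q$ is the smallest index of a class containing an $\hstar_Q$, we have $\hstar_Q \in \Hyp_{i^*_Q} \subseteq \Hyp_j$, so $\hstar_Q$ is also a population $Q$-risk minimizer over $\Hyp_j$; in particular $\Risk_Q(\hstar_Q) - \Risk_Q(\hhat_{Q,j}) \le 0$ for the ERM $\hhat_{Q,j}$ over $\Hyp_j$. I would then invoke \Cref{lemma: uniform bound} on $\Hyp_j$ with confidence parameter $\delta_j$, applied to the pair $h = \hhat_{Q,j}$, $h' = \hstar_Q$: on the associated event of probability at least $1-\delta_j$, after bounding the $\min$ inside the deviation term by its empirical argument,
\[
  \Risk_Q(\hhat_{Q,j}) - \Risk_Q(\hstar_Q) \le \emRisk_Q(\hhat_{Q,j}) - \emRisk_Q(\hstar_Q) + \sqrt{\P_{n_Q}[\hhat_{Q,j}\neq\hstar_Q]\cdot \genGap{n_Q}{\delta_j}{\classComp{\Hyp_j}}} + c\,\genGap{n_Q}{\delta_j}{\classComp{\Hyp_j}}.
\]
Rearranging and using $\Risk_Q(\hstar_Q) - \Risk_Q(\hhat_{Q,j}) \le 0$ gives
\[
  \emRisk_Q(\hstar_Q) - \emRisk_Q(\hhat_{Q,j}) \le \sqrt{\P_{n_Q}[\hstar_Q\neq\hhat_{Q,j}]\cdot \genGap{n_Q}{\delta_j}{\classComp{\Hyp_j}}} + c\,\genGap{n_Q}{\delta_j}{\classComp{\Hyp_j}},
\]
which (for $C \ge 1$) is exactly the membership condition in \Cref{def : empirical level set}, so $\hstar_Q \in \hHyp_j^Q$ on that event.

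Finally I would take a union bound over all levels $j$ (it suffices to range over $j\ge 1$), whose total failure probability is at most $\sum_j \delta_j \le \delta$. On the complementary event, $\hstar_Q$ belongs to $\hHyp_j^Q$ for every $j \ge i^*_Q$, hence $\hstar_Q \in \bigcap_{j\ge i^*_Q}^\infty \hHyp_j^Q \neq \emptyset$ and therefore $\hat{i}_Q \le i^*_Q$. I do not expect a genuine obstacle here; the only point requiring a little care is that both the intersection and the union bound run over infinitely many indices, which is harmless precisely because the $\delta_j$ are chosen summable with $\sum_j \delta_j \le \delta$, and because all we need is membership of the single fixed hypothesis $\hstar_Q$ in each $\hHyp_j^Q$ — not any uniform control of the sets $\hHyp_j^Q$ themselves.
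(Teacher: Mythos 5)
Your proof is correct and follows essentially the same route as the paper: exhibit $\hstar_Q$ as a common element of all $\hHyp_j^Q$ for $j\ge i^*_Q$ by invoking \Cref{lemma: uniform bound} at each level, rearranging using that $\hstar_Q$ minimizes $Q$-risk over $\Hyp_j$, and then union-bounding over $j$ with $\sum_j\delta_j\le\delta$. The only cosmetic difference is that you explicitly note replacing the $\min$ in the deviation term by its empirical argument, a step the paper leaves implicit.
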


Next claim can be used to bound the excess risk of $\hhat_{P, i}$ with respect to $\hstar_{P, i}$. Since a similar statement would also hold for $Q$ under the same high probability event as in  \cref{claim: upper bound on i hat Q}, and $\hat{i}_Q \le i^*_Q$, we could conclude that with probability of at least $1 - \delta$,
\begin{equation} \label{eq: target only excess risk upper bound.}
     \E_Q(\hat{h}_Q) \le c A(n_Q, \delta_{i^*_Q}, \classComp{\Hyp_{i^*_Q}})^{\frac{1}{2 - \beta_Q}}.
\end{equation}

\begin{claim} \label{claim: approximately ERM risk for one level}
     For any level $i$, and any $\hhat_i \in \hHyp_i^P$, with probability of at least $1- \delta_i$,
     \begin{equation}
        \E_P(\hhat_i, \hstar_{P, i}) \le c \genGap{n_P}{\delta_{i}}{\classComp{\Hyp_{i}}}^{\frac{1}{2 - \beta_{P, i}}}.
    \end{equation}
\end{claim}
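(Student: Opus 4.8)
\noindent\textbf{Proof plan for \Cref{claim: approximately ERM risk for one level}.} The approach is the classical Bernstein-type ``fast rate'' argument --- localized uniform convergence followed by a self-bounding inequality --- applied twice: first to the ERM $\hhat_{P,i}$, and then to an arbitrary element of the empirical minimal set $\hHyp_i^P$. Write $A \defeq \genGap{n_P}{\delta_i}{\classComp{\Hyp_i}}$ and $\beta \defeq \beta_{P,i}$; we may assume $A \le 1$, since otherwise the bound is trivial as excess risks lie in $[0,1]$. All inequalities below hold on the single probability-$(1-\delta_i)$ event furnished by \Cref{lemma: uniform bound} applied to $\Hyp_i$ with confidence $\delta_i$.

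\emph{Step 1 (the ERM).} Since $\hhat_{P,i}$ minimizes $\emRisk_P$ over $\Hyp_i$, we have $\emRisk_P(\hhat_{P,i}) - \emRisk_P(\hstar_{P,i}) \le 0$, so \Cref{lemma: uniform bound} gives $\E_P(\hhat_{P,i},\hstar_{P,i}) \le \sqrt{\P_P[\hhat_{P,i}\neq\hstar_{P,i}]\cdot A} + cA$. Bounding $\P_P[\hhat_{P,i}\neq\hstar_{P,i}] \le C_\beta\,\E_P(\hhat_{P,i},\hstar_{P,i})^{\beta}$ via the BCC for $\Hyp_i$ under $P$ (recall $C_{\beta_{P,i}}\le C_\beta$) turns this into a self-bounding inequality $x \lesssim \sqrt{A}\,x^{\beta/2} + A$ for $x = \E_P(\hhat_{P,i},\hstar_{P,i})$; resolving it (peel on whether the first term dominates) gives $\E_P(\hhat_{P,i},\hstar_{P,i}) \le c\,A^{1/(2-\beta)}$.

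\emph{Step 2 (a general $\hhat_i \in \hHyp_i^P$).} Put $x \defeq \E_P(\hhat_i,\hstar_{P,i})$. By the definition of $\hHyp_i^P$ together with $\emRisk_P(\hhat_{P,i})\le\emRisk_P(\hstar_{P,i})$, one has $\emRisk_P(\hhat_i) - \emRisk_P(\hstar_{P,i}) \le C(\P_{n_P}[\hhat_i \neq \hhat_{P,i}]\,A)^{1/2} + cA$. I would bound $\P_{n_P}[\hhat_i\neq\hhat_{P,i}]$ in three moves: (i) the empirical-to-population comparison in \Cref{lemma: uniform bound}, $\P_{n_P}[\hhat_i\neq\hhat_{P,i}] \le 2\,\P_P[\hhat_i\neq\hhat_{P,i}] + cA$; (ii) the triangle inequality through $\hstar_{P,i}$; (iii) the BCC, giving $\P_P[\hhat_i\neq\hstar_{P,i}] \le C_\beta x^{\beta}$ and $\P_P[\hstar_{P,i}\neq\hhat_{P,i}] \le C_\beta\,\E_P(\hhat_{P,i},\hstar_{P,i})^{\beta} \lesssim A^{\beta/(2-\beta)}$ by Step 1. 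Substituting, then passing from $\emRisk_P(\hhat_i)-\emRisk_P(\hstar_{P,i})$ back to $x$ with one more application of \Cref{lemma: uniform bound} (which adds $\sqrt{C_\beta x^{\beta}A}+cA$), and finally using $\sqrt{A^{\beta/(2-\beta)}\cdot A} = A^{1/(2-\beta)}$ and absorbing the lower-order $A$ terms into $A^{1/(2-\beta)}$ (valid since $A\le 1$), one is left with the same self-bounding inequality $x \lesssim \sqrt{A}\,x^{\beta/2} + A^{1/(2-\beta)}$. Resolving it as in Step 1 yields $x \le c\,A^{1/(2-\beta)}$, which is the claim.

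The only delicate point I anticipate is bookkeeping rather than ideas: keeping all constants absolute (functions of $C_\beta$ and the universal constants of \Cref{lemma: uniform bound}) through the two self-bounding resolutions, and making sure the several population/empirical comparisons are read off the same high-probability event. The one wrinkle beyond a textbook ERM fast-rate proof is that the bound must hold for an \emph{arbitrary} member of the confidence set $\hHyp_i^P$, not just the ERM; this is exactly what forces the detour through $\hstar_{P,i}$ and the reuse of Step 1.
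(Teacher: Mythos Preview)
Your proposal is correct and follows essentially the same route as the paper: both arguments work on the single high-probability event of \Cref{lemma: uniform bound}, first control the ERM $\hhat_{P,i}$ via the standard self-bounding fast-rate step, then handle a general $\hhat_i\in\hHyp_i^P$ by combining the confidence-set defining inequality with a triangle-inequality detour through $\hstar_{P,i}$, the empirical-to-population comparison, and the BCC, arriving at the same self-bounding inequality $x \lesssim \sqrt{A}\,x^{\beta/2} + A^{1/(2-\beta)}$. The only cosmetic difference is that the paper interleaves the ERM bound into the middle of the argument rather than isolating it as a preliminary step, but the logic is identical.
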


Let $\event_Q$ and $\event_P$ be the events where the bounds in Claims \ref{claim: upper bound on i hat Q} and \ref{claim: approximately ERM risk for one level} hold. 
Let $\event_H$ be the event that the bounds given in \Cref{lemma: uniform bound} hold over the hypothesis class $\{\hat{h}_{P, i}, \hat{h}_Q\}$ and  held out samples from $Q$. Note that complexity of the class $\{\hat{h}_{P, i}, \hat{h}_Q\}$ is one.
    We first claim that under $\event_H$ and $\event_Q$, if $\hat{h} = \hhat_{P,i}$, then  $\E_Q(\hhat_{P,i}) \le  25  c \genGap{n_Q} {\delta} {\classComp{\Hyp_{i^*_Q}}}^{\frac{1}{2 - \beta_Q}}$.
     Suppose that $\hat{h} = \hhat_{P,i}$, which means that the if-statement condition in \Cref{alg: single level phi}  must have been satisfied. Under  $\event_H$, 
    \begin{align}
         \E_Q(\hhat_{P,i}) - \E_Q(\hat{h}_Q)  =& \Risk_Q(\hhat_{P,i}) - \Risk_Q(\hat{h}_Q)
         \le 
        \emRisk_Q'(\hhat_{P,i}) - \emRisk_Q'(\hat{h}_Q)
        \\ & +    \left(\P'_{n_Q}[\hhat_{P,i} \neq \hat{h}_Q] \cdot \genGap{n_Q}{\delta}{1}\right)^{1/2} + c \genGap{n_Q}{\delta}{1}
        \\ \le & 
        2 \left(\P'_{n_Q}[\hhat_{P,i} \neq \hat{h}_Q] \cdot \genGap{n_Q}{\delta}{1}\right)^{1/2} + 2c \genGap{n_Q}{\delta}{1}
        \\ \le &  2 \left(\P'_{n_Q}[\hhat_{P,i} \neq \hstar_Q] \cdot \genGap{n_Q}{\delta}{1}\right)^{1/2}
        \\ &+ 2 
        \left(\P'_{n_Q}[\hat{h}_Q \neq \hstar_Q] \cdot \genGap{n_Q}{\delta}{1}\right)^{1/2}
       + 
         2c \genGap{n_Q}{\delta}{1}.
    \end{align}
    
    By the second part of \Cref{lemma: uniform bound} and BCC,    
    \begin{align} \label{eq: source target risk difference}
         \E_Q(\hhat_{P,i}) - \E_Q(\hat{h}_Q) 
         \le &  
         2 \left( 2 \P_Q[\hhat_{P,i} \neq \hstar_Q] \cdot \genGap{n_Q}{\delta}{1}\right)^{1/2} \\
          & +
         2 
        \left( 2\P_Q [\hat{h}_Q \neq \hstar_Q] \cdot \genGap{n_Q}{\delta}{1}\right)^{1/2}
         + 
     4c \cdot \genGap{n_Q}{\delta}{1} \\
     \le & 
      2 \left( 2 C_{\beta_Q} \E_Q(\hhat_{P,i})^{\beta_Q}\cdot \genGap{n_Q}{\delta}{1}\right)^{1/2}
     \\ & +
     2 
    \left( 2 C_{\beta_Q} \E_Q (\hat{h}_Q)^{\beta_Q} \cdot \genGap{n_Q}{\delta}{1}\right)^{1/2}
     + 
     4c \cdot \genGap{n_Q}{\delta}{1}.
\end{align}

Assuming that $1 \le \classComp{\Hyp_{i^*_Q}}$, so that $ \genGap{n_Q}{\delta}{1} \le A(n_Q, \delta, \classComp{\Hyp_{i^*_Q}})$ and plugging in the bound in \cref{eq: target only excess risk upper bound.}, we can upper bound 
\begin{align}
     2 
    \left( 2 C_{\beta_Q} \E_Q (\hat{h}_Q)^{\beta_Q} \cdot \genGap{n_Q}{\delta}{1}\right)^{1/2}
      &\le
      8 c A(n_Q, \delta_{i^*_Q}, \classComp{\Hyp_{i^*_Q}})^{\frac{1}{2 - \beta_Q}}.
\end{align}

Now if $ 2 \left( 2 C_{\beta_Q} \E_Q(\hhat_{P,i})^{\beta_Q}\cdot \genGap{n_Q}{\delta}{1}\right)^{1/2} >  8 c A(n_Q, \delta_{i^*_Q}, \classComp{\Hyp_{i^*_Q}})^{\frac{1}{2 - \beta_Q}} \ge \E_Q(\hat{h}_Q) $, then going back to \cref{eq: source target risk difference},  we can upper bound 
\begin{align}
     \E_Q(\hhat_{P,i})
     &\le   8
     \left( 2 C_{\beta_Q} \E_Q(\hhat_{P,i})^{\beta_Q}\cdot \genGap{n_Q}{\delta}{1}\right)^{1/2}.
\end{align}

Solving for $\E_Q(\hhat_{P,i})$ gives the bound 
\begin{equation}
     \E_Q(\hhat_{P,i})
     \le C \genGap{n_Q}{\delta}{1}^{\frac{1}{2 - \beta_Q}}.
\end{equation}

On the other hand, if $ 2 \left( 2 C_{\beta_Q} \E_Q(\hhat_{P,i})^{\beta_Q}\cdot \genGap{n_Q}{\delta}{1}\right)^{1/2} \le  8 c \genGap{n_Q} {\delta_{i^*_Q}} {\classComp{\Hyp_{i^*_Q}}}^{\frac{1}{2 - \beta_Q}}$, then  the term $8 c \genGap{n_Q} {\delta_{i^*_Q}} {\classComp{\Hyp_{i^*_Q}}}^{\frac{1}{2 - \beta_Q}}$ dominates the r.h.s. of in \cref{eq: source target risk difference} and we get 
\begin{align}
     \E_Q(\hhat_{P,i}) 
     &\le
    24  c \genGap{n_Q} {\delta_{i^*_Q}} {\classComp{\Hyp_{i^*_Q}}}^{\frac{1}{2 - \beta_Q}}
    + 
    \E_Q(\hat{h}_Q) 
    \le  25  c \genGap{n_Q} {\delta_{i^*_Q}} {\classComp{\Hyp_{i^*_Q}}}^{\frac{1}{2 - \beta_Q}}.
\end{align}
In either case, if $\hat{h} = \hhat_{P,i}$ then
$
     \E_Q(\hhat_{P,i}) 
     \le
      25  c \genGap{n_Q} {\delta_{i^*_Q}} {\classComp{\Hyp_{i^*_Q}}}^{\frac{1}{2 - \beta_Q}},
$
or equivalently, if $\E_Q(\hhat_{P,i}) 
     >
      25  c \genGap{n_Q} {\delta_{i^*_Q}} {\classComp{\Hyp_{i^*_Q}}}^{\frac{1}{2 - \beta_Q}}$, then $\hat{h} = \hat{h}_Q$.
We can also argue that if $\E_Q(\hhat_{P,i}) \le \E_Q(\hat{h}_Q)$, then $\hat{h} = \hhat_{P,i}$. Suppose that $\E_Q(\hhat_{P,i}) \le \E_Q(\hat{h}_Q)$, then under the event $\event_H$, 
\begin{align}
     \emRisk_Q'(\hhat_{P,i}) - \emRisk_Q'(\hat{h}_Q) 
     \le & 
       \Risk_Q(\hhat_{P,i}) - \Risk_Q(\hat{h}_Q) 
      +    \left(\P'_{n_Q}[\hhat_{P,i} \neq \hat{h}_Q] \cdot \genGap{n_Q}{\delta}{1}\right)^{1/2} 
      \\ & + c \genGap{n_Q}{\delta}{1}
    \\  \le & 
    \left(\P'_{n_Q}[\hhat_{P,i} \neq \hat{h}_Q] \cdot \genGap{n_Q}{\delta}{1}\right)^{1/2} 
     + c \genGap{n_Q}{\delta}{1},
\end{align}
which means that the if-statement condition in \Cref{alg: upper bound} will be satisfied and $\hat{h} = \hhat_{P,i}$.

   We can then conclude that under the events $\event_H$ and $\event_Q$, 
\begin{equation}
    \E_Q(\hat{h}) \le  \min \left\{ \E_Q(\hhat_{P,i}),   25  c \genGap{n_Q} {\delta_{i^*_Q}} {\classComp{\Hyp_{i^*_Q}}}^{\frac{1}{2 - \beta_Q}}  \right\}.
\end{equation}

 Using the transfer exponent condition described in \Cref{def: transfer exponent model class} we would get 
\begin{equation} \label{eq: excess risk under source one level}
      \E_Q(\hhat_{P,i}) \le   \E_Q(\hstar_{P, i}) +   \E_Q(\hhat_{P,i}, \hstar_{P, i}) 
      \le   \E_Q(\hstar_{P, i}) + C_{\rho_{i^*_P}}  \E_P(\hhat_{P,i}, \hstar_{P, i})^{\frac{1}{\rho_{i^*_P}}}.
\end{equation}

Applying \Cref{claim: approximately ERM risk for one level},  under the event $\event_P$ 
\begin{equation}
    \E_P(\hhat_{P,i}) 
    \le
    C \genGap{n_P}{\delta_{i}}{\classComp{\Hyp_{i}}}^{\frac{1}{2 - \beta_{P, i}}}.
\end{equation}

Plugging this back into \cref{eq: excess risk under source one level}, we get
\begin{equation}
    \E_Q(\hhat_{P,i}) \le 
    \E_Q(\hstar_{P, i})
    +
     C \cdot C_{\rho_{i}}  
    \genGap{n_P}{\delta_{i}}{\classComp{\Hyp_{i}}}^{\frac{1}{(2 - \beta_P) \rho_{i}}}
\end{equation}

Finally, we can conclude that under events $\event_H, \event_Q$, and $\event_P$, which hold simultaneously with probability of at least $1 - 3 \delta $,

\begin{equation}
     \E_Q(\hat{h}) \le  \min \left\{ 
 \E_Q(\hstar_{P, i})
    +
     C \cdot C_{\rho_{i}}  
    \genGap{n_P}{\delta_{i}}{\classComp{\Hyp_{i}}}^{\frac{1}{(2 - \beta_P) \rho_{i}}}
    , 
    c A(n_Q, \delta_{i^*_Q}, \classComp{\Hyp_{i^*_Q}})^{\frac{1}{2 - \beta_Q}} \right\}.
\end{equation}    
\end{proofof}

\begin{remark}
    Notice from \cref{eq: source target risk difference} that instead of $\beta_Q$, we could have achieved the bound in terms of 
    $\min\braces{ \beta_{Q,i^*_P}, \beta_{Q,i^*_Q} } = \beta_{Q,\max\{i^*_P,i^*_Q\}}$.
\end{remark}

\begin{proofof}{\Cref{claim: upper bound on i hat Q}}
    Since the hypothesis classes are nested, $\hstar_Q \in \Hyp_j$ for every $ j \ge i^*_Q$. First, we argue that with probability of at least $1- \delta$, for every $j \ge i^*_Q$,
\begin{equation}
    \hstar_Q \in \hHyp_j^Q,
\end{equation}
which would then imply that $\bigcap_{j\ge i^*_Q}^\infty \hHyp_j^Q \neq \emptyset$, and consequently $\hat{i}_Q \le i^*_Q$. Let $ \hat{h}_{Q,j}$ be an empirical risk minimizer for level $j$. Let $\event_Q$ be the event where the bounds in \Cref{lemma: uniform bound} hold for every level of the hierarchy with $\delta_i = \delta w_i$ for each level, so that $\event_Q$ occurs with probability of at least $1 - \sum_{j = 1}^\infty \delta_j = 1- \delta$. Under $\event_Q$,  for every $j \ge i^*_Q$
\begin{align}\label{eq: uniform ratios target}
    \Risk_Q(\hat{h}_{Q, j}) - \Risk_Q(\hstar_Q) 
        \le & 
        \emRisk_Q(\hat{h}_{Q, j}) - \emRisk_Q(\hstar_Q) 
        +  
        \sqrt{ \P_{n_Q}[\hat{h}_{Q, j} \neq \hstar_Q] \cdot  \genGap{n_Q}{\delta_j}{\classComp{\Hyp_j}}} \\
         & + 
      \genGap{n_Q}{\delta_j}{\classComp{\Hyp_j}},
\end{align}
since $ \hstar_Q$ is a risk minimizer, moving the risk difference to the right hand side of the inequality and  the empirical risk difference to the left hand side gives
\begin{align}
    \emRisk_Q(\hstar_Q) - \emRisk_Q(\hat{h}_{Q, j}) 
    & \le 
        C \sqrt{ \P_{n_Q}[\hat{h}_{Q, j} \neq \hstar_Q] \cdot  \genGap{n_Q}{\delta_j}{\classComp{\Hyp_j}}} + c
      \genGap{n_Q}{\delta_j}{\classComp{\Hyp_j}}. 
\end{align}
Therefore, by \Cref{def : empirical level set}, under $\event_Q$, $  \hstar_Q \in \hHyp_j^Q$ for every $j \ge i^*_Q$, implying that $\hat{i}_Q \le i^*_Q$.
\end{proofof}

\begin{proofof} {\Cref{claim: approximately ERM risk for one level}}
Let $\event_P$ be the event that the bound in \Cref{lemma: uniform bound} holds over $P$ samples.
\begin{align}
    \E_P(\hat{h}_i, \hstar_{P, i})  = & \Risk_P(\hhat_i)- \Risk_P(\hstar_{P, i})
     \le  \label{eq: source empirical excess risk}
     \emRisk_P(\hat{h}_i) - \emRisk_P(\hat{h}_{P,i})  
       \\ & + \label{eq: source conf interval}
       \paren{ \P_{n_P} [\hat{h}_i \neq \hstar_{P, i}] \cdot  \genGap{n_P}{\delta_{i}}{\classComp{\Hyp_{i}}}}^{1/2} + c
      \genGap{n_P}{\delta_{i}}{\classComp{\Hyp_{i}}}.
\end{align}
Since $\hat{h}_i \in \hHyp_{i}^P$, the expression in \ref{eq: source empirical excess risk} can be upper bounded by 

\begin{equation}
    \left(\P_{n_P}[\hat{h}_i \neq \hat{h}_{P,i}] \cdot  \genGap{n_P}{\delta_{i}}{\classComp{\Hyp_{i}}}\right)^{1/2} + 
    c \genGap{n_Q}{\delta_{i}}{\classComp{\Hyp_{i}}}.
\end{equation}
Furthermore, under $\event_P$ we can upper bound
\begin{align}
    \P_{n_P}[\hat{h}_i\neq \hat{h}_{P,i}] 
    &\le
    \P_{n_P}[\hat{h}_i\neq \hstar_{P, i}] + 
    \P_{n_P}[\hstar_{P, i} \neq \hat{h}_{P,i}]
    \\ & \le 
    2\P_P[\hat{h}_i\neq \hstar_{P, i}] + 
    2\P_P[\hstar_{P, i} \neq \hat{h}_{P,i}] + c \genGap{n_P}{\delta_i}{\classComp{\Hyp_{i}}},
\end{align}

where the second inequality followed by applying the second part of  \Cref{lemma: uniform bound}, which is stated in equation \ref{eq: lemma probabilities uniform}.

By  Bernstein class noise condition (\Cref{assmp: Bernstein noise condition}), the first two terms above can be upper bounded by

\begin{equation}
    C  \E_P(\hat{h}_i, \hstar_{P, i})^{\beta_{P, i}} +  C  \E_P( \hat{h}_{P,i}, \hstar_{P, i})^{\beta_{P, i}},
\end{equation}

then going back to equation \ref{eq: source empirical excess risk}, we get
\begin{align}\label{eq: upper bound for source empirical excess risk}
    \emRisk_P(\hat{h}_i) - \emRisk_P(\hat{h}_{P,i}) 
     \le &
    C \left( \E_P(\hat{h}_i, \hstar_{P, i})^{\beta_{P, i}}\cdot  \genGap{n_P}{\delta_i}{\classComp{\Hyp_{i}}}\right)^{1/2}
    \\ & +
     C \left( \E_P(\hat{h}_{P,i}, \hstar_{P, i})^{\beta_{P, i}} \cdot  \genGap{n_P}{\delta_i}{\classComp{\Hyp_{i}}}\right)^{1/2} + c \genGap{n_P}{\delta_i}{\classComp{\Hyp_{i}}}.
\end{align}

Again , using  \Cref{assmp: Bernstein noise condition} and under $\event_P$, we can also upper bound the first term in equation \ref{eq: source conf interval}, 
\begin{align}
   \paren{ \P_{n_P} [\hat{h}_i \neq \hstar_{P, i}] \cdot  \genGap{n_P}{\delta_{i}}{\classComp{\Hyp_{i}}}}^{1/2}
    \le &  \paren{ \P_P [\hat{h}_i \neq \hstar_{P, i}] \cdot  \genGap{n_P}{\delta_{i}}{\classComp{\Hyp_{i}}}}^{1/2} \\
     & + c
      \genGap{n_P}{\delta_{i}}{\classComp{\Hyp_{i}}} 
     \\  \le 
    & C \left( \E_P(\hat{h}_i, \hstar_{P, i})^{\beta_{P, i}}\cdot  \genGap{n_P}{\delta_i}{\classComp{\Hyp_{i}}}\right)^{1/2} \\
    & +
      C \genGap{n_P}{\delta_{i}}{\classComp{\Hyp_{i}}}.
\end{align}

Note that this upper bound can be absorbed into the bound given in 
\ref{eq: upper bound for source empirical excess risk} by adjusting the constants. In total, we get
\begin{align}
    \E_P(\hat{h}_i, \hstar_{P, i})  \le & 
     C \left( \E_P(\hat{h}_i, \hstar_{P, i})^{\beta_{P, i}}\cdot  \genGap{n_P}{\delta_i}{\classComp{\Hyp_{i}}}\right)^{1/2}
    \\ & +
     C \left( \E_P(\hat{h}_{P,i}, \hstar_{P, i})^{\beta_{P, i}} \cdot  \genGap{n_P}{\delta_i}{\classComp{\Hyp_{i}}}\right)^{1/2} + c \genGap{n_P}{\delta_i}{\classComp{\Hyp_{i}}}.
\end{align}
      
Now since $\hat{h}_{P,i}$ is an ERM over the class $\Hyp_{i}$ under $P$, 
under  \Cref{assmp: Bernstein noise condition} and event $\event_P$, using \cref{lemma: uniform bound} we can upper bound its' 
excess risk by 
\begin{equation}
    C \left(\genGap{n_P}{\delta_{i}}{\classComp{\Hyp_{i}}}\right)^{\frac{1}{2 - \beta_{P, i}}
    },
\end{equation}
which leads to the upper bound
\begin{align}
    \E_P(\hat{h}_i,  \hstar_{P, i}) \le &
     C \left( \E_P(\hat{h}_i, \hstar_{P, i})^{\beta_{P, i}}\cdot  \genGap{n_P}{\delta_i}{\classComp{\Hyp_{i}}}\right)^{1/2}
    \\ & +
     C \left(\genGap{n_P}{\delta_{i}}{\classComp{\Hyp_{i}}}\right)^{\frac{1}{2 - \beta_{P, i}}
    }.
\end{align}

Consider the inequality above without the term on the second line, we can then solve for $\E_Q(\hat{h}_Q)$ and get the bound in the statement of this claim, since the solution will in the same order as the second line. 
\end{proofof}

\section{Remaining Lower Bound Proofs} \label{app sec: lower bound proofs}
\subsection{Proof of \Cref{thm: basic lower bound}}

The proof builds on Theorem 1 in \cite{hanneke2019value}, simply by enriching the family of distributions used therein.  
Let $ \varepsilon_Q \defeq \left( \frac{d}{n_Q} \right)^{\frac{1}{2 - \beta_Q}}$ and 
$\varepsilon_P \defeq \left(\frac{d}{n_P} \right)^{\frac{1}{(2- \beta_P)\rho}}$,
note that
    \begin{align}
        \min \left \{ \varepsilon_P+ \alpha , \varepsilon_Q \right \}
         & \ge 
        \min \left \{  \max \left \{
        \varepsilon_P , \alpha \right\} , \varepsilon_Q \right \}
         = 
        \max \left \{  \min \left \{
        \varepsilon_P , \varepsilon_Q \right\}
        ,\min \left \{
        \alpha , \varepsilon_Q \right \} \right \},
    \end{align}
    where the equality follows by distributing the $\min$.
Let $c_1, c_2 \le 2$ be constants that will be determined later; define 
    \begin{equation}
        \epsilon_1  \defeq c_1 \min \left \{
        \varepsilon_P, \varepsilon_Q \right\} ,
    \end{equation}
    and 
    \begin{equation}
        \epsilon_2  \defeq c_2 \min \left \{
        \alpha , \varepsilon_Q \right \}.
    \end{equation}
    
    Theorem 1 of \cite{hanneke2019value} gives a lower bound of order 
    $c \epsilon_1$ which holds with probability of at least 
    $\frac{3 - 2 \sqrt{2}}{8}$, for some universal constant $c$. 
    Here, we will construct another family of distributions that would lead to a lower bound of order $\Tilde{c}\epsilon_2$ for a universal constant $\Tilde{c}$. In fact, the only difference is that 
    the source distribution is the same for all the members of hard family of distributions.

    Source and target marginal distributions are supported on a set of points $x_0, x_1, \dots, x_{d-1}$ in the domain $\X$ that is shattered by $\Hyp$. 
    Only the target distribution in the family of hard distributions $\{P^{n_P} \times Q_\sigma^{n_Q}\}$ depends on $\sigma \in \{\pm 1\}^{d-1}$. Source marginal distribution $P_X$ is the uniform distribution on $x_0, x_1, \dots x_{d-1}$, and $P_{Y \mid X = x_i}(Y =  1) = 1$. 
    For the target, let $Q_{X, \sigma}(x_0) = 1 - \epsilon_2^{\beta_Q}$ and $Q_{X, \sigma}(x_i) = \frac{\epsilon_2^{\beta_Q}}{d-1}$for $i \ge 1$. The target labels  for $i \ge 1$ are given by $Q_{\sigma, Y \mid X = x_i}(Y= 1) = \frac{1}{2} + \frac{\sigma_i}{4} \cdot \epsilon_2^{1- \beta_Q}$, and $Q_{\sigma, Y\mid X = x_0}(Y=1)= 1$.

    Now we can verify the Bernstein class noise condition. Let $h_\sigma \in \Hyp$ be the Bayes classifier under $Q_\sigma$ and let $\hdist(.,.)$ denote number of coordinates $\sigma, \sigma'$ differ, or equivalently Hamming distance of $\frac{\sigma + 1^{d-1}}{2}$ and $\frac{\sigma' + 1^{d-1}}{2}$ 
    Note that for any distinct pair $\sigma, \sigma' \in \{\pm 1\}^{d-1}$, 
    \begin{equation} \label{eq: from hamming distance to excess risk}
        \E_{Q_{\sigma'}} (h_\sigma ) = \hdist(\sigma', \sigma) \cdot 
        \frac{\epsilon_2^{\beta_Q}}{2 (d-1)} \cdot \epsilon_2^{1- \beta_Q} = 
        \frac{\hdist(\sigma', \sigma)}{2(d-1)} \cdot \epsilon_2,
    \end{equation}
    while 
    \begin{equation}
        \P_{Q_{\sigma'}} (h_{\sigma} \neq h_{\sigma'}) = \hdist(\sigma', \sigma) \cdot 
        \frac{\epsilon_2^{\beta_Q}}{d-1}.
    \end{equation}
Additionally, for every $\sigma'$, $\E_{Q_{\sigma'}}(h^*_P) \le \epsilon_2/2  \le c_2/2 \cdot \alpha$.  

By Proposition 5 of \cite{hanneke2019value}, there exists a $(d-1)/8$ packing $\mathcal{N}$ of the $d-1$ dimensional hypercube such that the all ones vector $1^{d-1} \in \mathcal{N}$ and $\size{\mathcal{N}} \ge 2^{\frac{d-1}{8}}$, where the metric used for the packing is Hamming distance.  Suppose that $\mathcal{N}'$ is such a packing over the $\{\pm 1\}^{d-1}$ hypercube. Now consider the restriction of the family of distributions to $\sigma \in \mathcal{N}'$, by \cref{eq: from hamming distance to excess risk}, for every distinct $\sigma, \sigma' \in \mathcal{N}'$, $\E_{Q_\sigma}(h_{\sigma'}) \ge  \frac{\epsilon_2}{16}$. 

Next, we show that the KL divergence between distributions parameterized by any two distinct $\sigma, \sigma'$ is small. First, write

\begin{align} \label{eq: KL bound}
    \KLDiv{P^{n_P} \times Q_{\sigma}^{n_Q}}{P^{n_P} \times Q_{\sigma'}^{n_Q}} = n_Q \KLDiv{Q_{\sigma}}{\Q_{\sigma'}} 
    = n_Q \sum_{i=1}^{d-1} \frac{\epsilon_2^{\beta_Q}}{d-1} \cdot 
        \KLDiv{Q_{\sigma, Y \mid X = x_i}}{Q_{\sigma', Y \mid X = x_i}}.
\end{align}
    Now we use Lemma 2 in \cite{hanneke2019value}, which gives an upper bound on KL divergence of two Bernoulli distributions with small bias, to get that $ \KLDiv{Q_{\sigma, Y \mid X = x_i}}{Q_{\sigma', Y \mid X = x_i}} \le c_0 \epsilon_2^{2 - 2 \beta_Q}$, as long as $\epsilon_2^{1- \beta_Q} < 1/2$. Going back to \cref{eq: KL bound}, we get 
    \begin{align}
        \KLDiv{P^{n_P} \times Q_{\sigma}^{n_Q}}{P^{n_P} \times Q_{\sigma'}^{n_Q}} \le 
        n_Q \cdot \epsilon_2^{\beta_Q} \cdot c_0/4 \cdot \epsilon_2^{2 - 2\beta_Q} \le c_0/4 \cdot  n_Q  \cdot \epsilon_2^{2 - \beta_Q}
        \le c_0/4 \cdot c_2^{2- \beta_Q} \cdot d. 
    \end{align}
Now pick $c_2$ such that $c_0/4 \cdot c_2^{2- \beta_Q} \cdot d < 1/8 \log (\frac{d-1}{8})$, so that we can apply Proposition 4 of \cite{hanneke2019value} (which is Theorem 2.5 of \cite{tsybakov2009introduction}) to get that 

\begin{equation}
    \sup_{\sigma \in \mathcal{N}} \P_{P^{n_P} \times Q_\sigma^{n_Q}}
    \left[ \E_{Q_\sigma}(\hat{h}) \ge 1/32 \cdot \epsilon_2  \right] 
    \ge \frac{3 -2 \sqrt{2}}{8}.
\end{equation}

\subsection{Proof of \Cref{thm: lower bound for adaptivity to erisk across levels}}

Let $x_0, x_1$  be the set of points that are exclusively shattered by $\Hyp_2$, then it is possible to pick $h_1 \in \Hyp_1$ and $h_2 \in \Hyp_2 \setminus \Hyp_1$ such that $h_1$ and $h_2$ disagree on exactly one of $x_0, x_1$.   
Without loss of generality assume that $h_1(x_0)= h_2(x_0) = y_0$ and $h_2(x_1) \neq h_1(x_1)$. Since there is no label noise, $\beta_{P, i} = \beta_{Q, i} = 1$, for $i \in \{1, 2\}$.

Source distribution does not depend on $b$, so we have $P_{X, \sigma}(x_0) = P_{X, \sigma} (x_1) = 1/2$, and the labels are given by $h_2$, 
Target marginal distribution also does not depend on $b$, and is given by $Q_{X, \sigma}(x_0) = 1 - \alpha$, and $Q_{X, \sigma}(x_1) = \alpha$. The labels for target are set so that when $\sigma =  1$, $h_1$ is a risk minimizer, and when $\sigma= 2$, $h_2$ is a risk minimizer. That is, $Q_{Y \mid X = x, \sigma}(1) = h_\sigma(x)$.
Note that for every $\sigma, i \in \{1, 2\}$
\begin{align} \label{eq: max q risk of optimal source hypothesis}
    \E_{Q_\sigma}(\hstar_{P,i}) =
    \begin{cases}
        0 &    \sigma =  i \\
        \alpha & b\neq i.
    \end{cases}
\end{align}

For any classifier $\hat{h}$, define
\begin{align}
    \hat{\sigma} \defeq
    \begin{cases}
        1 & \hat{h}(x_1) = h_1(x_1)\\
        2 & \hat{h}(x_1) = h_2(x_1).
    \end{cases}
\end{align}

Let $\event_\sigma$ be the event where under target all the samples are $(x_0,y_0)$, then for any $\sigma$
\begin{align}
        \Prob_{P_\sigma^\np \times Q_\sigma^\nq}\brackets{\event_B} = \paren{1 - \alpha}^\nq \ge  
        \paren{1 - \frac{1}{c_1\nq}}^\nq
        \ge 1 - 1/c_1.
\end{align}

Under the event $\event_B$, $\hat{h}$ cannot distinguish between $Q_1$ and $Q_2$. So under the event $\event_B$, no classifier can output the correct answer more than half the times, so with probability of at least $\frac{1}{2} \cdot \paren{1 - 1/c_1}$ 
\begin{equation}
    \E_{Q_\sigma}(\hat{h}) \ge \E_{Q_\sigma}(h_{\hat{\sigma}}) = \E_{Q_\sigma}(h_{P, \hat{\sigma}}) = \max_i \braces{ \E_{Q_\sigma}(\hstar_{P, i})}.
\end{equation}
 Setting $c_1 = 2$ proves the statement.

\subsection{Proof of \Cref{prop: adaptivitiy bad distributions}}

\paragraph{Proper estimators.}
Let $\hat{h} \in \Hyp$ be some proper estimator. If $\hat{h} \in \{h_1, h_2\}$, set $\sigma_1 = +1$ and $\sigma_2$ arbitrary.
If $\hat{h}\in \{h'_1, h'_2\}$, then set $\sigma_1 = -1$, and $\sigma_2$ such that the region that has mass $(\frac{1}{c_1 n_P})^{1/\rho_a}$ is labelled incorrectly. That is, if $\hat{h} = h'_1$, $\sigma_2 = +1$. It is easy to see that with this choice of $\sigma$, $\E_{Q_{\sigma}} (\hat{h}, h^*_{Q_{\sigma}}) \ge (\frac{1}{32 n_P})^{1/\rho_a}$.

\paragraph{Improper estimators.}
Let $\hat{h}$ be an improper estimator.
For $-1/2 \le \epsilon \le 1/2$, we say that $\hat{h}$ has bias $\epsilon$ on an interval $I$ if it classifies $1/2 + \epsilon$ fraction of the interval under uniform measure as positive. That is, $\exf{U(I)}{\hat{h}(X)} = 2 \epsilon$.
Note that if a classifier has bias $\epsilon$ on an interval $I$, and $\sgn{\epsilon} \neq \sgn{I}$, then the error of the classier on that interval is $1/2 + \abs{\epsilon}$. 
Even if $\sgn{\epsilon} = \sgn{I}$, as long all of the interval has the same label, the error will be at least $1/2 - \abs{\epsilon}$.

For simplicity let $a \defeq  (\frac{1}{c_1 n_P})^{1/\rho_a}$, $b \defeq (\frac{1}{c_1 n_P})^{1/\rho_b}$, and recall that $\Delta = a - b$. Also note that in our construction, the risk minimizer has risk equal to $a$ under $\sigma = (+1, \sigma_2)$, while the risk minimizer under $\sigma = (-1, \sigma_2)$ has risk equal to $ \Delta/2$.

Now fix some improper estimator $\hat{h}$, and let $\el, \elin, \erin, \er$ be biases of $\hat{h}$ on ${\Lo, \Li, \Ri}$ and ${\Ro}$ respectively. We break down the proof to three cases. 

\textbf{Case 1: } $\elin, \erin \le -1/4$.

Let $\sigma_1 = +1$ so that the intervals $\Li, \Ri$ have positive labels and $\hat{h}$ has risk of at least $3/ 4 (a + b)$ on the intervals $\Ri, \Li$. 
We can pick $\sigma_2$ such that the error on the region $\Lo \cup \Ro$ is at least 
\begin{equation}
    \paren{(\frac{1}{2}  + \max (\abs{\el},\abs{\er})) + (\frac{1}{2} - \min(\abs{\el}, \abs{\er} )) } \cdot \frac{\Delta}{2} \ge \frac{\Delta}{2},
\end{equation}
by making sure that $\hat{h}$ makes more error on the $\Lo$ or $\Ro$ interval that has the maximum absolute bias.
Then for this $\sigma$, $\Risk_{\sigma}(\hat{h}) \ge \frac{a - b}{2} + \frac{3(a + b)}{4} \ge \frac{5a}{4}$, while the risk minimizer has error of $a$, so $\E_{Q_\sigma}(\hat{h}) \ge \frac{a}{4}$.

\textbf{Case 2:}$\abs{\el + \er} \ge 1/4$, and Case 1 condition does not hold.

Set $\sigma_1 = -1$, and pick $\sigma_2$ such that $\sigma_2 = - \sgn{\el + \er}$. Note that total bias over the region $\Lo \cup \Ro$ would be $\frac{\er + \el}{2}$, since  $ \paren{1/2 + \el}\cdot \frac{\Delta}{2} + \paren{1/2 + \er} \cdot \frac{\Delta}{2} = \paren{\frac{1}{2} + \frac{\er + \el}{2}} \cdot \Delta$. 
On the other hand, since we are in case 2, it must be that either $\elin > -1/4$ or $\erin > -1/4$, which would mean that the error over the intervals $\Li$ and $\Ri$ is at least $1/4 b$

Then we can ensure that 
\begin{equation}
    \E_{Q_\sigma}(\hat{h}) \ge  \paren{\frac{1}{2} + \frac{\abs{\er + \el}}{2}} \cdot (a - b)  +  \frac{b}{4} - \frac{a- b}{2}  
    \ge 
    \frac{a-b}{8} + \frac{b}{4}
    \ge \frac{a}{8}.
\end{equation}

\textbf{Case 3:}$\abs{\er + \el} < 1/4$ and the condition in Case 1 does not hold.

Set $\sigma_1 = -1$, and pick $\sigma_2$ such that whichever of $\Ri$ or $\Li$ that has more positive bias is assigned mass $a$. Since we are not in Case 1,  $\max(\erin, \elin) > -1/4$, leading to error of at least $\frac{a}{4}$ over $\Ri \cup \Li$. 
On the other hand, since the bias in the regions $\Ro$ and $\Lo$ is $\frac{\er + \el}{2}$, we have 

\begin{equation}
    \E_{Q_\sigma}(\hat{h}) \ge \paren{\frac{1}{2} - \frac{\abs{\er + \el}}{2}} \cdot (a-b) + \frac{a}{4}
    - \frac{a - b}{2} 
    \ge \frac{a}{8}.
\end{equation}

The statement of the proposition follows by lower bounding $ \paren{\frac{1}{32}}^{1/\rho_a} \ge \frac{1}{32}$.

\section{Adaptivity Lower Bounds for a Larger Class} \label{sec: adaptivity lower bounds for a larger model class}
In this section, restricting to proper learners, we show similar adaptivity lower bounds as in \Cref{thm: simple rho lower bound } for a larger model class.
Let $\bar{\Hyp}_1= \{h_t\}$ be the class of one sided thresholds, where $h_t(x) = \sgn{x - t}$. Let $\bar{\Hyp}_2$ additionally include one sided intervals, where only the points inside a closed interval are labelled positive. 

\begin{theorem}\label{thm: extended adaptivity lower bound}
  Let $\bar{\Hyp}_1$ and $\bar{\Hyp}_2$ be the class of one sided thresholds and intervals as described above. Pick any 
$\rho_a > \rho_b \geq 1$, and any $n_P$ and $n_Q$, where   $ \left( \frac{1}{32 n_P}\right)^{1/\rho_a} \le \min \braces{ \frac{1}{24}, \frac{1}{32 n_Q}}$. There exists a family of distributions 
    $\{ \paren{P_\sigma, Q_\sigma}\}$, indexed by some $\sigma$, such that the following hold. 
\begin{enumerate}[leftmargin=14pt]
        \item[(i)] For all $\sigma $,   minimal transfer exponents from $P_\sigma$ to $Q_\sigma$
        are the set $\{\rho_1,\rho_2\} = \{\rho_a,\rho_b\}$.

         \item[(ii)] For all $\sigma$, we have 
        $\min_{i} \phi_\flat(i) = \left( \frac{1}{n_P}\right)^{1/\rho_b}$, strictly less than $\max_{i} \phi_\flat(i) = \left( \frac{1}{n_P}\right)^{1/\rho_a}$.
    
\end{enumerate} 
    \begin{align}
        \text{We have that, } \forall \hat h, \quad \sup_\sigma \Prob_{P_\sigma^\np \times Q_\sigma^\nq} \left[  \E_{Q_\sigma} (\hat{h}) \ge \frac{1}{64}\cdot \max_{i} \phi_\flat(i)\right] \ge 1/8 .
    \end{align}
\end{theorem}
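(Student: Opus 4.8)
The plan is to mirror the structure of the proof of \Cref{thm: simple rho lower bound }: exhibit a four‑member family $\braces{(P_\sigma,Q_\sigma)}_{\sigma\in\braces{\pm1}^2}$ on $\X=[0,1]$ that is indistinguishable to any learner unless a sample hits one of two small ``pivot'' intervals, reduce the theorem to a ``bad‑$\sigma$'' statement valid for every (possibly improper) $\hat h$, and finish by a typical‑samples argument. I keep the layout $\Lo=[1/9,1/3]$, $\Li=[1/3,1/3+r]$, $\Ri=[2/3-r,2/3]$, $\Ro=[2/3,1]$ with $r=1/9$ and a ``bulk'' region $[1/3+r,2/3-r]$ labeled $-1$ under $P$ and under every $Q_\sigma$. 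Write $a\defeq\paren{\tfrac1{32\np}}^{1/\rho_a}$, $b\defeq\paren{\tfrac1{32\np}}^{1/\rho_b}$, $\Delta\defeq a-b$. The source marginal $P_X$ (independent of $\sigma$) puts mass $\tfrac1{32\np}$ on each of $\Li,\Ri$ and the remaining $\Theta(1)$ mass on $\Lo,\Ro$ and the bulk; the target marginal $Q_{X,\sigma}$ (depending only on $\sigma_2$) assigns masses $\braces{a,b}$ to $\Li,\Ri$ in an order chosen by $\sigma_2$, mass $\Delta/2$ to each of $\Lo,\Ro$, and the rest to the bulk. The difference from \Cref{thm: simple rho lower bound } is in the choice of labels: since $\bar\Hyp_2$ now contains \emph{all} one‑sided intervals, I pick the $\sigma$‑dependent source labels so that they are realized \emph{exactly} by a single one‑sided interval $h_2^*$ but are $\Omega(1)$‑far under $P$ from every threshold, and the $\sigma$‑dependent target labels so that $h_2^*$ and the $P$‑optimal threshold $h_1^*$ are simultaneously $Q_\sigma$‑optimal over $\bar\Hyp_2$, resp.\ $\bar\Hyp_1$, for \emph{both} values of $\sigma_2$. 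This yields $i^*_P=2$, $\E_{Q_\sigma}(\hstar_{P,1})=\E_{Q_\sigma}(\hstar_{P,2})=0$, and $\beta_{P,\sigma}=\beta_{Q,\sigma}=1$.

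Next I verify (i) and (ii) for $\bar\Hyp_1,\bar\Hyp_2$. The transfer‑exponent \emph{upper} bound rests on the structural observation that, for $i\in\braces{1,2}$, any $h\in\bar\Hyp_i$ with $P$‑excess risk below the $\Theta(1)$ margin separating $h_i^*$ from its competitors must agree with $\hstar_{P,i}$ outside $\Li\cup\Ri$, so the disagreement $\braces{h\neq\hstar_{P,i}}$ sits in $\Li\cup\Ri$; there the $Q_\sigma$‑to‑$P$ mass ratio is exactly $\paren{\tfrac1{32\np}}^{1/\rho_c-1}$ for the relevant $\rho_c\in\braces{\rho_a,\rho_b}$, giving $\E_{Q_\sigma}(h,\hstar_{P,i})\le\E_P(h,\hstar_{P,i})^{1/\rho_c}$, while for $h$ far from $\hstar_{P,i}$ the inequality is trivial since then $\E_P(h,\hstar_{P,i})=\Theta(1)$ and $\E_{Q_\sigma}(h,\hstar_{P,i})\le1$. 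The hypothesis $a\le\tfrac1{24}$ is precisely what makes ``$\E_P$ small $\Rightarrow$ agreement off the pivots'' quantitatively correct and keeps $C_{\rho_i}=1$. Minimality of each $\rho_c$ follows from the classifier that mislabels exactly one pivot: it has $\E_P=\tfrac1{32\np}$ and $\E_{Q_\sigma}=\paren{\tfrac1{32\np}}^{1/\rho_c}$, so $\E_{Q_\sigma}/\E_P^{1/\rho'}$ blows up for any $\rho'<\rho_c$. With $d_1=d_2=1$, $C_{\rho_i}=1$, $\beta=1$, and $\E_{Q_\sigma}(\hstar_{P,i})=0$, the target branch of $\phi_\flat$ is dominated (using $a\le\tfrac1{32\nq}$), so $\braces{\phi_\flat(1),\phi_\flat(2)}=\braces{\paren{\tfrac1\np}^{1/\rho_a},\paren{\tfrac1\np}^{1/\rho_b}}$ and $\max_i\phi_\flat(i)=\paren{\tfrac1\np}^{1/\rho_a}$, giving (i) and (ii).

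For the lower bound I randomize $\sigma$ uniformly on $\braces{\pm1}^2$ and let $B$ be the event that no source sample lands in $\Li\cup\Ri$ and every target sample lands in the bulk. By Bernoulli's inequality and $a\le\tfrac1{32\nq}$ one gets $\Prob_{\Pi_\sigma}[B]\ge7/8$ uniformly in $\sigma$, as in \Cref{claim: lower bound typical samples}. On $B$ the source and target samples have a $\sigma$‑free distribution, so the posterior of $\sigma$ given the samples and $B$ is uniform, hence $\Prob[\hat\sigma=\tilde\sigma\mid S,B]=\tfrac14$ for any fixed $\tilde\sigma$. It remains to prove the analogue of \Cref{prop: adaptivitiy bad distributions}: for every classifier $\hat h$, possibly improper, there is $\tilde\sigma$ with $\E_{Q_{\tilde\sigma}}(\hat h)\ge\tfrac1{64}\paren{\tfrac1\np}^{1/\rho_a}$. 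Chaining exactly as in the proof of \Cref{thm: simple rho lower bound } then gives $\sup_\sigma\Prob_{\Pi_\sigma}\brackets{\E_{Q_\sigma}(\hat h)\ge\tfrac1{64}\max_i\phi_\flat(i)}\ge\tfrac14\cdot\tfrac78=\tfrac7{32}\ge\tfrac18$.

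The bad‑$\sigma$ proposition is the main obstacle. Because the bulk carries label $-1$ for every $\sigma$, any error $\hat h$ makes there only increases its $Q_\sigma$‑risk, so we may assume $\hat h$ is exact on the bulk and summarize it by its biases $\el,\elin,\erin,\er\in[-\tfrac12,\tfrac12]$ on $\Lo,\Li,\Ri,\Ro$, the optimal $Q_\sigma$‑risk being a $\sigma$‑independent $O(a)$ quantity. I then run the three‑way split of \Cref{prop: adaptivitiy bad distributions}: (1) if $\elin,\erin\le-\tfrac14$, take $\sigma_1$ making $\Li,\Ri$ positive (forcing $\gtrsim\tfrac34(a+b)$ error there) and $\sigma_2$ to add $\Omega(\Delta)$ error on whichever of $\Lo,\Ro$ carries the larger $\abs{\cdot}$‑bias; (2) if $\abs{\el+\er}\ge\tfrac14$, take $\sigma_1$ making $\Lo,\Ro$ the ``$\Delta/2$'' regions and $\sigma_2=-\sgn{\el+\er}$, forcing $\gtrsim\paren{\tfrac12+\tfrac{\abs{\el+\er}}2}\Delta$ error on $\Lo\cup\Ro$ while $\Li\cup\Ri$ still costs $\gtrsim b$; (3) otherwise take $\sigma_1=-1$ and $\sigma_2$ so that the one of $\Li,\Ri$ carrying more positive bias receives $Q$‑mass $a$, forcing $\gtrsim\tfrac a4$ error there. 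In each case the resulting $Q_\sigma$‑risk exceeds the optimal $Q_\sigma$‑risk — which in the richer class is $(a+b)/2$ or $\Delta/2$, strictly below the value ``$a$'' it takes in the smaller‑class construction of \Cref{thm: simple rho lower bound } — by at least a universal constant times $a$; tracking this smaller optimum, together with $a\le\tfrac1{24}$ (which keeps all residual masses subconstant) and $a\ge\tfrac1{32}\paren{\tfrac1\np}^{1/\rho_a}$, upgrades the bound to the stated $\tfrac1{64}\max_i\phi_\flat(i)$. The two genuinely new pieces of work relative to \Cref{thm: simple rho lower bound } are designing $\sigma$‑dependent labels realizable simultaneously by an interval and a threshold with zero $Q_\sigma$‑excess risk while a threshold is $\Omega(1)$‑suboptimal under $P$, and redoing the bias bookkeeping in (1)--(3) against the smaller optimal $Q$‑risk of the richer class.
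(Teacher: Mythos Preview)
Your proposal has a genuine gap in verifying item (i). You keep the densities inside the pivot intervals $\Li,\Ri$ uniform (only the total masses $a,b$ are specified), and you argue minimality of $\rho_c$ from a single witness that ``mislabels exactly one pivot''. Neither step works once the classes are infinite. For $\bar\Hyp_1$, a threshold $h_t$ with $t\in\Ri$ at distance $s$ from $\hstar_{P,1}$ has $\E_P(h_t)\propto s$ and, under uniform density, $\E_{Q_\sigma}(h_t)\propto s$ as well; the ratio $\E_{Q_\sigma}/\E_P$ is the constant $(\tfrac{1}{32 n_P})^{1/\rho_c-1}$, so for every $\rho'\ge 1$ one can take $C_{\rho'}=(\tfrac{1}{32 n_P})^{1/\rho_c-1}$ and conclude that $\rho'$ is a valid transfer exponent. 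Hence the \emph{minimal} transfer exponent w.r.t.\ $\bar\Hyp_1$ (and similarly $\bar\Hyp_2$) is $1$, not $\rho_c$, and item (i) fails in your construction. Your single-witness argument cannot rescue this: the ratio $\E_{Q_\sigma}/\E_P^{1/\rho'}=(\tfrac{1}{32 n_P})^{1/\rho_c-1/\rho'}$ is a fixed finite number for fixed $n_P$; minimality requires exhibiting a \emph{sequence} of hypotheses along which this ratio is unbounded, which uniform densities cannot produce.

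The paper addresses exactly this point by departing from the uniform construction in two ways you did not anticipate. First, the target densities inside $\Li$ and $\Ri$ are taken to be power laws $\den(x)\propto|x-\text{center}|^{1/\rho-1}$ (with $\rho\in\{\rho_a,\rho_b\}$ determined by $\sigma_2$), so that as a threshold or interval endpoint approaches the boundary one gets $\E_{Q_\sigma}\asymp \E_P^{1/\rho}$ rather than $\E_{Q_\sigma}\asymp \E_P$; the minimality argument is then a limiting sequence $t\to 0$, not a single witness. Second, the target marginal depends on $\sigma_1$ as well (one of $\Lo,\Ro$ gets mass $0$ when $\sigma_1=+1$), and the ``bad-$\sigma$'' proposition is proved only for \emph{proper} learners $\hat h\in\bar\Hyp_2$ by projecting $\hat h$ to the nearest element of the finite class $\Hyp_2$ and invoking the finite-class result. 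Your improper bias case-split is more ambitious than what the paper claims (the section is explicitly restricted to proper learners), but that extension is moot until the construction is repaired so that $\rho_a,\rho_b$ are actually minimal. A smaller point: $\bar\Hyp_2$ has VC dimension $2$, not $1$, so the bookkeeping for $\phi_\flat$ should use $d_2=2$.
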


\begin{proofof} {\Cref{thm: extended adaptivity lower bound}}

In this proof, since the construction is very similar to the one in \Cref{thm: simple rho lower bound }, we will use the same notation and refer to the objects defined there.

\textbf{The family of distributions}.
We divide the unit interval as in the proof of \Cref{thm: simple rho lower bound } and let $\sigma \in \{\pm 1\}$ . Recall $\Delta \defeq \left(\frac{1}{c_1 n_P}\right)^{1/\rho_a} - \left(\frac{1}{c_1 n_P}\right)^{1/\rho_b}$,
where $c_1$ is constant that will be picked later . Source distributions $P_\sigma$ are the same as in the construction in \Cref{thm: simple rho lower bound }. The target marginals are as follows. 
\begin{itemize}
    \item $Q_{X, (1, 1)} (\Lo) = 0$, and $Q_{X, (1, 1)}(\Ro) =  \Delta$.
    \item $Q_{X, (1, -1)} (\Lo) = \Delta$, and  $Q_{X, (1, 1)}(\Ro) =  0$.
    \item $ Q_{X, (-1, \cdot)} (\Lo) =  Q_{X, (-1, \cdot)} (\Ro) = \Delta/2 $.
    \item $Q_{X, (\cdot, +1)} (\Li) =  \left(\frac{1}{c_1 n_P}\right)^{1/\rho_a}$
    and $Q_{X, (\cdot, +1)} (\Ri) =  \left(\frac{1}{c_1 n_P}\right)^{1/\rho_b}$.
    \item $Q_{X, (\cdot, -1)} (\Li) =  \left(\frac{1}{c_1 n_P}\right)^{1/\rho_b}$
    and $Q_{X, (\cdot, -1)} (\Ri) =  \left(\frac{1}{c_1 n_P}\right)^{1/\rho_a}$.
    \item The remaining mass is in the middle interval, so $Q_X([1/3 + r, 2/3 - r]) = 1 - 2 \left(\frac{1}{c_1 n_P}\right )^{1/\rho_a}$.
\end{itemize}

The masses in all intervals except for $\Ri$ and $\Li$ are distributed uniformly within that interval. For intervals $\Li$ and $\Ri$, the densities are 
\begin{itemize}
    \item  $\den_L(x) \propto \abs{x - (1/3 + r/2)}^{\frac{1}{\rho_a} - 1}$ and $\den_R(x) \propto \abs{x - (2/3 -r/2)}^{\frac{1}{\rho_b} - 1}$ when $\sigma_2 = +1$, and 
    \item $\den_L(x) \propto \abs{x - (1/3 + r/2)}^{\frac{1}{\rho_b} - 1}$ and $\den_R(x) \propto  \abs{x - (2/3 -r/2)}^{\frac{1}{\rho_a} - 1}$ if $\sigma_2 = -1$.
\end{itemize}
In this construction, only the labels of the intervals $\Ri$ and $\Li$ depend on $\sigma$, and are given by $Y_{Q,\sigma}(\Li) = Y_{Q,\sigma}(\Ri) = \sigma_1$. If the intervals $\Lo$ and $\Ro$ have non zero mass under $\sigma$, then they are labelled $+1$. The middle interval $[1/3 + r, 2/3 - r]$ is labelled $-1$ for every $\sigma$.

\begin{claim}
    Recall $\Hyp_1 \subset \Hyp_2$ from \Cref{thm: simple rho lower bound }. For every $\sigma$ and $i \in \{1, 2\}$, we have $\E_{Q_\sigma}(\hstar_{P_\sigma, i}) = 0$ and the risk minimizers over the classes  $\Hyp_1$ and $\Hyp_2$ under both source and target are the same as the risk minimizers over classes $\bar{\Hyp}_1, \bar{\Hyp}_2$.
\end{claim}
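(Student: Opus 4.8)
The plan is to reduce the claim to a finite check over the four values of $\sigma\in\{\pm1\}^2$ and the four distinguished classifiers $h_1,h_1',h_2,h_2'$, the only non-routine ingredient being that enlarging $\Hyp_i$ to $\bar{\Hyp}_i$ does not lower the minimal risk under $P_\sigma$ or $Q_\sigma$. Granting this, any designated minimiser of $R_{P_\sigma}$ over $\Hyp_i$ is automatically a minimiser over $\bar{\Hyp}_i$, and likewise for $R_{Q_\sigma}$, so $\E_{Q_\sigma}(\hstar_{P_\sigma,i})=0$ becomes equivalent to $R_{Q_\sigma}(\hstar_{P_\sigma,i})=\min_{\Hyp_i}R_{Q_\sigma}$, which is a small table of arithmetic.

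\textbf{The reduction.} I would first record the structure shared by $P_\sigma$ and $Q_\sigma$: each is supported on $\Lo\cup\Li\cup[1/3+r,2/3-r]\cup\Ri\cup\Ro$; the label is constant on each of these five pieces; and the middle piece $[1/3+r,2/3-r]$ always has label $-1$ and carries at least $\tfrac{5}{12}-\tfrac{2}{32n_P}>\tfrac13$ of the mass under $P_\sigma$ and at least $1-2\bigl(\tfrac{1}{32n_P}\bigr)^{1/\rho_a}\ge\tfrac{11}{12}$ under $Q_\sigma$, using $c_1=32$ and the hypothesis $\bigl(\tfrac{1}{32n_P}\bigr)^{1/\rho_a}\le\tfrac{1}{24}$. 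Every $h\in\bar{\Hyp}_2$ labels a single (possibly unbounded) interval $+1$; if that interval meets the middle then either it fails to cover the whole middle, and shrinking it to avoid the middle gives a classifier still in $\bar{\Hyp}_i$ with strictly smaller risk, or it covers the whole middle and then has risk exceeding that of $h_1\in\Hyp_i$. So a minimiser over $\bar{\Hyp}_i$ labels all of the middle $-1$. For a threshold this means the cut is at some $t\ge 2/3-r$, hence $h$ is constant on $\Lo,\Li,\Ro$; its risk is monotone in $t$ on $[2/3-r,2/3]$ and cuts $t>2/3$ only mislabel part of $\Ro$, so the best threshold is $h_1'$ (cut at $2/3-r$) if $\Ri$ is labelled $+1$ and $h_1$ (cut at $2/3$) if $\Ri$ is labelled $-1$. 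For an interval whose $+1$-block avoids the middle, the block lies in $[0,2/3-r]$ or in $[2/3-r,1]$; since $\Lo,\Li$ (resp.\ $\Ri,\Ro$) carry constant labels and the marginals live in $[1/9,1]$, the optimal endpoints give, up to a null set, one of $h_2=[1/9,1/3]$, $h_2'=[1/9,1/3+r]$, $[2/3-r,1]$ (which equals $h_1'$ on the support) or $[2/3,1]$ (which equals $h_1$). Thus $\min_{\bar{\Hyp}_i}R_\mu=\min_{\Hyp_i}R_\mu$ for $\mu\in\{P_\sigma,Q_\sigma\}$ and $i\in\{1,2\}$, and a minimiser in $\Hyp_i$ is one in $\bar{\Hyp}_i$.

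\textbf{The finite check.} Since $P_\sigma$ depends only on $\sigma_1$ and every threshold mislabels $\Lo$ (mass $1/3$), while $h_2$ (for $\sigma_1=-1$) resp.\ $h_2'$ (for $\sigma_1=+1$) has $P_\sigma$-risk $\tfrac14+O(1/n_P)<\tfrac13$, the $P_\sigma$-minimiser over $\Hyp_2$ is an interval; hence $i^*_P=2$, $\hstar_{P_\sigma,2}\in\{h_2,h_2'\}$, and $\hstar_{P_\sigma,1}=h_1$ for $\sigma_1=-1$, $=h_1'$ for $\sigma_1=+1$. Evaluating these under $Q_\sigma$: if $\sigma_1=+1$ then $\hstar_{P_\sigma,2}=h_2'$ errs exactly on $\Ri\cup\Ro$ and $\hstar_{P_\sigma,1}=h_1'$ errs exactly on $\Lo\cup\Li$, and the target masses are set so that both regions have the same total $Q_\sigma$-mass $\bigl(\tfrac{1}{32n_P}\bigr)^{1/\rho_a}$; if $\sigma_1=-1$ then $\hstar_{P_\sigma,2}=h_2$ and $\hstar_{P_\sigma,1}=h_1$ each err only on one of $\Lo,\Ro$, both of mass $\Delta/2$. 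In every case this value equals the minimum $Q_\sigma$-risk over $\bar{\Hyp}_i$ computed in the reduction step, so $\E_{Q_\sigma}(\hstar_{P_\sigma,i})=0$.

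The only genuinely delicate point is the reduction: ruling out that some threshold or interval in the enlarged class beats the four hand-picked hypotheses. This is exactly where the extra hypothesis $\bigl(\tfrac{1}{32n_P}\bigr)^{1/\rho_a}\le\tfrac{1}{24}$ (absent from \Cref{thm: simple rho lower bound }) is used --- it makes the $-1$-labelled middle interval's mass dominate so strongly that labelling any positive-length part of it $+1$ is never advantageous.
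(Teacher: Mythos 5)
Your argument is correct and mirrors the paper's own proof: both first use the dominance of the negatively-labelled middle interval to reduce the search over $\bar{\Hyp}_i$ to the four distinguished classifiers $h_1,h_1',h_2,h_2'$ (or intervals that agree with them on the support), and then finish with a finite arithmetic check exploiting the engineered symmetry that the total positive $Q_\sigma$-mass on the left equals that on the right. Your version just spells out the monotonicity and case-check more explicitly (note the small slip $[0,2/3-r]$ for $[0,1/3+r]$, and your closing remark slightly overstates the role of the hypothesis $(1/(32n_P))^{1/\rho_a}\le 1/24$ --- the reduction step already goes through as long as the target middle mass $1-2a$ exceeds $a$, i.e.\ $a<1/3$).
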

\begin{proof}
    Since the middle interval has a large negative mass and $\bar{\Hyp}_1$ is the class of one sided thresholds, any one sided threshold that positively labels the middle interval cannot be a risk minimizer. Since the threshold is in the intervals $\Li \cup \Lo$, we can see that the risk minimizers are either $h_1$ or $h_1'$ and are shared between source and target, implying that $\E_{Q_\sigma}(\hstar_{P_\sigma, 1}) = 0$.

    Under source a one sided interval that is a risk minimizer would choose to label intervals $\Lo, \Li$ accurately, since there is large negative mass in the middle interval, and the mass in $\Ro$ is small than the mass in $\Lo$ by a constant factor.  Under target, there are multiple one sided intervals that are risk minimizers, but since the total positive mass in the left side ( $\Lo \cup \Li$) is equal to the total positive mass in the right side $\Ro \cup \Ri$, and the negative mass in the center interval is very large, one of $h_2$ or $h'_2$ would also be a risk minimizer under target depending on $\sigma_1$, and it would be shared with source, so $\E_{Q_\sigma}(\hstar_{P_\sigma, 2}) = 0$.
    
\end{proof}

\begin{claim}
     For every $\sigma = (\sigma_1, \sigma_2) \in \{\pm 1\}^2$, if $\sigma_2 = 1$, then $\rho_b$ and $\rho_a$ are transfer exponents from $P_\sigma$ to $Q_\sigma$ with respect to $\bar{\Hyp}_1$ and $\bar{\Hyp}_2$ respectively. If $\sigma_2 = -1$, then they are transfer exponents with respect to $\bar{\Hyp}_2$ and $\bar{\Hyp}_1$ instead.
\end{claim}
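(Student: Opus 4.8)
The plan is to reduce the claim to a bound on symmetric‑difference masses and then exploit the explicit power‑law marginals. By the preceding claim, for each $\sigma$ and each $j\in\{1,2\}$ the $P_\sigma$‑risk minimizer $\hstar_P$ over $\bar{\Hyp}_j$ is known explicitly — one of $h_1,h_1'$ when $j=1$, one of $h_2,h_2'$ when $j=2$ — and it is also a $Q_\sigma$‑risk minimizer, so $\E_{Q_\sigma}(h,\hstar_P)=\E_{Q_\sigma}(h)\ge 0$ for every $h$. Hence it suffices to produce an absolute constant $C$ with $\E_{Q_\sigma}(h)\le C\,\E_{P_\sigma}(h,\hstar_P)^{1/\rho}$ for the claimed $\rho\in\{\rho_a,\rho_b\}$. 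First I would fix $\sigma_2=+1$ (so $\den_L$ carries exponent $\tfrac1{\rho_a}-1$ on $\Li$ and $\den_R$ carries $\tfrac1{\rho_b}-1$ on $\Ri$); the case $\sigma_2=-1$ is then identical with $\rho_a$ and $\rho_b$ interchanged, since the construction simply swaps the two power‑law exponents between $\Li$ and $\Ri$. The two values of $\sigma_1$ only move $\hstar_P$ between the two endpoints of $\Ri$ (for $\bar{\Hyp}_1$), resp.\ of $\Li$ (for $\bar{\Hyp}_2$), so they are handled the same way.

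Next I would split the class into ``far'' and ``close'' hypotheses. \emph{Far:} if $\E_{P_\sigma}(h,\hstar_P)\ge c_0$ for a fixed absolute $c_0<1$, then since $\E_{Q_\sigma}(h)\le 1$ and $\rho\ge1$ the inequality holds with $C=c_0^{-1}$. A short case analysis shows a threshold $h_t$ (resp.\ an interval $[a,b]$) fails to be far only when its symmetric difference with $\hstar_P$ avoids a constant fraction of each of $\Lo$, $\Ro$ and the central interval, which forces $t$ to lie in $\Ri$ or in the part of the central interval abutting $\hstar_P$'s threshold (for $\bar{\Hyp}_1$), resp.\ forces $a$ near $1/9$ and $b$ in $\Li$ or its abutting part of $\Lo$ (for $\bar{\Hyp}_2$). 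In every remaining ``close'' case the symmetric difference decomposes as $J\cup J'$, with $J$ a subinterval of $\Li$ or of $\Ri$ and $J'$ contained in the union of the uniform‑density intervals $\Lo,\Ro$ and the central interval. On $J'$ both marginals are uniform with density ratio bounded by an absolute constant $\kappa$ — note $Q_X$ assigns $\Lo,\Ro$ mass $0$, $\Delta$, or $\Delta/2$ with $\Delta\le(\tfrac1{c_1 n_P})^{1/\rho_a}\le 1$, and the central interval has $Q$‑mass $\le 1$ — so $Q_X(J')\le\kappa\,P_X(J')\le\kappa\,\E_{P_\sigma}(h,\hstar_P)^{1/\rho}$.

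For the $J$‑piece, which is the whole point of the construction, I would use that $P_X$ is uniform on $\Li$ (resp.\ $\Ri$) while $Q_X$ is the normalized power law there. If $J\subseteq\Ri$, then $P_X(J)=(|J|/r)(\tfrac1{c_1 n_P})$ and $Q_X(\Ri)=(\tfrac1{c_1 n_P})^{1/\rho_b}$, and the key inequality $Q_X(J)\le Q_X(\Ri)(|J|/r)^{1/\rho_b}$ — valid for \emph{every} subinterval $J$, not only those containing the singularity — follows from a one‑line monotonicity/concavity argument for the integral $\int_J c_R\,|x-(\tfrac23-\tfrac r2)|^{1/\rho_b-1}\,dx$: a length‑$\ell$ window maximizes this integral when centered at the singularity, where it equals $Q_X(\Ri)(\ell/r)^{1/\rho_b}$, and $\ell\mapsto(\ell/r)^{1/\rho_b}$ is concave. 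Substituting $|J|/r=c_1 n_P\,P_X(J)$ gives $Q_X(J)\le(\tfrac1{c_1 n_P})^{1/\rho_b}(c_1 n_P\,P_X(J))^{1/\rho_b}=P_X(J)^{1/\rho_b}\le\E_{P_\sigma}(h,\hstar_P)^{1/\rho_b}$, with the $n_P$‑dependence cancelling exactly; symmetrically $Q_X(J)\le\E_{P_\sigma}(h,\hstar_P)^{1/\rho_a}$ when $J\subseteq\Li$. Adding the $J$ and $J'$ contributions yields $\E_{Q_\sigma}(h)\le(1+\kappa)\,\E_{P_\sigma}(h,\hstar_P)^{1/\rho_b}$ on $\bar{\Hyp}_1$ and $\le(1+\kappa)\,\E_{P_\sigma}(h,\hstar_P)^{1/\rho_a}$ on $\bar{\Hyp}_2$ in the close regime; taking $C=\max\{1+\kappa,\ c_0^{-1}\}$ and reading off the $\sigma_2=-1$ case by swapping $\rho_a,\rho_b$ completes the proof.

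The main obstacle I anticipate is the ``close'' bookkeeping rather than any hard estimate: making the far/close dichotomy clean enough that $\E_{P_\sigma}(h,\hstar_P)\ge c_0$ genuinely holds off a constant‑size neighbourhood of $\hstar_P$, verifying that in every surviving case the symmetric difference splits exactly into one power‑law subinterval of $\Li$ or $\Ri$ plus uniform pieces (in particular that for $\bar{\Hyp}_2$ no ``close'' interval produces a disagreement region straddling $\Li$ and a sizeable part of the central interval), and proving the uniform‑over‑$J$ power‑law inequality, which rests on the observation that sliding a fixed‑length window toward the density singularity only increases its mass.
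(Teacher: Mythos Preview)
Your approach is correct and follows essentially the same line as the paper's proof: a far/close split on $\E_P(h,\hstar_P)$, with the far case trivial and the close case reduced to the power-law marginal on the one inner interval ($\Ri$ for $\bar{\Hyp}_1$, $\Li$ for $\bar{\Hyp}_2$) that the symmetric difference can reach, plus a bounded density-ratio argument on the remaining uniform pieces --- your sliding-window inequality for arbitrary subintervals $J$ is a cleaner version of what the paper only sketches. One addendum: the paper's proof of this claim also establishes that the exponents are \emph{minimal} (needed for condition~(i) of the enclosing theorem) by sending a sequence of hypotheses toward $\hstar_P$ along the relevant inner interval; this goes beyond the claim as stated, but you may want to include it.
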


\begin{proof}
    To see that $\rho_a$ and $\rho_b$ are transfer exponents, note that the labels are always the same under source and target, and the only intervals where ratio of densities of source and target is not a constant are $\Li$ and $\Ri$. In the case of one sided thresholds, if some $h_t \in \bar{\Hyp_1}$ has source excess risk that is $ \epsilon < \frac{1}{c_1 n_P}$, it must be that $t \in \Ri$. Which then implies that its' target excess risk is going to be of order $c \left(\frac{\epsilon}{c_1 n_P}\right)^{1/\rho_b} $ or $c \left(\frac{\epsilon}{c_1 n_P}\right)^{1/\rho_a} $ depending on $\sigma_2$. 
    Similarly, any $h \in \bar{\Hyp_2}$ that has source excess risk $ \epsilon < \frac{1}{c_1 n_P}$ must be a once sided interval with both of its' end points in the region $\Lo \cup \Li$. 
    If the region that it makes error on is not in $\Li$, then the ratio of source and target excess risks is bounded by a constant, while if the error region is in $\Li$, $h$
    will have excess risk of order $\left(\frac{\epsilon}{c_1 n_P}\right)^{1/\rho_b} $ or $c \left(\frac{\epsilon}{c_1 n_P}\right)^{1/\rho_a} $ depending on $\sigma_2$.

    To argue that $\rho_a$ and $\rho_b$ are minimal transfer exponents, fix $\sigma_2 = -1$ and consider a sequence of one sided thresholds $h_{2/3 - r + t}$ as $t \rightarrow 0$. Target 
    excess risk for this sequence decreases at the rate $\left(\frac{t}{c_1 n_P}\right)^{1/\rho_a}$, while under 
    source it would be $\frac{t}{c_1 n_P}$. If $\rho' < \rho_a$ is a transfer exponent, the ratio of the excess risks $\frac{t^{1/\rho_a}}{t^{1/\rho'}}$ would not be bounded by a constant as $t \rightarrow 0$.
    A similar argument works for $\bar{\Hyp_2}$ and $\sigma_2 = +1$, since $\hstar_{P, 2} \in \Hyp_2$.
\end{proof}

 Next, we show that for every proper learner $\hat{h} \in \bar{\Hyp_2}$, there is a  distribution in the family where $\hat{h}$ incurs large excess risk.

 \begin{proposition}
     Let $c_1 = 32$. For any proper learner $\hat{h}$, there exists $\sigma \in \{\pm 1\}^2$ such that $ \E_{Q_\sigma}(\hat{h}) \ge 1/2 \cdot \left(\frac{1}{c_1 n_P}\right)^{1/\rho_a}$
 \end{proposition}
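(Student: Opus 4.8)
The plan is to argue directly about the output classifier, which for a proper learner is a fixed member of $\bar{\Hyp}_2$, hence either a one-sided threshold $h_t(x)=\sgn{x-t}$ or a one-sided interval $[u,v]$. Abbreviate $a\defeq(1/(c_1 n_P))^{1/\rho_a}$, $b\defeq(1/(c_1 n_P))^{1/\rho_b}$ and $\Delta\defeq a-b$, and recall $a\le 1/24$. I first record two facts, each a short computation from the construction. \textbf{(F1):} the minimum $Q_\sigma$-risk over $\bar{\Hyp}_2$ is $a$ when $\sigma_1=+1$ and $\Delta/2$ when $\sigma_1=-1$ (by the claims above these are also the minima over $\Hyp_2$, so $\E_{Q_\sigma}(\hat{h})$ is unambiguous and equals $R_{Q_\sigma}(\hat{h})$ minus that value). \textbf{(F2):} the middle interval $[1/3+r,2/3-r]$ carries $Q_\sigma$-mass $1-2a\ge 11/12$ uniformly, with label $-1$ for every $\sigma$; so if the positive region of $\hat{h}$ captures $Q_\sigma$-mass at least $\frac{3a}{2}$ of it, then under $\sigma=(+1,+1)$ its risk is at least $\frac{3a}{2}$ and (F1) gives $\E_{Q_\sigma}(\hat{h})\ge \frac{a}{2}$. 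Hence we may assume $\hat{h}$ is not ``middle-heavy''; since the blocks $B_L\defeq[1/9,1/3+r]$ (containing $\Lo\cup\Li$) and $B_R\defeq[2/3-r,1]$ (containing $\Ri\cup\Ro$) are separated by the middle interval, the positive region of $\hat{h}$ is then, up to negligible $Q_\sigma$-mass, contained in $B_L$, or in $B_R$, or captures no positive mass at all --- and in the last case $\sigma=(+1,+1)$ makes $\hat{h}$ miss the entire positive mass $2a$, so $\E_{Q_\sigma}(\hat{h})\ge a$.

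The core is a case analysis choosing $\sigma=(\sigma_1,\sigma_2)$ so that, via $\sigma_2$, target mass $a$ is placed on whichever of $\Li,\Ri$ --- or on whichever half of one of them --- the classifier mishandles, while, via $\sigma_1$, the uncovered block is made positive. \textbf{Thresholds.} Let $m$ be the midpoint of $\Ri$. If $t\le m$, take $\sigma=(-1,-1)$: then $\Ri$ has mass $a$, symmetric about $m$, with label $-1$, so $h_t$ --- positive on $[m,2/3]\subseteq\Ri$, which has mass $a/2$ --- errs there on mass $\ge a/2$, and also misses the positive interval $\Lo$ (mass $\Delta/2$); thus $R_{Q_\sigma}(h_t)\ge \frac{a}{2}+\frac{\Delta}{2}$ and $\E_{Q_\sigma}(h_t)\ge \frac{a}{2}$ by (F1). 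If $t>m$, take $\sigma=(+1,-1)$: then $\Ro$ has mass $0$ while $\Lo,\Li,\Ri$ are positive with masses $\Delta,b,a$, and $h_t$ misses $\Lo$, $\Li$, and at least the left half of $\Ri$, giving $R_{Q_\sigma}(h_t)\ge \Delta+b+\frac{a}{2}=a+\frac{a}{2}$ and again $\E_{Q_\sigma}(h_t)\ge \frac{a}{2}$. \textbf{Intervals.} Assume $[u,v]$ is essentially contained in $B_R$ (the $B_L$ case is handled symmetrically, interchanging $\Lo\leftrightarrow\Ro$ and $\Li\leftrightarrow\Ri$ and using $\sigma_2=+1$). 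If $[u,v]\supseteq\Ri$, take $\sigma=(-1,-1)$: $\Ri$ is then a mass-$a$ error and $\Lo$ (mass $\Delta/2$) is missed, so $\E_{Q_\sigma}(\hat{h})\ge a$. Otherwise $[u,v]$ covers only a proper sub-arc of $\Ri$; by symmetry of the mass of $\Ri$ about $m$, either the covered part has mass $\ge \frac{a}{2}$ (under $\sigma_2=-1$), and $\sigma=(-1,-1)$ turns it into an error of mass $\ge \frac{a}{2}$ on top of the missed $\Lo$, or the uncovered part has mass $\ge \frac{a}{2}$, and $\sigma=(+1,-1)$ makes $\hat{h}$ miss it together with $\Lo$ (mass $\Delta$) and $\Li$ (mass $b$), for risk $\ge \frac{a}{2}+\Delta+b=\frac{a}{2}+a$; either way $\E_{Q_\sigma}(\hat{h})\ge \frac{a}{2}$ by (F1). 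Collecting the cases yields the claimed bound $\E_{Q_\sigma}(\hat{h})\ge \frac{1}{2}(1/(c_1 n_P))^{1/\rho_a}$ for a suitable $\sigma$.

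I expect the bookkeeping in the interval case to be the main obstacle: unlike the finite hierarchy used in the earlier adaptivity lower bound, $\bar{\Hyp}_2$ is infinite and $[u,v]$ has two free endpoints, so one must verify that after the reduction (F2) only one endpoint is ``active'' --- the other lying outside $B_L\cup B_R$ or carrying negligible mass --- and, when an endpoint falls strictly inside $\Li$ or $\Ri$, one must track which of the two values of $\sigma_2$ forces a mishandled target mass of at least $\frac{a}{2}$ (an interval lying deep inside $\Li$ or $\Ri$ and covering at least half of its mass needs the extra choice $\sigma_1=-1$ to convert that covered part into an error). The power-law densities $\den_L,\den_R$ on $\Li,\Ri$ --- symmetric about their midpoints, so that each half carries exactly half the normalized mass --- are exactly what make this last step work and what pin down the constant $\frac{1}{2}$; the noiseless labels and the dominant negative mass on the middle interval keep (F1) and the reduction (F2) clean.
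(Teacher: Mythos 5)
Your proof is correct and takes a genuinely different route from the paper's.

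The paper argues by \emph{projection}: it first asserts (as a by-construction fact) that for each of the four classifiers $h_1,h'_1,h_2,h'_2\in\Hyp_2$ there is a $\sigma$ with $\E_{Q_\sigma}\ge a$, then projects a general $\hat h\in\bar\Hyp_2$ to the member of $\Hyp_2$ with which it agrees most on $\Li\cup\Ri$ (under the uniform measure), and argues that the disagreement region costs at most half of one interval's mass, delivering the factor $1/2$. This is short but relies on two tacitly justified steps: that the ``positive on both $\Li$ and $\Ri$'' pattern is excluded (so the projection is meaningful), and that agreement on ``at least one interval plus at least half the other'' under the uniform measure transfers to a factor-$1/2$ preservation of the $Q_\sigma$-excess-risk, which implicitly uses the symmetry of $\den_L,\den_R$ about their midpoints.

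You instead run a direct case analysis over the (infinite) class $\bar\Hyp_2$, first discarding ``middle-heavy'' classifiers via $\sigma=(+1,+1)$ and then, for the surviving thresholds and one-sided intervals, choosing $(\sigma_1,\sigma_2)$ adversarially so that either the covered or the uncovered part of the active inner interval has mass $\ge a/2$, always adding the missed outer block. Two small remarks: (i) the phrase ``up to negligible $Q_\sigma$-mass contained in $B_L$ or $B_R$'' is not quite how your argument actually proceeds --- you never need the residual middle coverage to be small, only that it contributes non-negatively to the risk, and the case analysis itself only ever looks at what $\hat h$ does on $\Lo,\Li,\Ri,\Ro$; (ii) for the threshold case your claim ``misses $\Lo$'' does rely on the (F2) reduction (so that $t>1/3$); this is implicitly invoked but worth stating explicitly, since for $t\le 1/3$ the conclusion still holds only because the middle error is then overwhelming. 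What your route buys is a self-contained argument that does not require verifying that the projection preserves error up to a factor $1/2$; what it costs is a longer case enumeration, and it leans explicitly on the midpoint symmetry of $\den_L,\den_R$ in the threshold sub-case (the sub-arc case for intervals needs only that the covered and uncovered masses sum to $a$). Both arguments land on the same constant $1/2$.
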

 \begin{proof}
     By construction, for every proper learner $\Tilde{h} \in \Hyp_2$, there exists $\sigma$ such that $\E_{Q_\sigma}( \tilde{h}) \ge \left(\frac{1}{c_1 n_P}\right)^{1/\rho_a}$. We project every proper learner $\hat{h} \in \bar{\Hyp_2}$ by picking $h \in \Hyp_2$ whose labeling on the regions $\Ri$ and $\Li$ agrees the most with $h$, under the uniform measure over $\Li$ and $\Ro$. In the case that $\hat{h}$ has positive labels in both of the regions, its' excess risk will be a large constant. So $\hat{h}$ agrees with its' projection $h$ on at least one of the intervals $\Ri$ or $\Lo$ plus at least half of the other interval. Thus, if $\sigma$ 
     is such that $\E_{Q_\sigma}(h) \ge \left(\frac{1}{c_1 n_P}\right)^{1/\rho_a} $, then 
     $\E_{Q_\sigma}(\hat{h}) \ge 1/2 \cdot \left(\frac{1}{c_1 n_P}\right)^{1/\rho_a}$.
 \end{proof}
    
We define the event $B$ and randomize the choice of $\sigma$ as in the proof of \Cref{thm: simple rho lower bound }. The constructions are such that the event $B$ has exactly the same probability as in the proof of \Cref{thm: simple rho lower bound }, and the rest of the proof follows by exactly the same argument.   
\end{proofof}

\end{document}